\definecolor{darkblue}{rgb}{0.0,0.0,0.2}
\pgfplotsset{compat=1.18}
\definecolor{darkgreen}{rgb}{0.0, 0.5, 0.0}
\pgfplotsset{every axis/.append style={
                    axis x line=bottom,    %
                    axis y line=middle,    %
                    axis line style={<->,color=black}, %
                    ylabel near ticks,
                    xlabel near ticks,
                    style={thick}
            }}
\newcommand{\inprod}[2]{\left\langle #1, #2 \right\rangle}
\DeclareMathOperator*{\argmin}{arg\,min}
\DeclareMathOperator*{\arginf}{arg\,inf}
\DeclareMathOperator*{\rank}{rank}
\DeclareMathOperator*{\diam}{diam}
\newcommand{\R}{\mathcal{R}}
\newcommand{\Y}{\mathcal{Y}}
\newcommand{\prop}[1]{\mathrm{prop}[#1]}
\newcommand{\reals}{\mathbb{R}}
\newcommand{\relint}{\mathrm{relint}}
\newcommand{\sign}{\mathrm{sign}}
\newcommand{\Lcusp}{L_{\text{cusp}}}
\newcommand{\Lce}{L_{\text{CE}}}
\newcommand{\labs}{\ell_{\text{abs}}}
\newtheorem{theorem}{Theorem}
\newtheorem{lemma}{Lemma}
\newtheorem{corollary}{Corollary}
\newtheorem{definition}{Definition}
\newtheorem{example}{Example}
\newtheorem{proposition}{Proposition}
\newtheorem{assumption}{Assumption}
\definecolor{lightlavender}{RGB}{250, 246, 255}
\newcommand{\Comments}{0}
\let\todo\relax %
\newcommand{\mynote}[2]{\ifnum\Comments=1\textcolor{#1}{#2}\fi}
\newcommand{\mytodo}[2]{\ifnum\Comments=1%
\todo[linecolor=#1!80!black,backgroundcolor=#1,bordercolor=#1!80!black]{\tiny #2}\fi}
\begin{document}
\title{Consistency Conditions for Differentiable Surrogate Losses}

\author{
  Drona Khurana \\
  University of Colorado Boulder \\ \texttt{drona.khurana@colorado.edu} \\
  \And
  Anish Thilagar \\
  University of Colorado Boulder \\
  \texttt{anish@colorado.edu} \\
  \And
  Dhamma Kimpara \\
  University of Colorado Boulder \\
  \texttt{dhamma.kimpara@colorado.edu} \\
  \And
  Rafael Frongillo \\
  University of Colorado Boulder \\
  \texttt{raf@colorado.edu} \\
}

\maketitle

\begin{abstract}%
    The statistical consistency of surrogate losses for discrete prediction tasks is often checked via the condition of calibration.
    However, directly verifying calibration can be arduous.
    Recent work shows that for polyhedral surrogates, a less arduous condition, indirect elicitation (IE), is still equivalent to calibration. 
    We give the first results of this type for non-polyhedral surrogates, specifically the class of convex differentiable losses.
    We first prove that under mild conditions, IE and calibration are equivalent for one-dimensional losses in this class. 
    We construct a counter-example that shows that this equivalence fails in higher dimensions. This motivates the introduction of strong IE, a strengthened form of IE that is equally easy to verify. 
    We establish that strong IE implies calibration for differentiable surrogates and is both necessary and sufficient for strongly convex, differentiable surrogates.
    Finally, we apply these results to a range of problems to demonstrate the power of IE and strong IE for designing and analyzing consistent differentiable surrogates.

\end{abstract}

\section{Introduction}

In supervised learning problems, the goal of the learner is to output a model that accurately predicts labels on unseen feature vectors.
These problems are specified by \emph{target losses}, metrics intended to reflect model error. 
Natural choices for target losses arise in discrete prediction tasks like classification, ranking, and structured prediction.
As minimizing discrete target losses directly is generally NP-hard, a convex \emph{surrogate loss} is typically used instead.
A link function maps surrogate reports (predictions) to target reports.
Beyond ease of optimization, the surrogate and link must be \emph{statistically consistent} with respect to the target loss, meaning that minimizing the surrogate should closely approximate minimizing the target given sufficient training data. 
In the finite-outcome setting, consistency turns out to be equivalent to a simpler condition called \emph{calibration} \citep{bartlett2006convexity, tewari2007consistency, ramaswamy2016convex}. 
In particular, calibration has been central to the design of new consistent surrogates, serving as the key condition which must be satisfied.%

While simpler than consistency, directly verifying calibration is often cumbersome. 
In particular, calibration requires that \emph{all sequences of reports} converging to the surrogate minimizers (i.e., minimizers of the expected surrogate loss), eventually link to target minimizers (Figure~\ref{fig:CalvsIE}). 
This complexity of verifying calibration in turn impedes the design of consistent surrogates. %
\emph{What easily verifiable conditions still imply calibration for important classes of surrogate losses?}
One promising candidate is \emph{indirect elicitation} (IE), which only requires that surrogate minimizers be linked to target minimizers (Figure \ref{fig:CalvsIE}).
\citet{finocchiaro2019embedding} established an equivalence between calibration and IE for polyhedral surrogates, which paved the way for the design of novel, consistent surrogates for several open target losses of interest \citep{wang2020weston, thilagar2022consistent,finocchiaro2022structured}. 
Whether or not this equivalence extends to other classes of surrogates has remained open.
More generally, beyond the polyhedral case, we still lack simpler conditions for calibration that support the design of new surrogates.
We give the first such results---IE-like conditions which are easier to verify than calibration---for the broad and practically relevant class of convex, differentiable losses.
We then demonstrate the power of these conditions to streamline the design process.

\begin{figure}[!t]
    \begin{center}
        \includegraphics[width=1.0\textwidth]{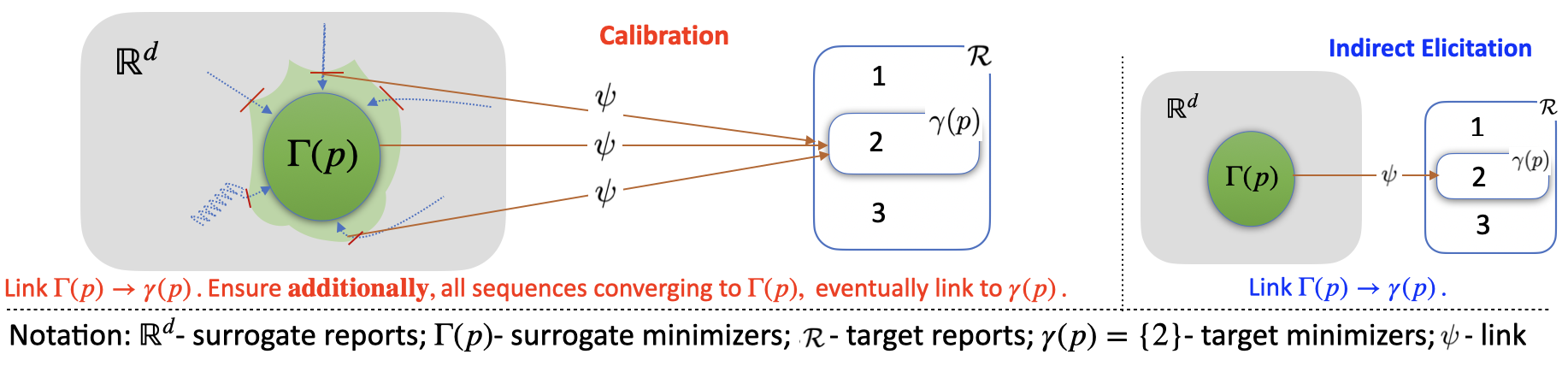}
    \end{center}
\caption{\textbf{Calibration vs. Indirect Elicitation.} Calibration requires surrogate minimizers, as well as all sequences converging to surrogate minimizers link to target minimizers. In general, it is not trivial to choose a universal threshold past which the sequences link as desired. 
Determining such a threshold requires careful reasoning about the relative positions of surrogate minimizers across different outcome distributions, i.e., $\Gamma(p)$ relative to $\Gamma(q)$ for $q \neq p$.
IE is analytically easier to verify, as it only requires that surrogate minimizers link to target minimizers.
Moreover, IE can be thought of as a geometric condition on the probability simplex, which can directly lead to design insights (\S\ref{sec:applications}).\vspace{-1.5em}}

    \label{fig:CalvsIE}
\end{figure}

\textbf{Theoretical Contributions.}
We first show that IE and calibration are equivalent for $1$-d convex, differentiable losses (\S~\ref{sec:motivation}).%
\footnote{The statement does not hold if one relaxes differentiability; see Example~\ref{example:cusp}.}
In higher dimensions, however, IE no longer implies calibration, even for strongly convex surrogates (Example~\ref{example:IEC-counterexample}).
To address this disparity, we propose a novel strengthening of IE we call \emph{strong indirect elicitation} (strong IE; see Definition~\ref{defn: main_strong_IE} in \S~\ref{sec:SIE}).
Strong IE is as easy to verify as IE, as it only depends on surrogate minimizers and not on the surrounding sequences (\S~\ref{section: ease_of_verification}).
We prove that under mild technical assumptions, strong IE implies calibration for differentiable surrogates (Theorem~\ref{thm:SIE-sufficient}).
Moreover, for the important class of strongly convex, differentiable surrogates, we show that strong IE is both necessary and sufficient for calibration (Theorem~\ref{thm:SIE-necessary}). All our calibration proofs are constructive, providing explicit link functions as part of the argument.
Taken together, our results deepen our understanding of the conditions required for statistical consistency of surrogate losses.

\textbf{Significance for Design.} 
We illustrate with two examples that proving IE or strong IE is strictly simpler than directly proving calibration, thus drastically shortening the pathway to establishing consistency~(\S\ref{sec:applications}).
We then demonstrate  how the geometric insights from these simpler conditions enable the construction of consistent, $1$-d differentiable surrogates for any orderable target loss (Theorem \ref{theorem: generalized-1d-construction}).
As an application of Theorem \ref{theorem: generalized-1d-construction}, we construct a novel $1$-d surrogate that is convex, differentiable and consistent with respect to the ordinal regression loss.
Together, these applications offer instructive proofs of concept and highlight how IE and strong IE can guide efficient surrogate design.
We conclude with important future directions (\S\ref{sec:discussion}).

\textbf{Related Work.}
Previous works have also studied easier-to-verify conditions that imply calibration for certain classes of surrogate losses; our work is unique in proposing conditions for arbitrary discrete targets that are broadly applicable to the class of differentiable surrogates.
The first conditions for calibration were studied for the important case of multi-class classification, where the target loss is the 0-1 loss.
For binary classification, \cite{bartlett2006convexity} study 1-d surrogates and show that these are calibrated if and only if the loss, in margin form $\phi(uy)$, is differentiable at 0 and $\phi'(0) < 0$.
In multi-class classification,
\cite{tewari2007consistency} study higher dimensional surrogates with symmetric superprediction sets \citep{williamson2023geometry}.
They establish a technical condition on these sets that allows for easier conditions for calibration such as ensuring that the optimizer sets are singletons.
The first to formally study the relationship between IE and calibration were \cite{agarwal2015consistent}.
However, their results do not apply to general discrete targets. 
For polyhedral losses, \cite{ramaswamy2016convex} showed that an IE-like condition is sufficient for calibration and
\cite{finocchiaro2024embedding} showed that IE is \emph{equivalent} to calibration for arbitrary discrete targets.
Specific applications where IE influenced the design and analysis of surrogates/calibration include \cite{finocchiaro2022structured, thilagar2022topk, wang2020weston, nueve2024trading}.
Our study of 1-d surrogates is heavily informed by the structure and elicitation of the target properties elucidated in \cite{lambert2009eliciting, lambert2011elicitation, steinwart2014elicitation, finocchiaro2018convex}.

\section{Background and Preliminaries}
 \begin{table}[H]
 \footnotesize
 \centering
 \renewcommand{\arraystretch}{1.3}
 \begin{tabular}{@{} c p{4.5cm} c p{4.5cm} @{}}
 \textbf{Symbol} & \textbf{Description} & \textbf{Symbol} & \textbf{Description} \\
 \midrule
 $\Y = [n]$ & Set of labels & $\R = [k]$ & Set of target reports \\
 $\ell : \R \to \reals^n$ & Discrete target loss function & $L : \reals^d \to \reals^n$ & Surrogate loss function \\
 $\psi : \reals^d \to \R$ & Link function & $\Delta_n$ & Probability simplex \\
 $\gamma : \Delta_n \rightrightarrows \R$ & Property elicited by $\ell$ & $\Gamma : \Delta_n \rightrightarrows \reals^d$ & Property elicited by $L$ \\
 $\gamma_r$ & Level-set for $\gamma$ at $r$ & $\Gamma_u$ & Level-set for $\Gamma$ at $u$ \\
 \bottomrule \\
 \end{tabular}
 \caption{Summary of key notation}
 \label{tab:notation}
 \end{table}
\subsection{Targets, Surrogates and Link Functions}
Given a finite \textbf{label space} $\Y$ and a finite \textbf{report space} $\R$, let $\ell: \R \to \reals^{|\mathcal{Y}|}$ be a \textbf{discrete target loss} associated with some prediction task. 
$\ell(\cdot)_y$ represents the loss when the label is $y \in \Y$.
Unless specified otherwise, we assume $\Y = [n]$ and $\R = [k]$, for 
$n, k \geq 2$. 
We denote the probability simplex over $\Y$ by $\Delta_{n} := \{p \in \reals^{n} | p_i \geq 0, \forall i \in [n], p^{\top}\mathbf{1}_{n} = 1\}$. 
As in \cite{finocchiaro2020embedding,finocchiaro2024embedding}, we assume that the target loss under consideration is \emph{non-redundant}, i.e., every report $r \in \R$ uniquely minimizes the expected loss for some distribution, i.e., $\forall r \in \R, \exists p \in \Delta_{n}$ such that $\text{argmin}_{r' \in \R} \langle p, \ell(r') \rangle = \{r\}$.

Since $\ell$ is discrete and non-convex, it is hard to optimize. 
Our objective then, is to replace $\ell$ with a \textbf{surrogate loss}, defined over a \textbf{continuous prediction space}, say $\reals^{d}$. 
Denote the surrogate by: $L: \reals^{d} \to \reals^{n}$. For $y \in [n]$, let $L(\cdot)_{y} : \reals^{d} \to \reals$ denote the $y^{th}$ component of $L$. 
To enable optimization of the surrogate, we assume that each component of the surrogate is \emph{convex}, i.e., $L(\cdot)_{y}: \reals^{d} \to \reals$ is convex for each $y \in \Y$. 
Furthermore, since we are interested in analyzing differentiable surrogate losses, we will also assume that $L(\cdot)_{y}$ is \emph{differentiable} for each $y \in \Y$. For all our theoretical results, we will make the following assumption: 

\begin{assumption}
\label{assumption 1} 
$\operatorname{argmin}_{u \in \mathbb{R}^{d}} L(u)_{y}$ is non-empty and compact for each $y \in [n]$.
\end{assumption}

Within the class of differentiable functions, Assumption \ref{assumption 1} encompasses an important range of surrogates—including those with strongly convex components and strictly convex minimizable components. It also covers more nuanced cases, such as the surrogate described in Example \ref{example:huber} in Appendix \ref{appendix:examples}, which features two Huber-like components.
Although each component is uniquely minimizable, not all their convex combinations are.
We demonstrate in Section \ref{sec:applications} that a one-dimensional instantiation of this loss is calibrated with respect to the ordinal regression target loss.

As the objective is to minimize $\ell$, we must systematically map predictions in $\reals^{d}$ back to $\R$. To do so, we introduce a \textbf{link function} $\psi: \reals^{d} \to \R$.

\subsection{Property Elicitation, Calibration and Indirect Elicitation}

Any set-valued function defined on $\Delta_n$ is called a \emph{property}. 
We say a loss \emph{elicits} a property, if it maps each distribution to the minimizer of the expected loss under said distribution. 
We work with two key properties, denoted $\gamma$ and $\Gamma$, which we define below: 

\begin{definition}[Target Property, Elicits, Level Sets]
    \label{defn: gamma}
     The target loss $\ell: \mathcal{R} \to \reals^{n}$ is said to elicit the property $\gamma: \Delta_{n} \rightrightarrows \R$, or in short-hand $\gamma := \text{prop}[\ell]$ if 
    $$
    \gamma(p) := \argmin_{r \in \R} \langle p, \ell(r) \rangle ~.
    $$
    For any $u \in \reals^{d}$, denote $\gamma_{u} \subseteq \Delta_{n}$ as the \emph{level-set for $\gamma$ at $r$}, i.e., $\gamma_r := \{p \in \Delta_{n} | r \in \gamma(p)\}$. 
    Since $\R$ is a finite set, we say $\gamma$ is a finite property. 
\end{definition}
\begin{definition}[Surrogate Property, Elicits, Level Sets]
    \label{defn: Gamma}
    The surrogate loss $L: \reals^{d} \to \reals^{n}$ is said to elicit the property $\Gamma: \Delta_{n} \rightrightarrows \reals^{d}$, or in short-hand $\Gamma := \text{prop}[L]$ if 
    $$
    \Gamma(p) := \argmin_{u \in \reals^{d}} \langle p, L(u) \rangle ~.
    $$
    For any $u \in \reals^{d}$, denote $\Gamma_{u} \subseteq \Delta_{n}$ as the \emph{level-set for $\Gamma$ at $u$}, i.e., $\Gamma_u := \{p \in \Delta_{n} | u \in \Gamma(p)\}$. 
\end{definition}

In order to ensure that a surrogate-link pair is actually solving the target problem, we need to ensure that statistical consistency holds.
In the finite outcome setting, it is well known that consistency reduces to the simpler notion of calibration \citep{bartlett2006convexity, tewari2007consistency, ramaswamy2016convex}.
We thus focus on calibration throughout this paper. Given some distribution $p \in \Delta_{n}$, and the corresponding surrogate minimizer(s) $\Gamma(p)$, calibration roughly requires that all sequences of approximate minimizers link to the optimal target report.

\begin{definition}[Calibration]
    \label{definition: calibration}
    Given a discrete target $\ell$, a surrogate-link pair $(L, \psi)$ is calibrated if $\forall p \in \Delta_{n}$: 
    \[
    \inf_{u \in \reals^{d}: \psi(u) \notin \gamma(p)} \langle p, L(u) \rangle > \inf_{u \in \reals^{d}} \langle p, L(u) \rangle ~.
    \]    
    We also say $L$ is calibrated, if there exists a link $\psi$, such that $(L, \psi)$ is calibrated.
\end{definition}
We next define indirect elicitation, a condition even weaker than calibration (see Theorem~\ref{thm: CALIMPLIESIE} in Appendix~\ref{appendix: elicitation} for a proof). 

\begin{definition}[Indirect Elicitation] 
    \label{definition: IE}
    A surrogate-link pair, $(L, \psi)$ indirectly elicits a discrete target $\ell$, if $\forall u \in \reals^{d}$, $\Gamma_{u} \subseteq \gamma_{\psi(u)}$. 
    We also say $L$ indirectly elicits $\ell$, if $\forall u \in \reals^{d}, \exists r \in \R$ such that $\Gamma_{u} \subseteq \gamma_{r}$. 
\end{definition}

For polyhedral surrogates, \cite{finocchiaro2024embedding} established that indirect elicitation and calibration are equivalent. This result is striking, as indirect elicitation is significantly easier to verify than calibration. The latter requires ensuring that, for each distribution $p \in \Delta_{n}$, any sequence of reports minimizing $\langle p, L(\cdot)\rangle$ in the limit, eventually links to $\gamma(p)$. Equivalently, it demands that any sequence converging to $\Gamma(p)$ ultimately links to $\gamma(p)$ (see Lemma \ref{lemma: minimizing_sequences} in Appendix \ref{appendix: cvx_general}). In contrast, verifying indirect elicitation only necessitates linking optimal surrogate reports to optimal target reports. Specifically, if $\Gamma_{u} = \emptyset$ for some report $u$ then IE holds trivially and there is nothing to check. Otherwise, if $p \in \Gamma(u)$, IE demands that $\psi(u) \in \gamma(p)$. Thus, IE is fully determined by the structure of the minimizing reports, i.e., $\Gamma(\Delta_{n}) := \{\Gamma(p) | p \in \Delta_{n}\}$, whereas calibration requires analyzing $\Gamma(p)$ along with the local behavior of reports around it.

\section{Motivating Counterexamples and Main Results} 

\label{sec:motivation}

Our primary aim is to identify simpler conditions that yield calibration for differentiable surrogates.
IE seems to be an ideal candidate: it is substantially easier to verify, is known to be equivalent for polyhedral losses, and all previously studied calibrated convex surrogates satisfy IE.
However, it is not quite strong enough in general.
We present two novel counterexamples (Example \ref{example:cusp} in \ref{sec: IECalNeq}, Example \ref{example:IEC-counterexample} in \ref{sec: dIECalNeq}) that demonstrate why IE is insufficient for calibration.
These examples are far from pathological, and thus demonstrate exactly why IE is too weak for this setting. Example \ref{example:IEC-counterexample} motivates a new condition, strong IE that we go onto show implies calibration for convex, differentiable surrogates, and is both necessary and sufficient if the surrogate has strongly convex components.

\subsection{Indirect elicitation and calibration are not equivalent}
\label{sec: IECalNeq}
All previously  studied convex surrogates that are known to indirectly elicit a target loss are calibrated for \emph{some} link function $\psi$.%
\footnote{
    For example, consider the hinge surrogate for 0-1 loss, $\Y = \R = \{-1,1\}$ and $L(u)_y = \max(0, 1-uy)$. 
    If the link boundary is at $u=1$, $\psi(u) = 1 \iff u\geq 1$, $(L,\psi)$ satisfies IE but is not calibrated.
    However, if the link boundary is moved to $u=0$, then $(L, \psi)$ is calibrated.
}
The literature has therefore treated both conditions as roughly equivalent to each other. %
Yet, this is not generally true.
We identify the first known example of a loss that satisfies IE but cannot be calibrated for \emph{any} choice of link function.

\begin{example}[Cusp]
    \label{example:cusp}
    Let $\labs: \{-1, \bot, 1\} \to \reals^2$ be the target loss for binary classification with abstain level $\frac{1}{4}$ using the label space $\Y = \{-1, 1\}$ \citep{bartlett2008classification}.
    \[
        \labs(r)_y = 
        \begin{cases}
            0 & r = y \\
            1/4 & r = \bot \\
            1 & r = -y
        \end{cases} ~.
    \]    
    Let $\Lcusp: \reals \to \reals^2$ be the surrogate loss with $\Lcusp(u)_y = (1-uy)^2 + |u| $,
    and $\psi(u) = \sign(u)$ for $u\neq 0$ and $\psi(0)=\bot$ (abstain).
    The expected loss of $\Lcusp$ is plotted in Figure~\ref{fig:cusp-simple} (left).
    
    It is optimal to abstain whenever there the most likely outcome occurs with a probability of at most $3/4$, i.e.,  $\gamma_{\bot} := \{p \in \Delta_{2}: \max\{p_1, p_2\} \leq 3/4\} = \{p \in \Delta_{2} | p_{1} \in [1/4, 3/4]\}.$ Now, $(\Lcusp, \psi)$ indirectly elicits $\ell$ as $\Gamma_0 = \gamma_{\bot}$. %
    Note that $\psi$ is the only link that satisfies IE.
    However, calibration is not satisfied for any $p_1 \in (1/4,3/4)$.
    To see this, consider any positive sequence $\{u_t > 0\}_{t \geq 0}$ with $u_t \to 0$.
    Then in the limit $u_t$ achieves the optimal loss for $p$, but each $u_t$ links to 1 and never the correct report, $\bot$.
    Since $\psi$ is the only link that satisfies IE, there is no other choice that could yield calibration.
    To restore calibration, it suffices to ``smooth out'' the non-differentiable cusp at $u = 0$, to get a differentiable surrogate as in Figure~\ref{fig:cusp-simple}.
    See Example~\ref{ex:cusp-smoothed} in Appendix~\ref{appendix:examples}.
\end{example}

\begin{figure}[!t]
    \begin{center}
        \includegraphics[width=0.9\textwidth,height=0.16\textheight,keepaspectratio]{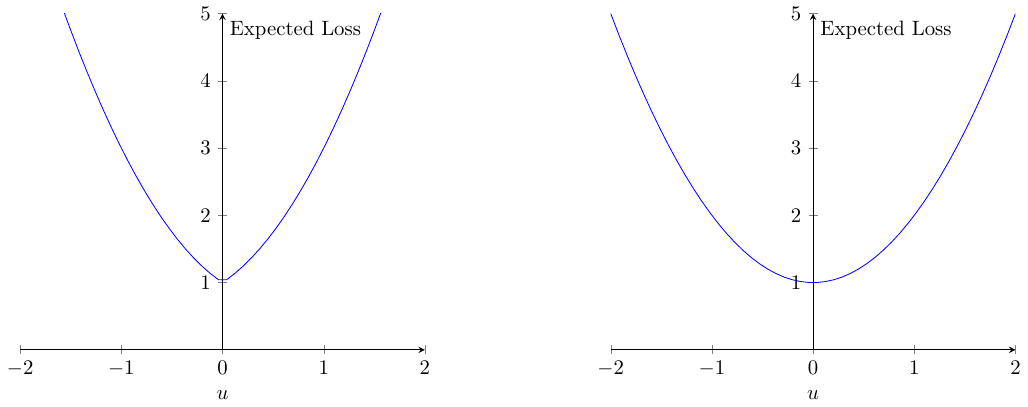}
    \end{center}
    
    \caption{
    The expected loss for two surrogates for abstain loss.
    \textbf{Left:} For $\Lcusp$ at $p=0.5$,
    it is clear that no link yields calibration for the abstain target (Example~\ref{example:cusp}). 
    \textbf{Right:} A smoothed version of $\Lcusp$ that is calibrated, again depicted at $p=0.5$ (Example~\ref{ex:cusp-smoothed}, Appendix~\ref{appendix:examples}). \vspace{-1.5em}
    }
    \label{fig:cusp-simple}
\end{figure}

$\Lcusp$ is a remarkably simple non-polyhedral loss: it is strongly convex, one-dimensional, and differentiable everywhere except for a single cusp at $u = 0$. 
It is as ‘nice’ as a non-differentiable loss can be. 
Yet, despite indirectly eliciting $\labs$, it still fails to satisfy calibration.
Since smoothing out the cusp yields a calibrated loss, a natural question arises: \emph{does differentiability, combined with indirect elicitation, always imply calibration? }
Differentiable losses are well-structured and extensively studied in machine learning as they are optimization-friendly and enjoy fast convergence rates. 
This makes the question of understanding the connection between IE and calibration under differentiability all the more compelling.

\subsection{Differentiability and IE imply calibration for $d=1$}
In $1$-dimension, we answer the above question affirmatively: IE does imply calibration for differentiable real-valued surrogates. 
We provide a proof sketch of our theorem in this section.

To set the stage, we recall that a target loss $\ell$ that is indirectly elicited by a $1$-d surrogate (differentiable or not) possesses special structure.
In particular, \cite{finocchiaro2020embedding} showed that the property $\gamma := \text{prop}[\ell]$ corresponding to such a target satisfies a condition known as \emph{orderability}, which roughly states that there exists a connected, $1$-dimensional path that crosses each of the target level-sets.

\begin{definition}[Orderable \citep{lambert2011elicitation}]
    \label{defn: main_Orderable}
    A finite property $\gamma: \Delta_{n} \rightrightarrows \R$ is \emph{orderable}, if there is an enumeration of $\R = \{r_1, r_2, ..., r_{k}\}$ such that for all $i \leq k -1$, we have that $\gamma_{r_{j}} \cap \gamma_{r_{j+1}}$ is a hyperplane intersected with $\Delta_{n}$. 
\end{definition}

\begin{theorem} \label{thm:IEC-1d}
     Let $L: \reals \to \reals^{n}$ be a convex, differentiable surrogate that indirectly elicits $\ell$. 
     Under Assumption \ref{assumption 1}, $L$ is calibrated with respect to $\ell$.
\end{theorem}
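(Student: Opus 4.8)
The plan is to build an explicit link function and verify calibration directly at every distribution $p \in \Delta_n$, exploiting the one-dimensional structure of both the surrogate and (via orderability) the target. First I would set up coordinates: since $\gamma := \text{prop}[\ell]$ is orderable (by the cited result of \cite{finocchiaro2020embedding}, because $\ell$ is indirectly elicited by a $1$-d surrogate), enumerate $\R = \{r_1,\dots,r_k\}$ so that consecutive target level-sets meet in a hyperplane slice of $\Delta_n$. On the surrogate side, $\Gamma(p) = \argmin_{u\in\reals} \langle p, L(u)\rangle$ is, for each $p$, a closed interval $[a(p), b(p)]$ (possibly a point) because $u \mapsto \langle p, L(u)\rangle$ is convex and, by Assumption~\ref{assumption 1} together with a standard recession/coercivity argument, has a nonempty compact minimizer set. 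Differentiability gives the clean first-order characterization $u^\star \in \Gamma(p) \iff \langle p, L'(u^\star)\rangle = 0$, which will be the workhorse of the whole argument.

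Next I would establish monotonicity of the surrogate minimizers along the order. The key claim is that as $p$ ranges over $\Delta_n$, the minimizer intervals $\Gamma(p)$ are ``ordered'' consistently with the target order: if $r_i \in \gamma(p)$ and $r_j \in \gamma(q)$ with $i < j$, then (up to orientation) $\sup \Gamma(p) \le \inf \Gamma(q)$, or at worst the intervals overlap only in a way dictated by shared target reports. This should follow from IE ($\Gamma_u \subseteq \gamma_{\psi(u)}$) combined with the fact that $\langle p, L'(u)\rangle$ is a continuous function of $(p,u)$ that is nondecreasing in $u$ (by convexity of each component, $L'(u)_y$ is nondecreasing, so any convex combination is too). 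Concretely, the zero-set in $u$ of $p \mapsto \langle p, L'(u)\rangle$ moves monotonically as $p$ traverses a line segment in the simplex, and IE forces these zero-sets to respect the partition of $\Delta_n$ into target level-sets. I would then read off a candidate link: let the link boundaries be the values $u$ at which $\Gamma_u$ straddles a boundary hyperplane $\gamma_{r_i} \cap \gamma_{r_{i+1}}$, and define $\psi$ to be the appropriate $r_i$ on each resulting interval of $\reals$ (choosing, at a boundary value $u_0$, the report corresponding to the lower-dimensional level set, exactly as IE demands since $\Gamma_{u_0}$ may be a hyperplane slice).

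Finally I would verify the calibration inequality. Fix $p$; let $r^\star = r_i \in \gamma(p)$ be a target optimum, and suppose for contradiction that $\inf_{u:\psi(u)\notin\gamma(p)} \langle p, L(u)\rangle = \inf_u \langle p, L(u)\rangle$. Then there is a sequence $u_t$ with $\psi(u_t) \notin \gamma(p)$ and $\langle p, L(u_t)\rangle \to \min_u \langle p, L(u)\rangle$; by convexity and compactness of $\Gamma(p)$ this forces $\mathrm{dist}(u_t, \Gamma(p)) \to 0$, so $u_t$ accumulates at some endpoint of $\Gamma(p) = [a(p),b(p)]$, say $b(p)$. But by the monotonicity/ordering established above and the construction of $\psi$, all $u$ in a one-sided neighborhood $(b(p), b(p)+\eps)$ link into $\gamma(p)$ as well --- because the next link boundary past $\Gamma(p)$ occurs only where $\Gamma_u$ has left $\gamma(p)$ entirely, which by first-order conditions happens strictly beyond $b(p)$ unless $\langle p, L'(\cdot)\rangle$ is identically zero on an interval extending past $b(p)$, contradicting $b(p) = \sup\Gamma(p)$. (Here differentiability is essential: it is precisely what rules out the cusp phenomenon of Example~\ref{example:cusp}, where $\langle p, L'\rangle$ jumps across $0$ at the kink so that $\psi$ must switch value exactly at the minimizer.) This contradiction establishes calibration.

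The main obstacle I anticipate is the monotonicity/ordering step --- making precise and proving that the surrogate minimizer intervals $\Gamma(p)$ are globally consistently ordered with the target order, and handling the case where $\Gamma(p)$ is a nondegenerate interval shared across a boundary between two target level-sets (so that its endpoints, not its interior, are what matter for the link). Getting the link definition exactly right at these boundary $u$-values, and checking that Assumption~\ref{assumption 1} (rather than strong convexity) suffices to keep $\Gamma(p)$ compact and the first-order argument valid, is where the real work lies; the final calibration inequality should then be a relatively short topological argument using \cite{finocchiaro2024embedding}-style minimizing-sequence reasoning (Lemma~\ref{lemma: minimizing_sequences}).
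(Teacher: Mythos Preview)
Your proposal is correct and follows essentially the same route as the paper: orderability of $\gamma$, the first-order characterization $\langle p, L'(u)\rangle = 0$, a monotone ordering of the minimizer intervals along the target order, a threshold link, and verification of calibration. The paper's version of your ``monotonicity/ordering step'' is Lemmas~\ref{lemma: 1d_boundary_level_sets} and~\ref{lemma: main_optimal_surrogates_1d}: the key sharpening you should aim for is that for each $j$ there is a compact interval $I^j \subset \reals$ with $\Gamma_u = \gamma_j \cap \gamma_{j+1}$ \emph{exactly} (not merely ``straddling'') for all $u \in I^j$, and that $\Gamma(p)$ lies strictly between $I^{j-1}$ and $I^j$ whenever $\gamma(p)=\{j\}$ (via the sign of $\langle p, L'(u^j)\rangle$ and gradient monotonicity); with these thresholds in hand the paper verifies the calibration inequality directly---the infimum over $\{u:\psi(u)\notin\gamma(p)\}$ is attained at an endpoint of the nearest $I^j$ by convexity---rather than through your sequence/accumulation argument.
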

\vspace{-1em}
\begin{proof}[Proof sketch:]
We first show that for each $j \in [k-1]$, the boundary between adjacent target cells, i.e., $\gamma_{r_{j}}\cap \gamma_{r_{j+1}}$ overlaps completely with some surrogate level-set. So, for some $u^{j} \in \mathbb{R}$, $\Gamma_{u^{j}} = \gamma_{r_{j}}\cap \gamma_{r_{j+1}}$ (Lemma \ref{lemma: 1d_boundary_level_sets}). We then establish that for any two distributions $p, q \in \Delta_{n}$ that lie on either side of the target boundary $\gamma_{r_{j}}\cap \gamma_{r_{j+1}}$, optimal reports $\Gamma(p), \Gamma(q)$ must lie on either side of $u^{j}$ (Lemma \ref{lemma: main_optimal_surrogates_1d}). Together, these results establish the existence of a connected, $1$-dimensional path through surrogate minimizers that faithfully mirrors any connected $1$-dimensional path traversing the target level sets. This naturally induces a  link $\psi$ that tracks the paths by mapping $u^{j}$ to either of $\{r_j, r_{j+1}\}$, and mapping $\Gamma(p)$ and $\Gamma(q)$ to $r_j$ and $r_{j+1}$ (Theorem \ref{thm: main_1d_theorem_1}). 
\end{proof}
\vspace{-1em}
 
The full proof and constructive link $\psi$ are presented in Appendix~\ref{appendix: 1d}.
It differs significantly from the polyhedral case, since barring convexity, differentiable and polyhedral losses have no commonality in their underlying structure. 

\subsection{Differentiability and IE do not imply calibration for $d>1$}
\label{sec: dIECalNeq}
Unfortunately, there is no direct analogue of Theorem \ref{thm:IEC-1d} in higher dimensions.
In particular, the following 2-dimensional surrogate is differentiable and satisfies IE for a target, but is not calibrated.

\begin{example}[Counterexample: IE without calibration] 
    \label{example:IEC-counterexample}
    Let $\Y = \{1,2,3\}$, $\R = \{1,2\}$ and consider $\Lce: \mathbb{R}^2 \to \mathbb{R}^{\Y}$, where 
    \[
        \Lce(u) = 
        \begin{pmatrix}
            u_{1}^{2} + u_1 + u_{2}^{2} + 2u_{2} \\
            2u_{1}^{2} + u_1 + 2u_{2}^{2} + 2u_{2} \\
            3u_{1}^{2} - u_1 + u_{2}^{2} - 2u_{2}
        \end{pmatrix} ~.
    \]
    Each component of $\Lce$ is differentiable and strongly convex - and so $\Lce$ is minimizable. 
    $\Lce$ indirectly elicits the target $\ell_2: \mathcal{R} \to \mathbb{R}^3$ shown in Figure~\ref{fig:IE-types} (center, red).
    However, there is no link function $\psi: \mathbb{R}^{2} \to \mathcal{R}$, such that the pair $(\Lce, \psi)$ is calibrated with respect to $\ell_2$. In particular, there exists a sequence of reports that uniformly link to $1$, but converge to $(0,0)$, which has to link to $2$. 

    More formally, define for $0 < \epsilon \leq 1$,  $p_\epsilon := (1/2 + \epsilon/2, 0, 1/2 - \epsilon/2)$.
    Then $\Gamma(p_{\epsilon}) = \left\{\left(\frac{-\epsilon}{5 - \epsilon}, \frac{-2\epsilon}{3 + \epsilon}\right)\right\}$. 
    Notice that $\gamma(p_{\epsilon}) = \{1\} \implies \psi(\Gamma(p_{\epsilon})) = 1$ necessarily. 
    Simultaneously, $\Gamma_{(0, 0)} \subseteq \gamma_{2}$ and $\Gamma_{(0,0)} \not \subseteq \gamma_{1}$, $\implies \psi((0, 0)) = 2$. 
    Denote $p^{*} := (0, 1/2, 1/2) \in \Gamma_{(0, 0)}$ and observe that $\gamma(p^{*})$ = \{2\}. 
    Then, since $\Gamma(p_{\epsilon}) \to (0,0)$ as $\epsilon \to 0$, it follows by continuity that $\langle p^{*}, \Lce(\Gamma(p_{\epsilon}))\rangle \to \langle p^{*}, \Lce((0,0)) \rangle = \inf_{u \in \mathbb{R}^{2}}\langle p^{*}, L(u) \rangle$. 
    Thus, $\Lce$ violates calibration for any choice of link $\psi$. 
\end{example}

\begin{figure}
    \begin{center}
    \includegraphics[width=\linewidth]{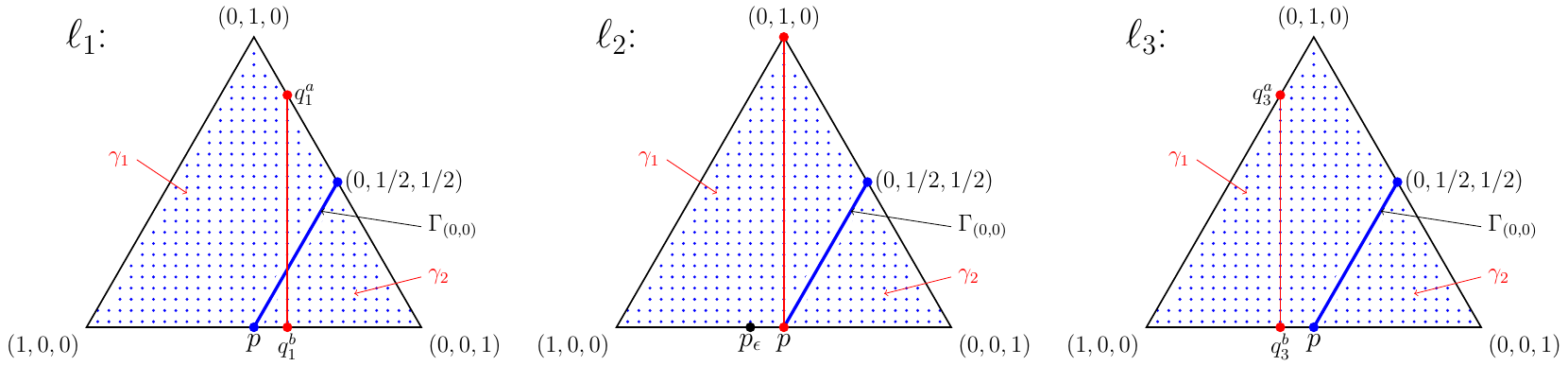}
    \end{center}
    \caption{
        Let $\Y = \{1, 2, 3\}$, $\R = \{1, 2\}$. Three candidate target losses $\ell_1, \ell_2, \ell_3 : \R \to \mathbb{R}^{3}$, that $\Lce$ (Example~\ref{example:IEC-counterexample}) could be a surrogate for. 
        For each $i \in \{1,2,3\}$, $\ell_i(1) = (1, 1, 1)$. Whereas, $\ell_{1}(2) = (5/2, 5/4, 0)$, $\ell_{2}(2) = (2, 1, 0)$ and $\ell_{3}(2) = (5/3, 5/6, 0)$.
        The target boundary elicited by $\ell_{1}$ (resp. $\ell_{3}$) is the \emph{red line segment} joining $q_{1}^{a}$ and $q_{1}^{b}$ (resp. $q_{3}^{a}$ and $q_{3}^{b}$). The target boundary elicited by $\ell_{2}$ is the \emph{red line segment} joining $p$ and $(0,1,0)$. The level sets of $\Lce$ are the \emph{blue points}. 
        All level sets of $\Lce$ are single points, barring $\Gamma_{(0,0)}$, which is the entire segment spanning from $p = (1/2, 0, 1/2)$ to $(0, 1/2, 1/2)$ (\emph{blue line segment}). 
        \textbf{Left: no IE. } The segment level set \emph{crosses} the target boundary, so $\Lce$ cannot indirectly elicit $\ell_1$.
        \textbf{Center: IE. } The segment level set \emph{does not cross, but just touches} the target boundary, so IE holds, however, strong IE does not hold.
        \textbf{Right: strong IE. } The segment level set \emph{lies entirely within} the target cell, so strong IE holds. 
        Note: $q_{1}^{a} = (0, 0.8, 0.2), q_{1}^{b} = (0.4, 0, 0.6)$, $q_{3}^{a} = (0.2, 0.8, 0), q_{3}^{b} = (0.6, 0, 0.4)$. 
    }
    \label{fig:IE-types}
\end{figure}

Similarly to $\Lcusp$, the surrogate $\Lce$ is extremely well-behaved.
Each of its components are differentiable, strongly convex, minimizable, and the minimizing reports are always compact sets.
Yet, $\Lce$ violates calibration despite satisfying IE with respect to $\ell_2$.

Turning our attention again to Figure~\ref{fig:IE-types} (center), where level-sets are depicted in blue: we see that
geometrically, the violation stems from the location of $\Gamma_{(0,0)}$, where $\Gamma = \text{prop}[\Lce]$. Every surrogate level-set is a singleton, except $\Gamma_{(0,0)}$ (blue line segment).  
Notice that $\Gamma_{(0,0)}$ just touches the (red) target boundary $\gamma_{1} \cap \gamma_{2}$ at the distribution $(1/2, 0, 1/2)$.  
Shifting $\gamma_{1} \cap \gamma_{2}$ to the right to get $\ell_1$ immediately violates IE, since $\Gamma_{u} \not \subseteq \gamma_{r}$ for any $r \in \{1,2\}$, when $\gamma = \text{prop}[\ell_{1}]$ (Figure~\ref{fig:IE-types}, left). 
On the other hand, shifting the boundary $\gamma_{1} \cap \gamma_{2}$ by any amount to the left yields a target loss of form $\ell_3$, for which $\Lce$ is calibrated (Figure~\ref{fig:IE-types}, right). 
So, while indirect elicitation requires that the segment level set be contained within $\gamma_{2}$, calibration is only achieved for $\Lce$ when $\Gamma_{(0,0)}$ is bounded away from the target boundary.

\subsection{Strong indirect elicitation}
\label{sec:SIE}

Example~\ref{example:IEC-counterexample} suggests that while calibration fails under IE, bounding the level set away from the target boundary resolves the problem.
We formalize this idea with a new condition, \emph{strong indirect elicitation}, which is a strengthening of indirect elicitation (see Theorem \ref{thm: SIEimpliesIE} in Appendix \ref{appendix: elicitation}).

\begin{definition}[Strong Indirect Elicitation]\label{defn: main_strong_IE}
    Given a target loss $\ell$, let $\gamma^{*}_S = \{p : \gamma(p) = S\}$.
    A surrogate $L$ strongly indirectly elicits $\ell$ if
    $\forall u$, $\exists S \subseteq \R$ such that $\Gamma_u \subseteq \gamma^{*}_S$;      
    equivalently, if for every $u \in \mathbb{R}^{d}$ and every $p, q \in \Gamma_{u}$, $\gamma(p) = \gamma(q)$.
\end{definition}

Revisiting Example \ref{example:IEC-counterexample}: Notice that $\Lce$ does not satisfy strong IE with respect to $\ell_{2}$, since $\gamma(p) = \{1, 2\}$, while $\gamma((0, 1/2, 1/2)) = \{2\}$ and both $p, (0,1/2,1/2) \in \Gamma_{(0,0)}$. However, $\gamma(p) = \gamma((0,1/2,1/2)) = \{2\}$, when $\gamma = \text{prop}[\ell_{3}]$. Thus, $\Lce$ satisfies strong IE with respect to $\ell_3$. 

Though close to IE in definition, strong IE turns out to be much more powerful for differentiable surrogates, in that it implies calibration.\footnote{Interestingly, no polyhedral surrogate satisfies strong IE; see Theorem \ref{thm: poly_not_sIE} in Appendix \ref{appendix: elicitation}}

\begin{theorem}\label{thm:SIE-sufficient}
     Let $L$ be a convex, differentiable surrogate that strongly indirectly elicits $\ell$.
      Under Assumption \ref{assumption 1}, $L$ is calibrated with respect to $\ell$.
\end{theorem}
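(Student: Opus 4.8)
The plan is to build a link function $\psi$ explicitly and then verify the calibration inequality pointwise for each $p \in \Delta_n$. The natural candidate link is to send each report $u \in \reals^d$ to (some fixed selection from) $\gamma(p)$ for any $p$ with $u \in \Gamma(p)$; strong IE is exactly what makes this well-defined, since all such $p$ share the same $\gamma(p) = S_u$, so we may set $\psi(u)$ to be, say, the least element of $S_u$ under a fixed enumeration of $\R$. When $\Gamma_u = \emptyset$, define $\psi(u)$ arbitrarily. The key point is that this $\psi$ automatically satisfies IE, and in fact satisfies the stronger property that $\psi(u) \in \gamma(p)$ whenever $u \in \Gamma(p)$, with the \emph{same} value of $\gamma(p)$ across all such $p$.

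Next I would fix $p \in \Delta_n$ and suppose, for contradiction, that calibration fails at $p$: there is a sequence $u_t \in \reals^d$ with $\psi(u_t) \notin \gamma(p)$ for all $t$ but $\langle p, L(u_t)\rangle \to \inf_u \langle p, L(u)\rangle$. By Lemma~\ref{lemma: minimizing_sequences} (minimizing sequences converge to the minimizer set) together with Assumption~\ref{assumption 1} — here I expect to need a short argument that $\Gamma(p) = \argmin_u \langle p, L(u)\rangle$ is nonempty and compact, using that it is the intersection/convex-combination structure of the compact component argmins, or at least bounded so that $u_t$ has a convergent subsequence — we may pass to a subsequence with $u_t \to u^\star$ for some $u^\star \in \Gamma(p)$. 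Differentiability enters crucially here: because each $L(\cdot)_y$ is convex and differentiable, $\langle p, L(\cdot)\rangle$ is convex differentiable, and $u^\star \in \Gamma(p)$ means $\nabla \langle p, L(u^\star)\rangle = 0$. I then want to show that for $t$ large, $u_t$ must actually lie in $\Gamma(q)$ for some $q$ with $\gamma(q) = \gamma(p)$, forcing $\psi(u_t) \in \gamma(p)$ and contradicting the assumption.

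The mechanism for that last step — and what I expect to be the main obstacle — is to produce, from the near-optimality of $u_t$ for $p$ and the vanishing gradient at $u^\star$, a distribution $q_t$ close to $p$ for which $u_t$ is \emph{exactly} optimal, i.e. $u_t \in \Gamma(q_t)$, and then argue $\gamma(q_t) = \gamma(p)$ for $t$ large. Exact optimality of $u_t$ for $q_t$ is equivalent to $\sum_y (q_t)_y \nabla L(u_t)_y = 0$, a linear system in $q_t$ over the simplex; one shows it is solvable near $p$ because $u^\star$ solves the analogous system for $p$ and the gradients vary continuously, so the solution set is nonempty and $q_t \to p'$ for some $p' \in \Gamma_{u^\star} \subseteq \gamma^*_{S}$ where $S = \gamma(p)$ (using $p \in \Gamma_{u^\star}$ too and strong IE). The delicate part is handling the simplex boundary constraints $q_t \geq 0$ — the linear system may only be solvable with some coordinates driven to zero — and ensuring the limiting $p'$ still satisfies $\gamma(p') = \gamma(p)$ rather than merely $\gamma(p') \supseteq \gamma(p)$; strong IE controls this since every point of $\Gamma_{u^\star}$ has the \emph{same} $\gamma$-value as $p$, and upper hemicontinuity of $\gamma$ (finiteness of $\R$) then forces $\gamma(q_t) = \gamma(p') = \gamma(p)$ for $t$ large. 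Once $u_t \in \Gamma(q_t)$ with $\gamma(q_t) = \gamma(p)$, the definition of $\psi$ gives $\psi(u_t) \in \gamma(q_t) = \gamma(p)$, the desired contradiction. I would then remark that the construction is fully explicit, as claimed in the introduction, and note where Assumption~\ref{assumption 1} is used (to get convergent subsequences of minimizing sequences and to keep $\Gamma(p)$ compact).
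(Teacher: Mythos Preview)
There is a genuine gap in how you handle nowhere-optimal reports, i.e., points $u$ with $\Gamma_u = \emptyset$. Your link sets $\psi(u)$ arbitrarily there, and the heart of your contradiction argument is the claim that for $t$ large, $u_t \in \Gamma(q_t)$ for some $q_t \in \Delta_n$. But this claim is false in general, even under strong IE and Assumption~\ref{assumption 1}. Take $d = n = 2$, $L(u)_1 = \|u\|_2^2$, $L(u)_2 = \|u - e_1\|_2^2$. Then $\Gamma(\Delta_2) = [0,1]\times\{0\}$ and, for any target with $\gamma_1 = \{p : p_1 \geq 1/2\}$, $\gamma_2 = \{p : p_2 \geq 1/2\}$, strong IE holds (every nonempty $\Gamma_u$ is a singleton). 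Now fix $p = (0.7,0.3)$, so $\Gamma(p) = \{(0.3,0)\}$ and $\gamma(p) = \{1\}$, and let $u_t = (0.3, 1/t)$. The second row of $\nabla L(u_t)^\top q = 0$ reads $\tfrac{2}{t}(q_1+q_2)=0$, which has no solution on the simplex; thus $\Gamma_{u_t} = \emptyset$ for every $t$. With $\psi(u_t)$ set ``arbitrarily'' to $2$, you get $\psi(u_t)\notin\gamma(p)$ while $\langle p, L(u_t)\rangle \to \inf_u \langle p, L(u)\rangle$, so your constructed link is not calibrated and your mechanism for producing $q_t$ cannot rescue it. The difficulty is not merely that simplex boundary constraints ``drive some coordinates to zero''; the linear system can be infeasible outright.

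The paper closes exactly this gap by defining the link through a projection: $\psi(u) \in \gamma(q)$ for any $q \in \Gamma_v$ with $v \in \mathrm{proj}_{\Gamma(\Delta_n)}(u)$. This requires first showing $\Gamma(\Delta_n)$ is closed (so the projection exists), and then arguing via upper hemicontinuity of the correspondence $u \mapsto \Gamma_u$ that when $u_t$ is $\delta$-close to $\Gamma(p)$, its projection $v$ is too, whence $\Gamma_v$ lies in an $\epsilon$-neighborhood of the level-set bundle $\Gamma_{\Gamma(p)}$. Strong IE bounds this bundle away from every target cell outside $\gamma(p)$, forcing $\psi(u_t) \in \gamma(p)$. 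Your strategy is essentially correct for reports that happen to lie in $\Gamma(\Delta_n)$; what is missing is a principled extension of $\psi$ to all of $\reals^d$ that inherits the geometry of nearby optimal reports rather than being set arbitrarily.
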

\vspace{-1em}
\begin{proof}[Proof sketch:] 
Fix $p \in \Delta_{n}$.
Key to our proof is establishing that the minimizers ``surrounding'' $\Gamma(p)$ link to $\gamma(p)$.  
Define the \emph{level-set bundle at $p$} to be the collection of all level-sets passing through $p$, i.e., $\Gamma_{\Gamma(p)} := \cup_{u \in \Gamma(p)} \Gamma_u$. 
Repeated applications of strong IE establish the following: for a sufficiently small $\epsilon_p > 0$, the surrogate minimizers of distributions in ‘$\epsilon_p$-proximity’ to the level-set bundle at $p$ link to $\gamma(p)$ (Lemma \ref{lemma: sIEC_technical_2}).
For simplicity, denote this set of minimizers as the $\epsilon_{p}$-minimizers. 
We have thus far that any valid link $\psi$ must ensure that $\psi(\epsilon_{p}$-minimizers) $ \in \gamma(p)$.
By establishing upper-hemicontinuity of the set-valued map $\Gamma_{(\cdot)}: \reals^{d} \rightrightarrows \R$ (Lemma \ref{lemma: gamma_subscript_closed_UHC}), we show that for some $\delta_p > 0$ there exists a $\delta_p$-neighborhood around $\Gamma(p)$ wherein all minimizers are $\epsilon_{p}$-minimizers (Lemma \ref{lemma: sIEC_technical_0}). 
Thus, all minimizers surrounding $\Gamma(p)$ link to $\gamma(p)$. 
In effect, this means that surrogate minimizers that link to different target reports are well-separated in space which is imperative for calibration. 
We conclude via an explicit construction to extend this link $\psi$ to a calibrated link defined for all surrogate reports, including the nowhere-optimal ones (Theorem \ref{thm: sIEC_sufficient}).
The reader may refer to Figure \ref{fig:suff_proof_diag} in Appendix \ref{appendix:SIE-sufficient} for visual intuition of the proof. 
\end{proof}
\vspace{-1em}

Finally, we show that restricting to surrogates with strongly convex components makes strong IE necessary for calibration, and thus strong IE and calibration are equivalent for these surrogates.

\begin{theorem} \label{thm:SIE-necessary}
    Let $L: \reals^{d} \to \reals^{n}$ be a surrogate, such that $L(\cdot)_{y}: \reals^d \to \reals$ is strongly convex and differentiable for each $y \in [n]$. Then, $L$ is calibrated with respect to $\ell$ if and only if it strongly indirectly elicits $\ell$.
\end{theorem}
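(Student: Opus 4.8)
The plan is to handle the two directions of the equivalence separately. The direction ``strong IE $\Rightarrow$ calibration'' should be essentially immediate: since each $L(\cdot)_y$ is strongly convex it is coercive and has a unique minimizer, so $\argmin_u L(u)_y$ is a singleton and Assumption~\ref{assumption 1} holds automatically; strong convexity also implies convexity, so $L$ is a convex, differentiable surrogate that strongly indirectly elicits $\ell$, and Theorem~\ref{thm:SIE-sufficient} applies verbatim. So I would spend the rest of the argument on the converse, ``calibration $\Rightarrow$ strong IE''.

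For that direction I would argue by contradiction: suppose $(L,\psi)$ is calibrated but strong IE fails, so there exist $u_0 \in \reals^d$ and $p,q \in \Gamma_{u_0}$ with $\gamma(p)\neq\gamma(q)$; after possibly swapping $p$ and $q$, fix a report $r^*\in\gamma(p)\setminus\gamma(q)$. I would then establish two structural facts. (1) Since $\langle p', L(\cdot)\rangle$ is a convex combination of the strongly convex $L(\cdot)_y$, it is strongly convex with modulus bounded below uniformly over $p'\in\Delta_n$ (the modulus is $\sum_y p'_y\mu_y\geq\min_y\mu_y>0$); hence each $\Gamma(p')$ is a singleton $\{u(p')\}$, all $u(p')$ lie in a common bounded ball (uniform coercivity), and $p'\mapsto u(p')$ is continuous on $\Delta_n$ by the standard argmin-stability argument. (2) Since $r^*\in\gamma(p)$ we have $p\in\gamma_{r^*}$, and by non-redundancy there is $\bar p$ with $\gamma(\bar p)=\{r^*\}$; then every point of the half-open segment $[\bar p,p)$ still lies in the ``pure cell'' $\{p':\gamma(p')=\{r^*\}\}$, since each strict inequality $\langle\bar p,\ell(r^*)-\ell(r')\rangle<0$ combined with $\langle p,\ell(r^*)-\ell(r')\rangle\leq 0$ stays strict under convex combinations with positive weight on $\bar p$; hence $p$ lies in the closure of the pure cell.

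Next I would invoke a local reformulation of calibration: if $(L,\psi)$ is calibrated, then for every $p'$ there is a neighborhood $U$ of $u(p')$ with $\psi(U)\subseteq\gamma(p')$ — otherwise a sequence $u_t\to u(p')$ with $\psi(u_t)\notin\gamma(p')$ would satisfy $\langle p', L(u_t)\rangle\to\inf_u\langle p', L(u)\rangle$ by continuity, contradicting the strict inequality in Definition~\ref{definition: calibration}. Now pick $p_t\to p$ with $\gamma(p_t)=\{r^*\}$ (possible by fact (2)); by continuity of $u(\cdot)$, $u(p_t)\to u(p)=u_0$, the last equality because $p\in\Gamma_{u_0}$. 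Applying the reformulation at $q$ yields a neighborhood $U_q\ni u_0$ with $\psi(U_q)\subseteq\gamma(q)$, so $\psi(u(p_t))\in\gamma(q)$ for all large $t$; applying it at $p_t$ yields $\psi(u(p_t))=r^*$, since a whole neighborhood of $u(p_t)$ links into $\gamma(p_t)=\{r^*\}$. Hence $r^*\in\gamma(q)$, contradicting the choice of $r^*$, so strong IE must hold. Note every calibration argument here is constructive in the sense requested, as the link $\psi$ is the one witnessing calibration.

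I expect the main obstacle to be fact (1): cleanly proving that $p'\mapsto u(p')$ is continuous (equivalently, that near-minimizers of $\langle p', L(\cdot)\rangle$ collapse onto $u(p')$ uniformly enough as $p'$ moves) and that the minimizers stay in a common bounded set. These are standard consequences of uniform strong convexity and coercivity, but must be stated carefully. By contrast, the density of the pure cell and the local reformulation of calibration are elementary given the polyhedral structure of $\gamma$ and the strict inequality defining calibration; the only other point to verify is that non-redundancy does guarantee the pure cell $\{p':\gamma(p')=\{r^*\}\}$ is nonempty, which holds by definition.
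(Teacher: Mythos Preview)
Your proposal is correct and follows essentially the same route as the paper: sufficiency via Theorem~\ref{thm:SIE-sufficient} after noting strong convexity yields Assumption~\ref{assumption 1}, and necessity by establishing that $p'\mapsto\Gamma(p')$ is single-valued and continuous, then using a pure-cell sequence $p_t\to p$ (with $\Gamma(p_t)\to u_0$) to contradict calibration at $q$. The only minor streamlining is that you bypass the paper's auxiliary reduction to the case $\gamma(p_m)\subsetneq\gamma(p)$ (Lemma~\ref{lemma: maximal_dist_nsIE}) by working directly with any $r^*\in\gamma(p)\setminus\gamma(q)$, and your explicit segment $[\bar p,p)$ is a cleaner way to produce the approximating sequence than the paper's appeal to polytope geometry.
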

\vspace{-1em}
\begin{proof}[Proof sketch:]
 Strong convexity and differentiability together imply Assumption \ref{assumption 1}. 
The sufficiency of strong IE thus follows by Theorem \ref{thm:SIE-sufficient}.
For necessity, we show that violating strong IE implies violating calibration.
If IE is violated, calibration is violated immediately.
So let us assume we have IE but not strong IE. 
We show that under strong convexity, $\Gamma$ is continuous and single-valued (see Lemma \ref{lemma: argmin_cts} for a proof). 
Next, we show the existence of a report $u \in \mathbb{R}^{d}$ and a pair of distributions $p, q \in \Gamma_{u}$, such that $\gamma(p) \subset \gamma(q)$ (see Lemma \ref{lemma: maximal_dist_nsIE}). Thus, $\exists r\in \mathcal{R}: r \in \gamma(q)$, however, $r \notin \gamma(p)$. We then show that there exists a sequence of reports $q_t \to q$, such that $\gamma(q_t) = r$. As $\Gamma$ is single-valued there exists $u_t = \Gamma(q_t), \forall t$. By continuity of $\Gamma$, $q_t \to q \implies \Gamma(q_t) \to \Gamma(q) \iff u_t \to u $. Further, by the continuity of $\langle p, L(\cdot) \rangle, \langle
p, L(u_t) \rangle \to \langle
p, L(u) \rangle = \inf _{v \in \reals ^{d}} \langle p, L(v) \rangle$ since $p \in \Gamma_{u}$. However, since $\gamma(q_{t}) = \{r\}, \psi(u_t) = {r}$ necessarily. At the same time, $r \notin \gamma(p)$. Hence, calibration is violated at $p$. See Theorem \ref{sIEC_nec} in Appendix \ref{appendix:SIE-equivalence} for a full proof. 
\end{proof}
\vspace{-1em}

\section{Applications}
\label{sec:applications}

As IE and strong IE are easier to verify than calibration (Figure~\ref{fig:CalvsIE}), our main results above lead to improved analytical methods to analyze and design consistent surrogates, which we now demonstrate.

\subsection{Ease of verification}
\label{section: ease_of_verification}
IE and strong IE are both completely characterized by the relation of optimal surrogate reports to optimal target reports.
Importantly, neither condition requires analyzing sequences converging to optimal reports.
Thus both conditions are strictly simpler to verify than calibration. 
While strong IE is a more stringent requirement than IE, checking strong IE is just as easy as checking IE at the individual-report level (see Proposition \ref{prop: IE and SIE} in Appendix \ref{appendix: elicitation}).

We now present two examples illustrating how concluding calibration via IE or strong IE can significantly simplify the analysis:  whereas direct calibration proofs require characterizing minimizers and analyzing nearby sequences, establishing IE or strong IE only requires reasoning about the minimizers themselves. (see also Figure \ref{fig:CalvsIE} for visual intuition)

\begin{example}[Universally calibrated surrogate] 
    \label{example:proof 1}
    Lemma 11 of \cite{ramaswamy2016convex} proposes a $n-1$-dimensional, strongly convex, differentiable surrogate that is calibrated for all discrete targets.
    After the first claim in their proof (\href{https://arxiv.org/pdf/1408.2764#page=29}{see pages 29-30} \cite{ramaswamy2016convex}):
    \begin{tcolorbox}[
  enhanced,
  colback=lightlavender,
  colframe=black,
  arc=0mm,
  boxrule=1pt,
  left=10pt,
  right=10pt,
  top=3pt,
  bottom=3pt,
  sharp corners=all,
  title={\textbf{Proof via strong IE}},
  coltitle=white,
  colbacktitle=black,
]
Fix $p \in \Delta_{n}$.
Minimizing $\inprod{p}{L(u)} = \sum_{j=1}^{n-1}\big( p_j (u_j - 1 ) ^2 + (1-p_j) u_j^2 \big)$ yields the unique minimizer $u^{*} = (p_1, \dots, p_{n-1})^{\top}$. Hence $|\Gamma(p)| = 1$ and $\Gamma_{u} = \{p\}$. Immediately, $L$ satisfies strong IE, and thus $L$ is calibrated by Theorem \ref{thm:SIE-sufficient}.
\end{tcolorbox}
    
     Our approach shortens the proof from an entire page to a few lines. 
 We also obviate the need for subtle arguments regarding the convergence of sequences that were required in the original proof.

    \end{example}
\vspace{1em}
\begin{example}[Subset-ranking surrogates] 
    \label{example:proof 2}
    Theorem 3 of \citet{ramaswamy2013convex} 
proposes a low-dimensional calibrated surrogate for subset-ranking targets common in information retrieval.
Our results significantly shorten their calibration proof (\href{https://papers.nips.cc/paper_files/paper/2013/file/a5cdd4aa0048b187f7182f1b9ce7a6a7-Paper.pdf#page=3}{see pages 3-4}, \citet{ramaswamy2013convex}):
\begin{tcolorbox}[
  enhanced,
  colback=lightlavender,
  colframe=black,
  arc=0mm,
  boxrule=1pt,
  left=10pt,
  right=10pt,
  top=3pt,
  bottom=3pt,
  sharp corners=all,
  title={\textbf{Proof via strong IE}},
  coltitle=white,
  colbacktitle=black,
]
The surrogate is strongly convex and differentiable, so strong IE suffices for calibration. Pick any $\mathbf{u} \in \mathbb{R}^{d}$ and any $\mathbf{p}, \mathbf{q} \in \Gamma_{\mathbf{u}}$. To prove strong IE, it suffices to show that $\gamma(\mathbf{p}) = \gamma(\mathbf{q})$.  $\mathbf{u}$ is the unique minimizer for $\langle \mathbf{p}, L(\cdot)\rangle$ and $\langle \mathbf{q}, L(\cdot)\rangle \implies (*)\uline{\hspace{0.2em}\mathbf{u^{p}} = \mathbf{u^{q}} = \mathbf{u}}$. By line 1 of page $4$, $\mathbf{p}^{T}\boldsymbol{\ell}_{t}  = (\mathbf{u^{p}})^{\top}\boldsymbol{\beta}_{t} + c$. Similarly, $\mathbf{q}^{\top}\boldsymbol{\ell}_{t}  = (\mathbf{u^{q}})^{\top}\boldsymbol{\beta}_{t} + c$. By $(*), \mathbf{p}^{\top}\boldsymbol{\ell}_{t} = \mathbf{q}^{\top}\boldsymbol{\ell}_{t}$ for any $t \in \mathcal{T}$ (target reports). Thus, $\text{argmin}_{t \in \mathcal{T}} 
\mathbf{p}^{\top}\boldsymbol{\ell}_{t} = \text{argmin}_{t \in \mathcal{T}} 
\mathbf{q}^{\top}\boldsymbol{\ell}_{t}$. So, $\gamma(\mathbf{p}) = \gamma(\mathbf{q})$. 
\end{tcolorbox} 
This bypasses all subsequent proof steps (25 lines) following the first line of page $4$ wherein intricate reasoning to show all sequences converging to minimizer sets are appropriately linked.
    \end{example}

\subsection{Design of 1-dimensional surrogates}
\label{sec:1d-surrogate-design}
Example \ref{example:proof 1} demonstrates the existence of an $n-1$ dimensional surrogate that is calibrated for any target loss with $n$ outcomes.  However, the complexity of several optimization algorithms is often linear, or even quadratic in the domain dimension. Thus, a major research goal of the surrogate loss literature is the design of dimension-efficient surrogates (ideally $d << n-1$ for large $n$) when possible \citep{ramaswamy2012classification,  ramaswamy2013convex, ramaswamy2015consistent,  finocchiaro2019embedding, blondel2019structured, finocchiaro2021unifying}. Recall that Theorem \ref{thm:IEC-1d} established the equivalence between IE and calibration for $1$-d differentiable surrogates. This equivalence enables us to construct a $1$-d surrogate that is convex and differentiable, for any orderable target loss. We formalize this statement below in Theorem \ref{theorem: generalized-1d-construction} and provide a proof sketch that highlights the key ideas behind the construction. 

\begin{theorem}
    \label{theorem: generalized-1d-construction}
    Given an orderable target $\ell: \R \to \reals^{n}$, there exists a convex, differentiable surrogate $L: \reals \to \reals^{n}$ satisfying Assumption~\ref{assumption 1}, which is calibrated with respect to $\ell$.
\end{theorem}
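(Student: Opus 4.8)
The plan is to reduce the construction to Theorem~\ref{thm:IEC-1d}. By that theorem it suffices to exhibit a convex, differentiable $L:\reals\to\reals^{n}$ satisfying Assumption~\ref{assumption 1} that indirectly elicits $\ell$; calibration then follows automatically. For any such $L$, write $v:=L':\reals\to\reals^{n}$, so each $v_{y}$ is continuous and non-decreasing, and note that first-order optimality of the convex map $u\mapsto\langle p,L(u)\rangle$ (a minimizer of a convex $C^{1}$ function is a stationary point) gives the clean description $\Gamma_{u}=S(v(u))$, where $S(w):=\{p\in\Delta_n:\langle p,w\rangle=0\}$. The task thus reduces to producing a continuous, coordinate-wise non-decreasing curve $v:\reals\to\reals^{n}$ such that (i) for every $u$ the slice $S(v(u))$ is contained in a single target level set $\gamma_{r}$ (this is exactly indirect elicitation, Definition~\ref{definition: IE}), and (ii) each $v_{y}$ is eventually negative as $u\to-\infty$ and eventually positive as $u\to+\infty$, so that $L_{y}(u):=\int_{0}^{u}v_{y}$ is convex, $C^{1}$, coercive, and hence has a nonempty compact minimizer set, i.e.\ satisfies Assumption~\ref{assumption 1}.

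Next I would extract the geometry of the orderable target. Fix the enumeration $r_{1},\dots,r_{k}$ of Definition~\ref{defn: main_Orderable}, put $w_{j}:=\ell(r_{j})-\ell(r_{j+1})$ for $j\in[k-1]$, and adopt the end conventions $w_{0}:=\mathbf{1}$, $w_{k}:=-\mathbf{1}$. From non-redundancy and the structure theory of orderable properties \citep{lambert2011elicitation,finocchiaro2020embedding}: each $w_{j}$ ($1\le j\le k-1$) is nonzero and not a multiple of $\mathbf{1}$, $S(w_{j})$ is a genuine codimension-one slice and equals $\gamma_{r_{j}}\cap\gamma_{r_{j+1}}$, and for every $p\in\Delta_n$ the map $i\mapsto\langle p,\ell(r_{i})\rangle$ is unimodal (non-increasing then non-decreasing). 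Two consequences drive the construction. First, $\gamma_{r_{j}}=\Delta_n\cap\{\langle\cdot,w_{j-1}\rangle\ge0\}\cap\{\langle\cdot,w_{j}\rangle\le0\}$ (the end conventions render the relevant inequality vacuous at $j=1$ and $j=k$). Second, for the two-dimensional cone $C_{j}:=\{-c_{1}w_{j-1}-c_{2}w_{j}:c_{1},c_{2}\ge0\}$, every slice $S(v)$ with $v\in C_{j}$ lies inside the single cell $\gamma_{r_{j}}$: if $\langle p,v\rangle=0$ then $c_{1}\langle p,w_{j-1}\rangle+c_{2}\langle p,w_{j}\rangle=0$, and since unimodality forbids a strict interior maximum of $i\mapsto\langle p,\ell(r_{i})\rangle$ at $j$, we must have $\langle p,w_{j-1}\rangle\ge0\ge\langle p,w_{j}\rangle$, hence $p\in\gamma_{r_{j}}$.

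The construction then threads a single coordinate-wise non-decreasing curve through $C_{1},C_{2},\dots,C_{k}$ in order. The crucial step is a rescaling: replace each $w_{j}$ by $\lambda_{j}w_{j}$ for suitable $\lambda_{j}>0$ (which changes neither $S(w_{j})$ nor any $C_{j}$ as a set) so that $\lambda_{1}w_{1}\ge\lambda_{2}w_{2}\ge\dots\ge\lambda_{k-1}w_{k-1}$ coordinate-wise. Granting this, fix times $t_{0}<t_{1}<\dots<t_{k}$ and let $v$ be the continuous piecewise-linear curve that equals $-M\mathbf{1}$ on $(-\infty,t_{0}]$, ramps linearly to $-\lambda_{1}w_{1}$ on $[t_{0},t_{1}]$, interpolates linearly from $-\lambda_{j}w_{j}$ to $-\lambda_{j+1}w_{j+1}$ on $[t_{j},t_{j+1}]$ for $j=1,\dots,k-2$, ramps to $+M\mathbf{1}$ on $[t_{k-1},t_{k}]$, and equals $+M\mathbf{1}$ afterwards, with $M$ large. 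Each such piece lies in exactly one cone --- the $j$th interpolation segment in $C_{j+1}$, the first ramp in $C_{1}$, the last ramp in $C_{k}$ (here $+M\mathbf{1}=-Mw_{k}$) --- so by the previous paragraph every slice $S(v(u))$ sits in a single cell, giving (i) (the slices near $\pm M\mathbf{1}$ are empty, which is fine). Every increment of $v$ is coordinate-wise nonnegative (the interpolation steps by the rescaling inequality, the ramps because $M$ is large), so $v$ is coordinate-wise non-decreasing, and each $v_{y}$ runs from the negative value $-M$ to the positive value $+M$, giving (ii). Then $L_{y}(u):=\int_{0}^{u}v_{y}$ is the desired surrogate and Theorem~\ref{thm:IEC-1d} yields calibration. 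For ordinal regression the boundary normals are (up to scaling) the standard threshold vectors, which are already coordinate-wise non-increasing, so the construction applies with $\lambda_{j}\equiv1$ and produces a convex differentiable consistent surrogate directly.

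The main obstacle is the rescaling lemma: that the boundary normals $w_{1},\dots,w_{k-1}$ of a non-redundant orderable target admit positive rescalings making them coordinate-wise non-increasing. Unimodality gives, for each standard basis vector $e_{i}$, that the scalar sequence $((w_{j})_{i})_{j}$ is single-crossing from nonnegative to nonpositive; the lemma must upgrade these per-coordinate single-crossing facts to a \emph{single} simultaneous rescaling, and this is where non-redundancy is essential --- it rules out degenerate, lower-dimensional slices $S(w_{j})$, without which the claim can fail. If a clean direct proof proves awkward, a fallback is to start from a polyhedral $1$-d surrogate for the orderable target (whose existence is part of the embedding framework \citep{finocchiaro2020embedding}): its right-derivative is already a coordinate-wise non-decreasing step function whose successive values lie in the cones $C_{j}$, so replacing the jumps by short linear ramps and appending the two coercivity ramps above yields a differentiable $L$ for which the identical slice-in-cone bookkeeping establishes indirect elicitation, after which Theorem~\ref{thm:IEC-1d} again closes the argument.
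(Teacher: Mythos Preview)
Your high-level plan---build a coordinate-wise nondecreasing derivative curve $v$, integrate to get a convex $C^{1}$ surrogate, check that each slice $S(v(u))$ lands in a single target cell so that IE holds, and invoke Theorem~\ref{thm:IEC-1d}---is exactly what the paper does. The paper's proof is in fact your fallback: it simply cites \citet[Theorem 11]{finocchiaro2020embedding} to obtain a coordinate-wise monotone finite sequence $v(0),\dots,v(k)\in\reals^{n}$, piecewise-linearly interpolates (your ramps), and reads off IE from the sign pattern of $\langle p,v(\cdot)\rangle$. Your coercivity ramps to $\pm M\mathbf{1}$ are a nice explicit way to secure Assumption~\ref{assumption 1}, which the paper's sketch leaves implicit.

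Where your main route and the paper diverge is in how the monotone sequence is obtained. You attempt to construct it directly from the boundary normals $w_{j}=\ell(r_{j})-\ell(r_{j+1})$ and reduce IE to the cone inclusion $S(C_{j})\subseteq\gamma_{r_{j}}$; the ``rescaling lemma'' you flag as the main obstacle, together with the cell description $\gamma_{r_{j}}=\{\langle\cdot,w_{j-1}\rangle\ge0\}\cap\{\langle\cdot,w_{j}\rangle\le0\}\cap\Delta_{n}$ and the no-interior-maximum (unimodality) fact, are precisely the content of that cited embedding theorem. These statements are true for non-redundant orderable targets, but they do not fall out of Definition~\ref{defn: main_Orderable} in one line; in particular, your unimodality assertion (``for every $p$ the map $i\mapsto\langle p,\ell(r_{i})\rangle$ is unimodal'') is the nontrivial step, and establishing it from orderability alone essentially reproves the cited result. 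So your main path is morally correct but would need that external theorem (or an equivalent argument) anyway, at which point it collapses to your fallback---which is the paper's proof.
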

\vspace{-1em}
\begin{proof}[Proof sketch:]
Since $\gamma := \text{prop}[\ell]$ is orderable, let $v(0), \dots, v(k) \in \reals^n$ be the coordinate-wise monotone sequence given by \citet[Theorem 11]{finocchiaro2020embedding}.
We can then continuously extend $v$ from $\{0, \dots, k\}$ to $\reals$ by linear interpolation: for $x \in [i, i+1]$, let $v(x) = (1 - a)v(i) + a v(i+1)$, where $a = x - i$. 
Then the surrogate $L(x) := \int_0^x v(a) da$ is convex and differentiable, since $\nabla L = v$, which by construction is monotone.

We define the link function to be $\psi(x) := \lfloor x \rfloor$.
It follows from the definition of $\gamma$ that for any distribution $p$, if $\langle p, v(i) \rangle \le 0 \le \langle p, v(i+1) \rangle$.
Then, for some $a \in [0,1]$, $x = i + a$  satisfies $\langle p, v(x) \rangle = 0$. 
Thus, the report $i = \psi(x) \in \gamma(p)$, so $L$ indirectly elicits $\ell$. 
\end{proof}
\vspace{-1em}

As an application of Theorem \ref{theorem: generalized-1d-construction}, we present a novel surrogate for the ordinal regression target loss in Example \ref{example: ordinal-surrogate}. While previous works have proposed surrogates for ordinal regression \citep{ramaswamy2016convex, pedregosa2017consistency, finocchiaro2019embedding}, none of the surrogates therein are simultaneously convex, differentiable, minimizable and $1$-dimensional.

\begin{figure}[t]
    \begin{center}
    \includegraphics[width=0.3\linewidth]{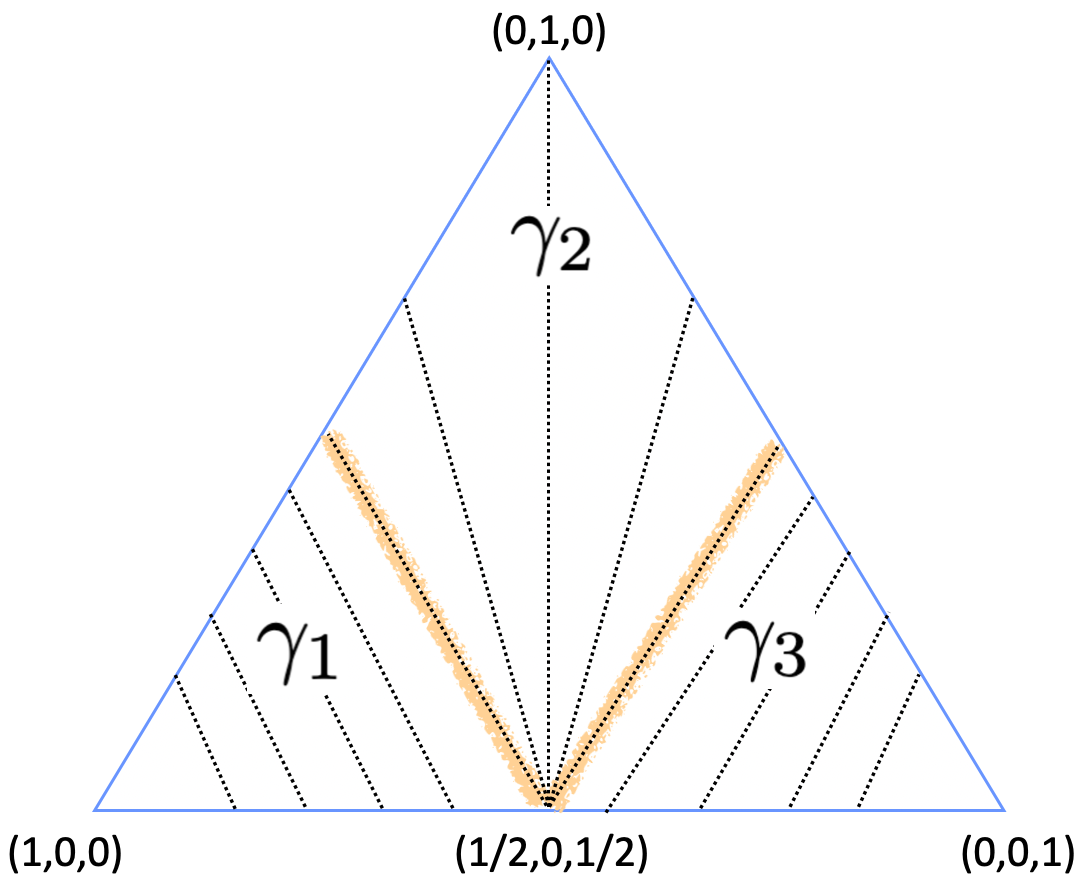}
    \end{center}
    \caption{Let $\Y = \R = \{1,2,3\}$. The \emph{solid-peach colored lines} depict the target boundaries elicited by the ordinal regression loss $\ell^{\text{ord}}: \R \to \reals^{3}$. The \emph{dotted-blue lines} depict the level-sets of the surrogate $L_{H}: \reals \to \reals^{3}$ defined in Example \ref{example: ordinal-surrogate}. Since no level-set of $L_{H}: \reals \to \reals^{3}$ crosses from one target cell to another, IE holds. By Theorem \ref{thm:IEC-1d} calibration follows. \vspace{-1em}}
    \label{fig:ordinal}
\end{figure}

\begin{example}[Huber-like surrogate for ordinal regression]
\label{example: ordinal-surrogate}
    Here $\Y = \R = \{1, 2, 3\}$. Predictions farther away from the true outcome are more heavily penalized. The $3$-class ordinal regression loss is $\ell^{\text{ord}}(y, r) := |y-r|$, for $y, r \in \{1,2,3\}$. Then an application of Theorem \ref{theorem: generalized-1d-construction} yields the surrogate 
    $$L_{\text{H}}: \mathbb{R} \to \mathbb{R}^{3}; L_{\text{H}}(x)= [f(x-2), h(x), f(x+2)],$$
where $h(x) = \frac{x^{2}}{2}$ and $f(x) = \frac{x^2}{2}$ for $-1 \leq x \leq 1$ and $f(x) = |x| - 0.5$ otherwise. 

$L_{\text{H}}$ indirectly elicits $\ell^{\text{ord}}$ and is therefore calibrated with respect to it. Figure \ref{fig:ordinal} depicts the target (peach colored lines) and surrogate (blue dotted lines) level-sets for $\ell^{\text{ord}}$ and $L_{\text{H}}$. The target $\ell^{\text{ord}}$ poses a non-trivial challenge. In particular, for a $1$-d convex, differentiable surrogate $L$ to indirectly elicit $\ell^{\text{ord}}$, it must admit a non-unique minimizer at $(0,1/2,0)$ since the two target-boundaries intersect at this point. On the other hand, the minimizers elsewhere must be unique. $L_{\text{H}}$ is precisely such a minimizer. For $L_{\text{H}}$, $\Gamma((1/2,0,1/2)) = [-1,1]$, whereas for every other $p \in \Delta_{3}$, $|\Gamma(p)| = 1$. 

\end{example}

\section{Discussion and Future Directions}
\label{sec:discussion}
Our results are the first to establish general calibration conditions on the widely used class of convex differentiable surrogate losses in relation to arbitrary discrete target losses. 
We anticipate that the generality of our results will aid further advances in application and theory. 
Our conditions are inspired by the equivalence of IE and calibration for polyhedral surrogates.
Like IE, strong IE is substantially easier to verify than checking calibration directly.
Hence, strong IE for differentiable losses could play a similar role to IE for polyhedral losses, where IE has been used to establish convex calibration dimension bounds \citep{ramaswamy2016convex} and to design and analyze numerous surrogates \citep{finocchiaro2022structured, thilagar2022topk, wang2020weston, nueve2024trading}. Indeed, we already make first steps in regards to design, by proposing a generalized construction for designing differentiable $1$-dimensional surrogates for orderable targets.   

\paragraph{Lower bounds.} A promising direction for future work is to use strong IE to study prediction dimension. 
We believe it can establish lower bounds on the prediction dimension of calibrated surrogates for important target losses.
\cite{finocchiaro2021unifying} leverage IE as a tool to establish such lower bounds. Recall that strong IE is necessary for calibration for the class of strongly convex, differentiable surrogate. At the same time, strong IE imposes more stringent constraints on surrogates than IE. 
We therefore believe strong IE offers promise to establish novel  lower bounds in this setting. 

\paragraph{Relaxing  Assumption \ref{assumption 1}.} Theorems $\ref{thm:IEC-1d}$ and \ref{thm:SIE-sufficient} assume that $\argmin_{u\in \reals^{d}}L(\cdot)_{y}$ is non-empty and compact for each $y \in [n]$. Theorem \ref{theorem: compact_nonempty_easy} in Appendix \ref{appendix: cvx_general} shows that Assumption \ref{assumption 1} is equivalent to the condition that $\Gamma(p)$ is non-empty and compact for every $p \in \Delta_{n}$. The non-emptiness, i.e., minimizability of the functions $\{\langle p, L(\cdot) \rangle | p \in \Delta_{n}\}$ is mathematically well-motivated. Indeed, if minimizability fails for some distribution $p \in \Delta_{n}$, then $\Gamma(p)$ is empty. 
In this case, checking calibration at $p$ necessitates analyzing sequences of form $\{u_t\}_{t \in \mathbb{N}_{+}}$ such that $\lim_{t \to \infty}\langle p, L(u_t)\rangle = \inf_{u \in \reals^{d}}\langle p, L(u)\rangle$.
Thus, while understanding calibration for non-minimizable losses is an important and interesting direction in its own right, IE and strong IE are not the appropriate tools to do so.
We speculate instead that the recently developed theory on astral spaces \citep{dudik2022convex} can be leveraged for this direction.
Our assumption of compactness on $\Gamma(p)$ is technical and necessary for our proof approach, but may not be strictly necessary for strong IE to yield calibration. Differentiable surrogates with unbounded (and thus non-compact) minimizers are common in practice (for example: the squared hinge loss, the modified Huber loss, etc.) Relaxing this assumption is therefore a valuable direction for future research and could potentially enhance the practical appeal of strong IE.

\begin{ack}
We thank Mabel Cluff for early discussions and insights on this project. We thank Stephen Becker for and Enrique Nueve for other suggestions. This material is based upon work supported by the National Science Foundation under Grant No. IIS-2045347.
\end{ack}

\bibliographystyle{abbrvnat}
\bibliography{references}

\newpage
\appendix

\section{Examples} \label{appendix:examples}

\begin{figure}[H]
\begin{minipage}{0.4\textwidth}
    \begin{tikzpicture}[scale=0.6]
        \begin{axis}[
                xlabel={$L_{+1}(u)$},
                ylabel={$L_{-1}(u)$},   
                xmin=0,xmax=4,
                ymin=0,ymax=4,
                grid=none,
                ]
                \addplot [domain=-5:5,samples=250, color = blue]({(1-x)^2 + abs(x)},{(1+x)^2 + abs(x)}); 
                \node[label={270:
                {$v_\bot$}},circle,fill,inner sep=1.2pt] at (axis cs:1,1) {};
                \addplot [domain=-1:1,samples=4, color = brown]({0.5 + x}, {0.5 - x});
                \addplot[->, color=brown] coordinates
                {(0,0) (0.5,0.5)}
                node[right,pos=1] {\tiny $p=[0.5,0.5]^\top$};
        \end{axis}
    \end{tikzpicture}
\end{minipage}%
\hfill
\begin{minipage}{0.4\textwidth}
    \begin{tikzpicture}[scale=0.6]
    \begin{axis}[
            xlabel={$L_{+1}(u)$},
            ylabel={$L_{-1}(u)$},   
            xmin=0,xmax=4,
            ymin=0,ymax=4,
            grid=none,
            legend pos=north east,
            legend style={nodes={scale=0.8, transform shape}},
        ]
        \addplot [domain=-5:5,samples=250, color = blue]({(1-x)^2 + abs(x)},{(1+x)^2 + abs(x)}); 
        \addlegendentry{$(1-uy)^2 + |u|$}
        
        \addplot [domain=-5:5, samples=500, color=black, thick] ({(1-x)^2},{(1+x)^2});
        \addlegendentry{$(1-uy)^2$}
        
        \node[label={270:{$v_\bot$}},circle,fill,inner sep=1.2pt] at (axis cs:1,1) {};
    \end{axis}
\end{tikzpicture}
    \end{minipage}
\caption{
The figure on the left plots the superprediction set, $\{L(u): u \in \reals_+\}$, of a surrogate loss that IEs but is not calibrated.
To see non-calibration, notice that there is only one possible link. 
Fix $p = [0.5, 0.5]^\top$.
The optimal loss is achieved by $\arginf_{v \in \{L(u): u \in \reals_+\}} \inprod{v}{p}$.
Thus the loss is optimized by $v_\bot$ which must link to the abstain report $\bot$.
However, consider the points to the left of $v_\bot$, which link to $+1$.
The infimum of the loss over these points for $p$ is equal to the loss of $v_\bot$, thus violating calibration.}
\label{fig:cusp-superpred}
\end{figure}
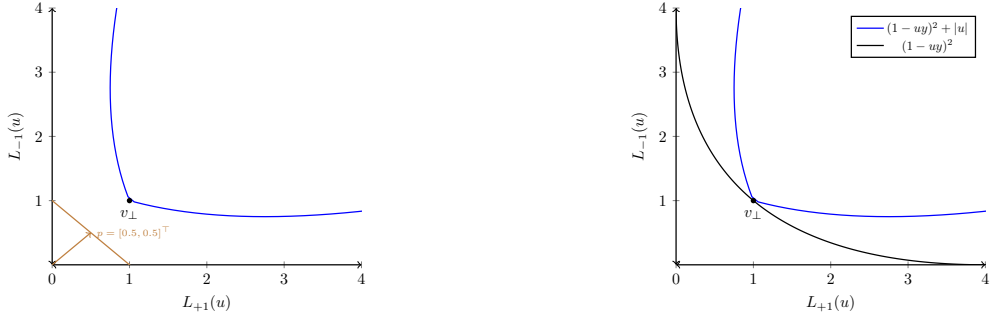

\begin{example}
\label{ex:cusp-smoothed}
    Let the target loss be binary classification with abstain level $\frac{1}{4}$, $L: \reals \to \reals^2$.

    \begin{equation}
    L(u)_y := (1-uy)^2.
    \end{equation}
    \\
    This is the smooth loss plotted on the right side of Figure \ref{fig:cusp-superpred}.
    For any $p \in [0,1]$, $\Gamma(p) = 2p-1$.
    Hence $\Gamma_u = \{\frac{u+1}{2}\}$.
    To show calibration, notice that $p$ is in the interior of the set $\gamma_{\gamma(p)}$.
    Then for some $\epsilon > 0$, $B_{\epsilon} (p) \subseteq \gamma_{\gamma(p)}$.
    Intuitively, since there is a bijection between surrogate reports and distributions, the surrogate reports that are optimal for $B_{\epsilon} (p)$ will `protect' (in loss value) the optimal report for $p$ from any $u$ outside of $B_{\epsilon} (p)$.
    
\begin{align*}
    \inf_{u: \psi(u) \notin \gamma_{\gamma(p)}} \inprod{L(u)}{p} &> \inf_{u \in \Gamma(B_{\epsilon} (p))}\inprod{L(u)}{p} \\
    &= \inprod{L(u^*)}{p} \\
    &= \inf_u \inprod{L(u)}{p}.
\end{align*}

Where the first inequality is by strict convexity.

\end{example}

\begin{example}[Convex Combinations of Huber losses]
    \label{example:huber}
    For $u \in \reals^d$, let 
    \[ f_H(u) = 
        \begin{cases}
            \|u\|_2 - \frac{1}{2} & \|u\|_2 \geq 1 \\
            \frac{1}{2} \|u\|_2^2 & \|u\|_2 \leq 1
        \end{cases}
    \]
    be the Huber loss in $\reals^d$. 
    Define
    \[
        L_H(u) = 
        \begin{pmatrix}
            f_H(u + (2, 0, \dots, 0)) \\
            f_H(u - (2, 0, \dots, 0))
        \end{pmatrix}
    \]
    be the sum of two Huber losses.
    Then, for $p = 1/2$, $\Gamma(p) = [-1, 1] \times \{0\}^{d-1}$.
    For $p > 1/2$, $\Gamma(p)$ is a single point in $(1, 2]\times \{0\}^{d-1}$.
    For $p < 1/2$, $\Gamma(p)$ is a single point in $[-2, -1)\times \{0\}^{d-1}$.
    We can see this visually for $d = 1$ in Figure~\ref{fig:huber}.
\end{example}

\begin{figure}[H]
    \centering
    \includegraphics[width=0.25\linewidth]{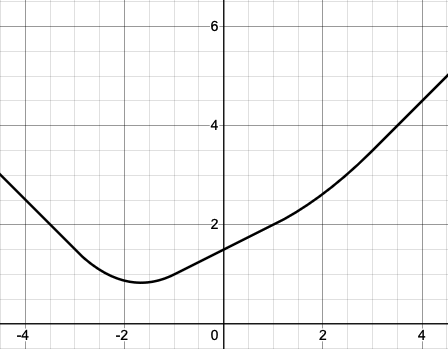}
    \hfill
    \includegraphics[width=0.25\linewidth]{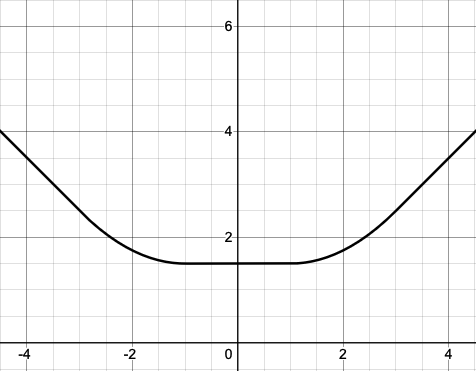}
    \hfill
    \includegraphics[width=0.25\linewidth]{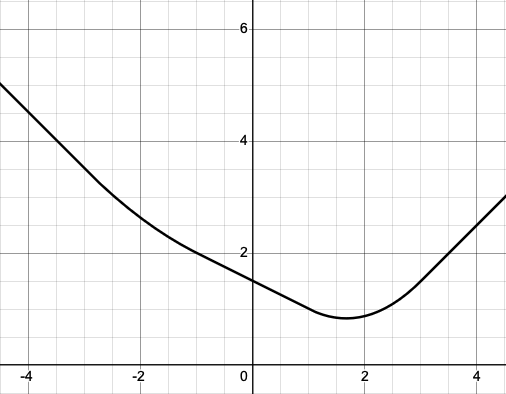}
    \caption{
        The plot of $\inprod{p}{L_H(u)}$ for 3 different values of $p$ with $d = 1$.
        In higher dimensions, the loss is minimized by exactly the same points, since by construction the minima will always lie on the $u_1$-axis.
        \textbf{Left:} $p = 1/4$, the $\inprod{p}{L_H(u)}$ has a unique minimum at $u = -5/3$.
        \textbf{Center:} $p = 1/2$, the $\inprod{p}{L_H(u)}$ is minimized by any choice of $u \in [-1, 1]$.
        \textbf{Left:} $p = 3/4$, the $\inprod{p}{L_H(u)}$ has a unique minimum at $u = 5/3$.
    }
    \label{fig:huber}
\end{figure}

\begin{lemma}\label{lemma: quad_rank}
    $\rank(\nabla \Lce(u))$ is 1 when $u = (0, 0)$ and 2 otherwise.
\end{lemma}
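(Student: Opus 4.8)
The plan is a direct computation. First I would write out the Jacobian $\nabla\Lce(u)$ by differentiating each of the three components: the rows of $\nabla\Lce(u)$ are $(2u_1+1,\ 2u_2+2)$, $(4u_1+1,\ 4u_2+2)$, and $(6u_1-1,\ 2u_2-2)$. This is a $3\times 2$ matrix, so $\rank(\nabla\Lce(u))\le 2$ always, and the rank drops below $2$ precisely when its two columns are linearly dependent, i.e.\ when all three $2\times 2$ minors vanish simultaneously. Thus the lemma reduces to showing that these three minors have $(0,0)$ as their unique common zero, together with the easy observation that the matrix is nonzero there.

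Next I would dispatch the point $u=(0,0)$: the Jacobian is then the matrix with rows $(1,2),(1,2),(-1,-2)$, each a scalar multiple of $(1,2)\neq 0$, so its rank is exactly $1$. For the remaining claim I would compute the three $2\times2$ minors of $\nabla\Lce(u)$ and simplify. The minor on rows $1,2$ reduces to $2u_2-4u_1$; if the rank is less than $2$ this must vanish, forcing $u_2=2u_1$. Substituting $u_2=2u_1$ into the minor on rows $1,3$ yields a multiple of $u_1(2u_1+1)$, and into the minor on rows $2,3$ a multiple of $u_1(4u_1+1)$. The linear factors $2u_1+1$ and $4u_1+1$ share no root, so the only common zero of all three minors along the line $u_2=2u_1$ is $u_1=0$, hence $u=(0,0)$. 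Therefore $\rank(\nabla\Lce(u))=2$ for every $u\neq(0,0)$ and $\rank(\nabla\Lce(0,0))=1$.

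The argument is essentially bookkeeping, so there is no serious obstacle; the one point worth care is that a single minor equation after the substitution $u_2=2u_1$ is not enough to isolate $(0,0)$ — for instance the rows-$1,2$ and rows-$1,3$ minors both vanish at the spurious point $(-1/2,-1)$ — so one genuinely needs to combine two of the three minor conditions (or, equivalently, verify the third minor is nonzero at that point) to conclude. Once the two cofactor expressions are written in factored form this is immediate.
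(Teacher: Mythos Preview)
Your proposal is correct and follows essentially the same route as the paper: compute the $3\times 2$ Jacobian explicitly and determine when its two columns are linearly dependent. The only cosmetic difference is that the paper phrases the dependence as $\lambda v_1=v_2$ and solves for $\lambda$ (finding $\lambda=2$, hence $u_2=2u_1$, then $u_1=0$ from the third row), whereas you phrase it as the simultaneous vanishing of the three $2\times 2$ minors; the algebra and the conclusion are the same.
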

\begin{proof}
    \[
        \nabla \Lce(u) = 
        \begin{pmatrix}
            2u_1 + 1 & 2u_2 + 2 \\
            4u_1 + 1 & 4u_2 + 2 \\
            6u_1 - 1 & 2u_2 - 2
        \end{pmatrix} ~.
    \]
    Let $v_1$ and $v_2$ denote the first and second column of $\nabla \Lce(u)$ respectively. 
    $\rank(\nabla \Lce(u)) = 0$ if and only if $v_1 = v_2 = 0$. 
    However, there is no choice of $u_1$ or $u_2$ such that either $v_1$ or $v_2$ are 0, so $\rank(\nabla \Lce(u)) \geq 1$ everywhere. 

    $\rank(\nabla \Lce(u)) = 1$ if and only if $\lambda v_1 = v_2$ for some $\lambda \in \reals$. 
    For this to hold for the first row of $\nabla \Lce(u)$ we must have $2 \lambda u_1 + \lambda = 2u_2 + 2$, so $u_2 = \lambda u_1 + \frac{\lambda}{2} - 1$.
    Similarly, using the second row of $\nabla \Lce(u)$ implies $u_2 = \lambda u_1 + \frac{\lambda}{4} - \frac{1}{2}$.
    This gives two equivalent expressions for $u_2$, so we must have $\frac{\lambda}{2} - 1 = \frac{\lambda}{4} - \frac{1}{2}$, so $\lambda = 2$.
    Plugging this into either of the expressions for $u_2$ yields $u_2 = 2u_1$.
    Finally, using these values of $\lambda$ and $u_2$ the last row of $\nabla \Lce(u)$ becomes $12u_1 - 2 = 4u_1 - 2$, so $u_1 = 0$, and thus $u_2 = 2u_1 = 0$ as well.
    Therefore, $\rank(\nabla \Lce(u)) = 1$ only when $u = (0, 0)$.

    For any other value of $u$, we then have $\rank(\nabla \Lce(u)) \geq 2$, but since $u \in \reals^2$, $\rank(\nabla \Lce(u)) \leq 2$ everywhere.
    Therefore, for all $u \neq (0, 0)$, $\rank(\nabla \Lce(u)) = 2$.
\end{proof}

\section{Property Elicitation, Level Sets and Minimizing Sets}\label{appendix: elicitation}

\begin{lemma} \label{lemma: Gamma subset}
    Let $A, B \subseteq \Delta_{n}: A \subset B$. Then $\Gamma(A) \subseteq \Gamma (B)$
\end{lemma}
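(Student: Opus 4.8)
The statement to prove is: if $A, B \subseteq \Delta_n$ with $A \subset B$, then $\Gamma(A) \subseteq \Gamma(B)$.

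The plan is to unpack the definition of $\Gamma$ applied to a set of distributions. Following the notation established around Definition~\ref{defn: Gamma}, for a subset $C \subseteq \Delta_n$ the natural reading is $\Gamma(C) := \bigcup_{p \in C} \Gamma(p) = \{u \in \reals^d : u \in \Gamma(p) \text{ for some } p \in C\}$; equivalently $\Gamma(C) = \{\Gamma(p) : p \in C\}$ collected as a union of level sets.

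First I would fix an arbitrary element $u \in \Gamma(A)$. By the definition of $\Gamma$ applied to the set $A$, there exists some $p \in A$ with $u \in \Gamma(p)$. Since $A \subset B$, we have $p \in B$ as well. Hence $u \in \Gamma(p)$ with $p \in B$, so by the same definition $u \in \Gamma(B)$. Since $u$ was arbitrary, $\Gamma(A) \subseteq \Gamma(B)$, which is exactly the claim.

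This lemma is essentially immediate from the monotonicity of the union operation with respect to set inclusion, so there is no real obstacle; the only subtlety is being careful about which of the (several) set-valued notations is in play, i.e., confirming that $\Gamma(\cdot)$ extended to subsets of $\Delta_n$ is defined as a union over pointwise images. Once that is pinned down, the argument is a one-line containment chase. (The same reasoning would show $\gamma(A) \subseteq \gamma(B)$ for the target property, and indeed $\Gamma_u$-bundles inherit the analogous monotonicity.)
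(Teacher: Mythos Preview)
Your proof is correct and essentially identical to the paper's: both arguments reduce to the monotonicity of unions under set inclusion, with the paper phrasing it as $\Gamma(a) \subseteq \bigcup_{b \in B}\Gamma(b)$ for each $a \in A$ and you phrasing it as an element chase on $u \in \Gamma(A)$. Your care in pinning down that $\Gamma(C) = \bigcup_{p \in C}\Gamma(p)$ is well placed, and matches the paper's convention (cf.\ the definition of the image of a correspondence in Appendix~\ref{appendix:correspondences}).
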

\begin{proof}
    Since $A \subseteq B$, we have that $\forall a \in A, a \in B$. So, $\forall a \in A, \Gamma(a) \subseteq \cup_{b \in B} \Gamma(b) = \Gamma(B)$. Thus, $\Gamma(A) = \cup_{a \in A}\Gamma(a) \subseteq 
    \Gamma(B)$
\end{proof}

\begin{lemma}\label{lemma: surrogate_opt_condition_1}
    Consider any convex, differentiable $L: \mathbb{R}^{d} \to \mathbb{R}^{n}$. 
    Let $p \in \Delta_{n}$ and $u \in \mathbb{R}^{d}$. 
    Then, $u \in \Gamma(p) \iff \nabla L(u)^\top p = \mathbf{0}_{d}$. Equivalently, $\Gamma_{u} = \{p \subseteq \Delta_{n} | \nabla L(u)^{\top}p = \mathbf{0}_{d}\}$. 
\end{lemma}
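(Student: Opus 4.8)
The statement to prove is that for a convex, differentiable $L$, a point $u$ minimizes $\langle p, L(\cdot)\rangle$ if and only if $\nabla L(u)^\top p = \mathbf{0}_d$. The plan is to observe that the function $g_p(u) := \langle p, L(u)\rangle = \sum_{y=1}^n p_y L(u)_y$ is itself convex and differentiable: it is a nonnegative combination (since $p \in \Delta_n$) of the convex differentiable functions $L(\cdot)_y$, and finite sums and nonnegative scalings preserve both convexity and differentiability. Its gradient is $\nabla g_p(u) = \sum_{y=1}^n p_y \nabla L(u)_y = \nabla L(u)^\top p$, where $\nabla L(u)$ is the $d \times n$ matrix whose $y$-th column is $\nabla L(u)_y$ (or, depending on the paper's convention, the $n \times d$ Jacobian, in which case the product is written $\nabla L(u)^\top p$ as stated).

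The core argument is the standard first-order optimality characterization for convex differentiable functions: for such a function $g_p$, $u$ is a global minimizer if and only if $\nabla g_p(u) = 0$. The ``only if'' direction is just Fermat's rule (stationarity is necessary for an interior local minimum, and on all of $\reals^d$ every point is interior). The ``if'' direction uses convexity: if $\nabla g_p(u) = 0$, then for any $v$, the first-order convexity inequality $g_p(v) \ge g_p(u) + \langle \nabla g_p(u), v - u\rangle = g_p(u)$ shows $u$ is a global minimizer. Combining with $\nabla g_p(u) = \nabla L(u)^\top p$ gives the claim. The final sentence of the statement, $\Gamma_u = \{p \in \Delta_n \mid \nabla L(u)^\top p = \mathbf{0}_d\}$, is then immediate by unwinding Definition~\ref{defn: Gamma}: $p \in \Gamma_u \iff u \in \Gamma(p) \iff u$ minimizes $\langle p, L(\cdot)\rangle \iff \nabla L(u)^\top p = \mathbf{0}_d$.

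There is essentially no obstacle here — this is a routine invocation of convex analysis fundamentals. The only point requiring a word of care is making sure the convex combination $g_p$ is genuinely differentiable (true, as a finite linear combination of differentiable functions) and that the gradient-exchange $\nabla \sum_y p_y L(u)_y = \sum_y p_y \nabla L(u)_y$ holds (true for finite sums, no dominated-convergence subtlety needed). One should also note that no appeal to Assumption~\ref{assumption 1} is needed for the equivalence itself; compactness/nonemptiness of the argmin is irrelevant to the characterization of which points, if any, are minimizers.
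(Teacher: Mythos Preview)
Your proposal is correct and takes essentially the same approach as the paper: both observe that $\langle p, L(\cdot)\rangle$ is convex and differentiable as a nonnegative combination of the $L(\cdot)_y$, then invoke the standard first-order optimality characterization (Fermat's rule for necessity, the first-order convexity inequality for sufficiency) and identify the gradient with $\nabla L(u)^\top p$. If anything, your write-up is slightly more careful than the paper's in explicitly noting that differentiability of the sum is immediate and that Assumption~\ref{assumption 1} plays no role here.
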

\begin{proof}
    Since $L_{y}$ is convex for every $y \in [n]$, it follows that the function $\langle p, L(\cdot) \rangle: \mathbb{R}^{d} \to \mathbb{R}$ is convex for any $p \in \Delta_{n}$.
    Now, since the domain of $\langle p, L(\cdot) \rangle$ is open and the minimum is attained at some $u \in \mathbb{R}^{d}$, it follows that $\nabla \langle p, L(u) \rangle = \mathbf{0}_{d} \implies \nabla L(u) ^\top p = \mathbf{0}_{d}$. Conversely, if $\nabla \langle p, L(u) \rangle = \mathbf{0}_{d}$, then $u \in \Gamma(p)$ by convexity of $\langle p, L(\cdot) \rangle$. 
\end{proof}

\begin{lemma}\label{lemma: jacobian_rank_positive}
    Any convex, differentiable surrogate $L: \mathbb{R}^{d} \to \mathbb{R}^{n}$ that indirectly elicits a target $\ell: \mathcal{R} \to \reals^{n}$ satisfies $\emph{rank}(\nabla L (u)^{\top}) > 0$ for every $u \in \mathbb{R}^{d}$. 
\end{lemma}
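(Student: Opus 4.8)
The plan is to argue by contradiction, using the characterization of surrogate level sets in Lemma~\ref{lemma: surrogate_opt_condition_1} together with non-redundancy of the target. Suppose that $\rank(\nabla L(u)^{\top}) = 0$ for some $u \in \reals^{d}$. Then $\nabla L(u)^{\top}$ is the $d \times n$ zero matrix, so trivially $\nabla L(u)^{\top} p = \mathbf{0}_{d}$ for \emph{every} $p \in \Delta_{n}$. By Lemma~\ref{lemma: surrogate_opt_condition_1}, this means $p \in \Gamma(u)$ for every $p \in \Delta_{n}$, i.e., $\Gamma_{u} = \Delta_{n}$.

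Next I would invoke indirect elicitation. Since $L$ indirectly elicits $\ell$, there exists $r \in \R$ with $\Gamma_{u} \subseteq \gamma_{r}$. Combined with $\Gamma_{u} = \Delta_{n}$, this forces $\gamma_{r} = \Delta_{n}$, i.e., $r \in \gamma(p)$ for all $p \in \Delta_{n}$ — the report $r$ is simultaneously optimal for every outcome distribution.

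Finally I would derive a contradiction from non-redundancy. Because $k = |\R| \geq 2$, pick any $r' \in \R$ with $r' \neq r$. Non-redundancy gives a distribution $p \in \Delta_{n}$ with $\argmin_{r'' \in \R}\langle p, \ell(r'')\rangle = \{r'\}$, so $\gamma(p) = \{r'\}$, which does not contain $r$. This contradicts $r \in \gamma(p)$ for all $p$. Hence no such $u$ exists, and $\rank(\nabla L(u)^{\top}) > 0$ everywhere. I do not expect any real obstacle here; the only point requiring a moment of care is the translation between ``$\nabla L(u)^{\top}$ has rank $0$'' and ``$\nabla L(u)^{\top} p = \mathbf 0_d$ for all $p$,'' which is immediate since rank $0$ means the matrix is identically zero.
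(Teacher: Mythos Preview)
Your proposal is correct and follows essentially the same approach as the paper: both argue by contradiction, use Lemma~\ref{lemma: surrogate_opt_condition_1} to conclude $\Gamma_{u} = \Delta_{n}$ when the Jacobian has rank zero, and then obtain a contradiction from non-redundancy and the standing assumption $|\R|\geq 2$. The only cosmetic difference is that the paper directly shows $\Gamma_{u}\not\subseteq\gamma_{r}$ for every $r\in\R$, whereas you first apply IE to get $\gamma_{r}=\Delta_{n}$ and then contradict non-redundancy; these are logically equivalent framings of the same argument.
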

\begin{proof}
    Assume to the contrary, i.e., $\exists u \in \mathbb{R}^{d}$, such that, $\emph{rank}(\nabla L(u)^{\top}) = 0$. 
    By the rank-nullity theorem, this means $\emph{null}(\nabla L(u)^{\top}) = n$. 
    So $\nabla L(u)^{\top}p = \mathbf{0}_{d}$ is satisfied by any $p \in \Delta_{n}$. 
    From Lemma \ref{lemma: surrogate_opt_condition_1}, we get that $\Gamma_{u} = \Delta_{n}$. 
    We assume $k = |\mathcal{R}| \geq 2$. 
    If not, the prediction problem is trivial, so $\{1,2\} \subseteq \mathcal{R}$ for any problem of interest. 
    First consider the pair of discrete reports $(1, 2)$. 
    By the non-redundancy of discrete reports, $\exists p_1 \in \gamma_{1}, p_2 \in \gamma_{2}$, such that $\langle p_{1}, \ell(\cdot)\rangle$ is uniquely optimized by the discrete prediction $1$ and $\langle p_{2}, \ell(\cdot)\rangle$ is uniquely optimized by the discrete prediction $2$. Since $\Gamma_{u} = \Delta_{n}$, it follows that $p_1, p_2 \in \Gamma_{u}$. 
    This means $\Gamma_{u} \not \subseteq \gamma_{1}$, since $p_2 \notin \gamma_1$, and similarly $\Gamma_{u} \not \subseteq \gamma_{2}$. 
    Similarly, for any $j \in \mathcal{R}: j \notin \{1, 2\}$, consider the reports $(1, j)$ and repeat the same rationale to establish that $\Gamma_{u} \not \subseteq \gamma_{j}$. 
    Thus, $\exists u \in \mathbb{R}^{d}$ such that $\Gamma_{u} \not \subseteq \gamma_{r}$, $\forall r \in \mathcal{R}$, implying $L$ does not indirectly elicit $\ell$ by the definition of indirect elicitation. 
\end{proof}

\begin{lemma}\label{lemma: surrogate_level_set_closed_general}
    Let $L: \reals^{d} \to \reals^{n}$ be a convex, but not necessarily differentiable surrogate. Then for any $u \in \mathbb{R}^{d}$, $\Gamma_{u}$ is compact. 
\end{lemma}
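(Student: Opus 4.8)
The plan is to reduce compactness of $\Gamma_u$ to closedness: since $\Gamma_u \subseteq \Delta_n$ and $\Delta_n$ is closed and bounded in $\reals^n$, hence compact, it suffices to show that $\Gamma_u$ is closed, because a closed subset of a compact set is compact.

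First I would rewrite the defining condition of $\Gamma_u$ as a family of linear inequalities in $p$. By Definition~\ref{defn: Gamma}, $p \in \Gamma_u$ exactly when $u$ minimizes $v \mapsto \langle p, L(v)\rangle$ over $\reals^d$, i.e., when $\langle p, L(v) - L(u)\rangle \geq 0$ for every $v \in \reals^d$. Therefore
\[
\Gamma_u \;=\; \Delta_n \;\cap\; \bigcap_{v \in \reals^d} \big\{\, p \in \reals^n : \langle p, L(v) - L(u)\rangle \geq 0 \,\big\}.
\]
Then I would argue closedness term by term: for each fixed $v$, the map $p \mapsto \langle p, L(v) - L(u)\rangle$ is linear in $p$, hence continuous, so $\{\, p : \langle p, L(v) - L(u)\rangle \geq 0 \,\}$ is the preimage of the closed ray $[0,\infty)$ and is closed. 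An arbitrary intersection of closed sets is closed, and $\Delta_n$ is closed, so $\Gamma_u$ is closed, and hence compact. The case $\Gamma_u = \emptyset$ holds trivially.

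There is essentially no obstacle here: the argument uses neither differentiability nor even continuity of $L$, only that the surrogate risk $\langle p, L(\cdot)\rangle$ is affine in $p$, which is what forces the optimality constraint to cut out a closed (indeed convex) subset of the simplex. The one subtlety worth flagging is that the index set $\{v \in \reals^d\}$ is infinite, so one must invoke that arbitrary intersections of closed sets remain closed, rather than appealing to any finite or polyhedral description of $\Gamma_u$.
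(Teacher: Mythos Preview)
Your proof is correct and takes essentially the same approach as the paper: both reduce compactness to closedness within $\Delta_n$ and exploit that the optimality condition $\langle p, L(u)\rangle \leq \langle p, L(v)\rangle$ is linear in $p$. The only cosmetic difference is that the paper verifies closedness via a sequential argument (take $p_t \to p$ and pass the inequality to the limit), whereas you phrase it as an intersection of closed half-spaces; the underlying idea is identical.
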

\begin{proof}
    Pick any $u \in \mathbb{R}^{d}$. Since $\Gamma_{u} \subseteq \Delta_{n}$ and $\Delta_{n}$ is compact, it is clear that $\Gamma_{u}$ is bounded. So it suffices to show that $\Gamma_{u}$ is closed. Let $\{p_{t}\}_{t \in \mathbb{N}_{+}} \subseteq \Gamma_{u}$ and suppose that $p_t \to p$. We want to show that $p \in \Gamma_{u}$. First note that since $p_t \in \Delta_{n}$ for each $t$, it follows that $p \in \Delta_{n}$ by compactness of $\Delta_{n}$. Now, pick any $v \in \mathbb{R}^{d}$. It holds for each $t \in \mathbb{N}_{+}$ that $\langle p_t, L(u)\rangle \leq \langle p_t, L(v) \rangle$. Taking the limit as $t \to \infty$ on both sides, it holds that $\langle p, L(u) \rangle \leq \langle p, L(v) \rangle$ for any $v \in \mathbb{R}^{d}$. Thus, $u \in \Gamma(p) \implies p \in \Gamma_{u}$. Hence, $\Gamma_{u}$ is closed. 
\end{proof}

\begin{theorem}
\label{thm: SIEimpliesIE}
    Let $L: \reals^{d} \to \reals^{n}$ be a surrogate that strongly indirectly elicits a target loss $\ell:\R \to \reals^n$. 
    Then $L$ indirectly elicits $\ell$.
\end{theorem}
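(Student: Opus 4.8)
The plan is to unwind the definitions of strong indirect elicitation and indirect elicitation and show that the former is literally a special case of the latter once one passes from the refined cells $\gamma^*_S$ to the ordinary level sets $\gamma_r$. Recall that $L$ strongly indirectly elicits $\ell$ means: for every $u \in \reals^d$ there is a set $S \subseteq \R$ with $\Gamma_u \subseteq \gamma^*_S$, where $\gamma^*_S = \{p : \gamma(p) = S\}$. The goal, by the second (``$L$ indirectly elicits $\ell$'') clause of Definition~\ref{definition: IE}, is to produce, for every $u \in \reals^d$, some report $r \in \R$ with $\Gamma_u \subseteq \gamma_r$.

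First I would fix an arbitrary $u \in \reals^d$ and apply strong IE to obtain the set $S \subseteq \R$ with $\Gamma_u \subseteq \gamma^*_S$. The key observation is the inclusion $\gamma^*_S \subseteq \gamma_r$ for any $r \in S$: indeed, if $p \in \gamma^*_S$ then $\gamma(p) = S \ni r$, so $r \in \gamma(p)$, i.e.\ $p \in \gamma_r$ by Definition~\ref{defn: gamma}. Second, I would handle the degenerate possibility $S = \emptyset$. If $\Gamma_u$ is nonempty, pick any $p \in \Gamma_u \subseteq \gamma^*_S$; then $\gamma(p) = S$, but $\gamma(p) = \argmin_{r}\langle p, \ell(r)\rangle$ is nonempty since $\R$ is finite, so $S \neq \emptyset$ --- contradiction. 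Hence either $\Gamma_u = \emptyset$, in which case $\Gamma_u \subseteq \gamma_r$ holds vacuously for any $r$, or $S \neq \emptyset$ and we may choose $r \in S$, giving $\Gamma_u \subseteq \gamma^*_S \subseteq \gamma_r$. Since $u$ was arbitrary, this establishes that $L$ indirectly elicits $\ell$.

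Optionally I would also remark on the stronger conclusion available when a link $\psi$ is in play: if $(L,\psi)$ strongly indirectly elicits $\ell$ in the sense that $\psi(u) \in S$ whenever $\Gamma_u \subseteq \gamma^*_S$, then the same argument yields $\Gamma_u \subseteq \gamma_{\psi(u)}$, matching the first clause of Definition~\ref{definition: IE}. This is a pure set-chasing argument; there is essentially no analytic content. The only ``obstacle'' worth flagging is the bookkeeping around the empty-set corner case and making sure the quantifier structure (``there exists $S$'' for strong IE versus ``there exists $r$'' for IE) lines up correctly --- but this is routine and poses no real difficulty.
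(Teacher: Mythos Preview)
Your proof is correct and follows essentially the same approach as the paper: fix $u$, use strong IE to get $S$ with $\Gamma_u \subseteq \gamma^*_S$, then observe $\gamma^*_S \subseteq \gamma_r$ for any $r \in S$ (the paper phrases this as $\Gamma_u \subseteq \cap_{r \in S}\gamma_r$). You are slightly more careful than the paper in explicitly handling the $S = \emptyset$ corner case, which the paper leaves implicit.
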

\begin{proof}
    Pick any $u \in \reals^{d}$. 
    By definition, there exists some $S \subseteq \R$, such that $\gamma(p) = S$ for every $p \in \Gamma_{u}$. 
    Equivalently, $p \in \cap_{r \in S}\gamma_{r}$ for every $p \in \Gamma_{u}$. 
    Thus, $\Gamma_{u} \subseteq \cap_{r \in S} \gamma_{r} \implies \exists r \in \R,$ such that $\Gamma_{u} \subseteq \gamma_{r}$.
\end{proof}

\begin{theorem}
\label{thm: CALIMPLIESIE}
    Let $L: \mathbb{R}^{d} \to \mathbb{R}^{n}$ be a surrogate that is calibrated with respect to $\ell: \R \to \mathbb{R}^{n}$. Then $L$ indirectly elicits $\ell$. 
\end{theorem}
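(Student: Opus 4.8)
The plan is to directly unwind the definitions: take a link $\psi$ witnessing that $(L,\psi)$ is calibrated, and show this very same $\psi$ certifies indirect elicitation, i.e., $\Gamma_u \subseteq \gamma_{\psi(u)}$ for every $u \in \reals^{d}$. Since the definition of ``$L$ indirectly elicits $\ell$'' asks only for the existence of \emph{some} link exhibiting this containment, establishing it for the calibration link $\psi$ suffices.

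First I would fix $u \in \reals^{d}$. If $\Gamma_u = \emptyset$ there is nothing to check, so assume $\Gamma_u \neq \emptyset$ and pick an arbitrary $p \in \Gamma_u$. By definition of the level set, $u \in \Gamma(p)$, so $\langle p, L(u)\rangle = \inf_{v \in \reals^{d}} \langle p, L(v)\rangle$. Now argue by contradiction: suppose $\psi(u) \notin \gamma(p)$. Then $u$ lies in the feasible set $\{v : \psi(v) \notin \gamma(p)\}$ appearing in the calibration inequality at $p$, and hence $\inf_{v:\, \psi(v)\notin\gamma(p)} \langle p, L(v)\rangle \le \langle p, L(u)\rangle = \inf_{v} \langle p, L(v)\rangle$. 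This contradicts the strict inequality required by Definition~\ref{definition: calibration}. Therefore $\psi(u) \in \gamma(p)$, i.e., $p \in \gamma_{\psi(u)}$. As $p \in \Gamma_u$ was arbitrary, $\Gamma_u \subseteq \gamma_{\psi(u)}$, so $(L,\psi)$ indirectly elicits $\ell$, and thus $L$ indirectly elicits $\ell$.

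There is no substantive obstacle here; the claim is essentially immediate from the two definitions. The only minor points to handle cleanly are the trivial edge case $\Gamma_u = \emptyset$ (dispatched above) and the convention that an infimum over an empty index set is $+\infty$ — but this convention is never needed in the contradiction step, since we exhibit the explicit witness $u$ inside the feasible set, making that set nonempty.
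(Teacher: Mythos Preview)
Your proof is correct and follows essentially the same approach as the paper: take the calibrating link $\psi$, handle the trivial case $\Gamma_u=\emptyset$, and for $p\in\Gamma_u$ argue by contradiction that $\psi(u)\notin\gamma(p)$ would force $\inf_{v:\psi(v)\notin\gamma(p)}\langle p,L(v)\rangle \le \langle p,L(u)\rangle = \inf_v\langle p,L(v)\rangle$, violating calibration. The paper's proof is the same argument, differing only in superficial phrasing.
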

\begin{proof}
    Since $L$ is calibrated with respect to $\ell$, there exists a link function $\psi: \mathbb{R}^{d} \to \mathcal{R}$, such that, $(L, \psi)$ is calibrated with respect to $\ell$. Suppose $u \in \mathbb{R}^{d}$. If $u \notin \Gamma(\Delta_{n})$, then $\Gamma_{u} = \emptyset \implies \Gamma_{u} \subseteq \gamma_{r}$, $\forall r \in \mathcal{R}$ yielding indirect elicitation. Now, suppose $u \in \Gamma(\Delta_{n})$. We show that $\Gamma_{u} \subseteq \gamma_{\psi(u)}$. Assume to the contrary. Then, there exists some $ p \in \Gamma_{u}$, such that $p \notin \gamma_{\psi(u)} \implies \psi(u) \notin \gamma(p)$. Then, $\inf_{v \in \mathbb{R}^{d}: \psi(v) \notin \gamma(p)} \langle p, L(v) \rangle \leq \langle p, L(u) \rangle = \inf_{v \in \mathbb{R}^{d}}\langle p, L(v) \rangle$ since $p \in \Gamma_{u}$, hence violating calibration. Thus, $\Gamma_{u} \subseteq \gamma_{\psi(u)}$ and so $(L, \psi)$ indirectly elicit $\ell \implies$ $L$ indirectly elicits $\ell$. 
\end{proof}

\begin{theorem}
\label{thm: poly_not_sIE}
    Let $L: \reals^{d} \to \reals^{n}$ be a polyhedral surrogate that indirectly elicits some target loss $\ell: \mathcal{R} \to \reals^{n}$. Then $L$ does not strongly indirectly elicit $\ell$. 
    \end{theorem}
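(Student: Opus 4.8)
The plan is to argue by contradiction: assuming the polyhedral surrogate $L$ both indirectly and strongly indirectly elicits $\ell$, I will show that the target property $\gamma := \text{prop}[\ell]$ must be single-valued on all of $\Delta_n$, which is impossible for a non-redundant target with $|\R| \ge 2$ because $\Delta_n$ is connected.

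First I would recall the structure of polyhedral properties. Since $L$ is polyhedral, $V(p) := \inf_{u}\langle p, L(u)\rangle$ is the support function of a polyhedral superprediction set, hence concave and piecewise linear on $\Delta_n$; equivalently (\citealp{finocchiaro2020embedding}), $\Gamma := \text{prop}[L]$ has only finitely many distinct nonempty level sets, and there is a finite set $V \subseteq \reals^d$ with $\Gamma(p) \cap V \neq \emptyset$ for every $p$. Under Assumption~\ref{assumption 1} every $\Gamma(p)$ is nonempty, so $\bigcup_{v \in V}\Gamma_v = \Delta_n$. Each $\Gamma_v$ is closed (Lemma~\ref{lemma: surrogate_level_set_closed_general}) and convex, so those that are not full-dimensional are nowhere dense in $\Delta_n$; consequently the finitely many full-dimensional ones among $\{\Gamma_v\}$, call them $C_1, \ldots, C_m$, already cover $\Delta_n$.

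Next I would use indirect and strong indirect elicitation to collapse each full-dimensional cell to a single target report. Fix $C_i = \Gamma_{v_i}$. By strong IE, $\gamma$ is constant on $\Gamma_{v_i}$, say $\gamma(p) = S_i$ for all $p \in C_i$. If $S_i$ contained two distinct reports $r, r'$, then $\langle p, \ell(r) - \ell(r')\rangle = 0$ for every $p \in C_i$, so $C_i$ would lie in the affine slice $\{p \in \Delta_n : \langle p, \ell(r) - \ell(r')\rangle = 0\}$; but non-redundancy forces $\ell(r)$ and $\ell(r')$ to differ in a direction not parallel to $\mathbf{1}_{n}$ (otherwise one of $r, r'$ is never a unique, resp.\ never a, minimizer), so this slice is at most $(n-2)$-dimensional, contradicting $\dim C_i = n-1$. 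Hence $|S_i| = 1$ for every $i$, and therefore every $p \in \bigcup_i C_i = \Delta_n$ has $\gamma(p)$ a singleton.

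Finally, with $\gamma$ single-valued, the target level sets $\{\gamma_r\}_{r \in \R}$ are pairwise disjoint, each nonempty (non-redundancy) and each closed (being $\{p \in \Delta_n : \langle p,\ell(r)\rangle \le \langle p,\ell(r')\rangle \ \forall r'\}$), and they cover $\Delta_n$; this exhibits the connected set $\Delta_n$ as a disjoint union of $k = |\R| \ge 2$ nonempty closed sets, a contradiction. Hence no polyhedral surrogate that indirectly elicits $\ell$ can strongly indirectly elicit it. The main obstacle is Step~1: it rests on the known but non-trivial structural fact that polyhedral losses elicit "finitely generated" polyhedral properties whose full-dimensional level sets tile the simplex; once that structure is available, the remaining steps are elementary dimension/connectedness arguments.
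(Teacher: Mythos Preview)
Your argument is correct and takes a genuinely different route from the paper's. The paper also argues by contradiction from the finite representative set $S=\{u_1,\dots,u_m\}$, but instead of passing to full-dimensional level sets, it focuses on a single target cell: it picks a minimal subfamily $S'\subseteq S$ whose level sets cover $\relint(\gamma_1)$, uses strong IE and minimality to force $\gamma(p)=\{1\}$ on every $\Gamma_v$ with $v\in S'$, and concludes $\relint(\gamma_1)=\bigcup_{v\in S'}\Gamma_v$, which is impossible because the right side is closed while $\relint(\gamma_1)$ is not. Your route instead first prunes to the full-dimensional $\Gamma_v$'s (which still cover $\Delta_n$ by a Baire/dimension argument), then uses non-redundancy to force $|S_i|=1$ on each such cell, and finally derives a contradiction from connectedness of $\Delta_n$. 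Both hinge on the same polyhedral fact (finite representative set); the paper's contradiction is ``open versus closed'' on one cell, yours is ``disconnected partition of a connected simplex.'' Your approach requires the extra observation that non-redundancy makes $\ell(r)-\ell(r')$ non-parallel to $\mathbf{1}_n$, but in exchange gives the pleasant global statement that strong IE of a polyhedral surrogate would force $\gamma$ to be single-valued everywhere. One small clean-up: you don't need Assumption~\ref{assumption 1} here, since the finite representative set property for polyhedral losses already gives $\Gamma(p)\cap V\neq\emptyset$ for all $p$ and hence $\bigcup_{v\in V}\Gamma_v=\Delta_n$.
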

\begin{proof}
We know from \citep{finocchiaro2024embedding} that any polyhedral surrogate has a finite representative set $S$, i.e., $S \subset \mathbb{R}^{d}$, such that $S$ has a finite number of elements and that for any $p \in \Delta_{n}$, there exists some $u \in S$, such that $p \in \Gamma_{u}$. We leverage this fact to prove our claim. Assume by contradiction that $L$ strongly indirectly elicits $\ell$. Suppose WLOG that $S = \{u_1, u_2, ..., u_m\}$. Since $\cup_{i \in [m]}\Gamma_{u_{i}} = \Delta_{n}$, it follows that there exists some $S' \subseteq S$, such that $\text{relint}(\gamma_{1}) \subseteq \cup_{v \in S'}\Gamma_{v}$. In particular, $S' \subseteq S$ and the level-sets of the reports in $S'$ cover $\text{relint}(\gamma_{1})$. Since $S$ is finite, there must exist a minimal subset of $S$, the level sets of whose elements cover $\text{relint}(\gamma_{1})$. Assume $S'$ is such a minimal covering subset. First, we claim that for any $v \in S', p \in \Gamma_{v} \implies \gamma(p) \cap \{1\} \neq \emptyset$. Assume not. Then $\exists v \in S'$, such that $\gamma(p) \cap \{1\} = \emptyset$ for some $p \in \Gamma_{v}$. By strong IE, it holds that $\gamma(p) \cap \{1\} = \emptyset, \forall p \in \Gamma_{v}$. Thus, if $v \in S'$, $S'$ can't be minimal. Next, we claim that for any $v \in S', p \in \Gamma_{v} \implies \gamma(p) = \{1\}$. Assume not, then 
$\exists v \in S': \{1\} \subset \gamma(p)$ for some $p \in \Gamma_{v}$. By strong IE, it follows that $\forall p \in \Gamma_{v}, \{1\} \subset \gamma(p) \implies \forall p \in \Gamma_{v},  p \notin \text{relint} (\gamma_{1})$. Thus, if $v \in S'$, $S'$ can't be minimal. Hence, for every $v \in S'$, $p \in \Gamma_{v} \implies \gamma(p) = \{1\}$. Therefore, $\text{relint}(\gamma_{1}) \not \subset \cup_{v \in S'}\Gamma_{v} \implies \text{relint}(\gamma_{1}) =  \cup_{v \in S'}\Gamma_{v}$. However, $S'$ being a subset of finite $S$ is itself finite, and by Lemma \ref{lemma: surrogate_level_set_closed_general}, $ \cup_{v \in S'}\Gamma_{v}$ is a finite union of closed sets implying that $ \cup_{v \in S'}\Gamma_{v}$ itself must be closed. On the other hand, $\text{relint}(\gamma_{1}) $ is not closed by definition and so $\text{relint}(\gamma_{1}) \neq \cup_{v \in S'}\Gamma_{v}$. Hence, $L$ strongly indirectly eliciting $\ell$ yields a contradiction.   
\end{proof}

\begin{lemma}\label{lemma: surrogate_level_set_polytope}
    Consider any convex, differentiable $L: \mathbb{R}^{d} \to \mathbb{R}^{n}$. Suppose $u \in \mathbb{R}^{d}$. Then $\Gamma_{u}$ is a polytope. If $L: \mathbb{R}^{d} \to \mathbb{R}^{n}$ indirectly elicits any target $\ell: \mathcal{R} \to \reals^{n}$, then $\text{affdim}(\Gamma_{u}) \leq n-2$. 
\end{lemma}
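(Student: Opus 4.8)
The plan is to prove the two assertions separately, both via Lemma~\ref{lemma: surrogate_opt_condition_1}, which identifies $\Gamma_u = \{p \in \Delta_n : \nabla L(u)^\top p = \mathbf{0}_d\} = \Delta_n \cap \Ker(\nabla L(u)^\top)$. For the polytope claim I would observe that $\Delta_n$ is a polytope and $\Ker(\nabla L(u)^\top)$ is a linear subspace, hence a polyhedron, so their intersection is a polyhedron; being contained in the bounded set $\Delta_n$, it is bounded, and a bounded polyhedron is a polytope.

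For the affine-dimension bound, I would first invoke the hypothesis that $L$ indirectly elicits some target: by Lemma~\ref{lemma: jacobian_rank_positive} this gives $r := \rank(\nabla L(u)^\top) \geq 1$. Assuming $\Gamma_u \neq \emptyset$ (the empty case is vacuous), write $K := \Ker(\nabla L(u)^\top)$, a linear subspace of dimension $n - r \leq n-1$, and note that $\Gamma_u \subseteq V := K \cap \affine(\Delta_n) = K \cap \{p : \mathbf{1}_n^\top p = 1\}$, so that $\affine(\Gamma_u) \subseteq V$ since $V$ is an affine subspace containing $\Gamma_u$. The key step is the dimension count for $V$: because $\Gamma_u \neq \emptyset$ forces $V \neq \emptyset$, there is a point $p \in K$ with $\mathbf{1}_n^\top p = 1 \neq 0$, so the linear functional $\mathbf{1}_n^\top(\cdot)$ does not vanish identically on $K$; hence $K \cap \Ker(\mathbf{1}_n^\top)$ has dimension $\dim K - 1$, and $V$, being a translate of it, has dimension $n - r - 1$. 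Combining, $\text{affdim}(\Gamma_u) \leq \dim V = n - r - 1 \leq n - 2$.

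The only delicate point I anticipate is this last dimension count — specifically, justifying that the normalization constraint $\mathbf{1}_n^\top p = 1$ genuinely drops the dimension of $K$ by one rather than being redundant on it. The potential failure mode is exactly $K \subseteq \{p : \mathbf{1}_n^\top p = 0\}$, which would make $V$ empty, and this is precluded by $\Gamma_u \neq \emptyset$. Everything else is routine linear algebra, together with the two cited lemmas and the elementary fact that a bounded polyhedron is a polytope.
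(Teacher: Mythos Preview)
Your proposal is correct and follows essentially the same route as the paper: identify $\Gamma_u = \Delta_n \cap \Ker(\nabla L(u)^\top)$ via Lemma~\ref{lemma: surrogate_opt_condition_1} to get the polytope claim, then invoke Lemma~\ref{lemma: jacobian_rank_positive} for $\rank(\nabla L(u)^\top) \geq 1$ and count dimensions. Your handling of the dimension count is in fact more careful than the paper's, which simply asserts that intersecting an $(n-1)$-dimensional affine set with a proper subspace yields affine dimension at most $n-2$ without explicitly checking, as you do, that the constraint $\mathbf{1}_n^\top p = 1$ is non-redundant on $\Ker(\nabla L(u)^\top)$.
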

\begin{proof}
    Recall by Lemma \ref{lemma: surrogate_opt_condition_1} that $\Gamma_{u} = \{p \in \Delta_{n} | \nabla L(u)^{\top}p = \mathbf{0}_{d}\}$. In other words, $\Gamma_{u} = \Delta_{n} \cap \text{ker}(\nabla L(u)^{\top})$. So $\Gamma_{u}$ is the intersection of a polytope and a subspace, implying that $\Gamma_{u}$ is itself a polytope \citep{henk2017basic}. 
    
    Next, suppose $L$ indirectly elicits some target $\ell$. Then by Lemma \ref{lemma: jacobian_rank_positive}, it holds that 
    
    $d \geq\text{rank}(\nabla L(u)^{\top}) > 0$. Thus, $1 \leq \text{nullity}(L(u)^{\top}) < n$. So, $\Gamma_{u}$ is the intersection of a set of affine dimension $n-1$ (i.e., $\Delta_{n}$) and a subspace of dimension at least $1$ (i.e., $\text{ker}(\nabla L(u))^{\top}$). Thus, $\text{affdim}(\Gamma_{u}) \leq n-2$. 
\end{proof}

\begin{lemma}\label{lemma: target_level_sets_polytope}
    Suppose $\ell: \mathcal{R} \to \reals^{n}$ is an elicitable target loss, and that $\mathcal{Y}$ and $\mathcal{R}$ are finite sets. Suppose further that each $r \in \mathcal{R}$ is non-redundant. Then for any $r \in \mathcal{R}$, $\gamma_{r}$ is a convex polytope, such that $\text{affdim}(\gamma_{r}) = n-1$.
\end{lemma}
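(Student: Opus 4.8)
The plan is to exhibit $\gamma_r$ as an explicit bounded polyhedron, and then use non-redundancy to show it is full-dimensional inside $\Delta_n$. First, unfolding Definition~\ref{defn: gamma}, $\gamma_r = \{p \in \Delta_n : \langle p, \ell(r)\rangle \le \langle p, \ell(r')\rangle \ \forall r' \in \R\}$, so that
\[
\gamma_r \;=\; \Delta_n \cap \bigcap_{r' \in \R} \bigl\{p \in \reals^n : \langle p,\, \ell(r) - \ell(r')\rangle \le 0 \bigr\}.
\]
Because $\R$ is finite, this is the intersection of the polytope $\Delta_n$ with finitely many closed halfspaces, hence a bounded polyhedron, i.e., a convex polytope \citep{henk2017basic}; it is nonempty by the point constructed below. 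This disposes of the first assertion.

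For the dimension claim, the upper bound $\text{affdim}(\gamma_r) \le n-1$ is immediate since $\gamma_r \subseteq \Delta_n$ and $\text{affdim}(\Delta_n) = n-1$, so the real content is the matching lower bound. Here I would invoke non-redundancy of $r$: it yields a distribution $p^{*} \in \Delta_n$ with $\argmin_{r' \in \R}\langle p^{*}, \ell(r')\rangle = \{r\}$, i.e., $\langle p^{*}, \ell(r) - \ell(r')\rangle < 0$ \emph{strictly} for every $r' \ne r$. Since each map $p \mapsto \langle p, \ell(r) - \ell(r')\rangle$ is continuous and $\R$ is finite, these strict inequalities persist on an open ball $U \ni p^{*}$ in $\reals^n$, and therefore $U \cap \Delta_n \subseteq \gamma_r$.

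It then remains to find $n$ affinely independent points of $\gamma_r$ inside $U \cap \Delta_n$. Writing $e_1, \dots, e_n$ for the vertices of $\Delta_n$, I would take $q_i := (1-t)\,p^{*} + t\, e_i$, which lies in $\Delta_n$ for every $t \in [0,1]$; for $t > 0$ small enough, $\|q_i - p^{*}\| = t\,\|e_i - p^{*}\|$ is small enough that $q_i \in U$ simultaneously for all $i$ (there are finitely many), hence each $q_i \in \gamma_r$. These points are affinely independent: if $\sum_i \lambda_i q_i = 0$ with $\sum_i \lambda_i = 0$, then $\sum_i \lambda_i q_i = t\sum_i \lambda_i e_i$, forcing $\sum_i \lambda_i e_i = 0$, and since $e_1,\dots,e_n$ are linearly (a fortiori affinely) independent, $\lambda \equiv 0$. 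Hence $\text{affdim}(\gamma_r) \ge n-1$, and combined with the upper bound, $\text{affdim}(\gamma_r) = n-1$.

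I do not expect a serious obstacle: the argument is essentially bookkeeping. The one point needing a little care is the dimension lower bound, where one must not assume $p^{*}$ lies in the relative interior of $\Delta_n$ — it need not — so rather than taking a small ball around $p^{*}$ within $\Delta_n$, I perturb $p^{*}$ toward each simplex vertex, which covers boundary cases uniformly. It is worth noting that the hypothesis ``$\ell$ is elicitable'' plays no role beyond the standing setup; the two properties actually used are finiteness of $\R$ (for the polytope structure) and non-redundancy of $r$ (for full affine dimension).
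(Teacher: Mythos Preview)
Your argument is correct. It differs from the paper's approach, which does not give a direct proof at all: the paper simply invokes the characterization from \citet{lambert2009eliciting} that a finite property is elicitable iff its cells form a power diagram, and then appeals to \citet{aurenhammer1987power} for the geometric properties of power-diagram cells. Your route is more elementary and self-contained: you write $\gamma_r$ explicitly as $\Delta_n$ intersected with $|\R|-1$ halfspaces to get the polytope structure, and then use non-redundancy plus a perturbation toward the simplex vertices to manufacture $n$ affinely independent points in $\gamma_r$. What your approach buys is that it avoids the external power-diagram machinery entirely and makes transparent exactly which hypotheses are doing work --- as you correctly observe, elicitability as a standalone hypothesis is idle here, since $\gamma=\prop{\ell}$ is elicited by $\ell$ by construction; only finiteness of $\R$ and non-redundancy are used. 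The paper's approach, in exchange for the external dependence, situates the result within the broader elicitation literature. Your handling of the case $p^{*}\in\partial\Delta_n$ by perturbing along segments to the $e_i$ rather than taking a relative-interior ball is the right move and cleanly sidesteps the only subtlety.
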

\begin{proof}
    This can be observed directly from the fact that any finite target is elicitable if and only its cells $\gamma_{r}$ (where, $r \in \mathcal{R}$) form a power diagram \citep{lambert2009eliciting}. Power diagrams are essentially weighted Voronoi diagrams. For more details on power diagram, we refer the reader to \citep{aurenhammer1987power}.
\end{proof}

\begin{lemma}\label{lemma: IE_equivalence_helper_1}
    Let $L: \mathbb{R}^{d} \to \mathbb{R}^{n}$ be a convex, differentiable surrogate. Let $\ell: \mathcal{R} \to \reals^{n}$. Let $u \in \mathbb{R}^{d}$. Suppose $p, p' \in \Gamma_{u}$ and that $\exists S \subseteq R$ such that $\gamma(p) \cap \gamma(p') = S \neq \emptyset$. Then, $\gamma(q) \subseteq S$, where $q:= \frac{p+p'}{2}$. 
\end{lemma}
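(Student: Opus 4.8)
The plan is to prove the inclusion by a short argument that uses nothing about $L$ beyond, incidentally, locating $q$ inside $\Gamma_u$; the real content is just linearity of the map $p \mapsto \langle p, \ell(r)\rangle$ on $\Delta_n$.

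First I would record the routine preliminaries. Since $p, p' \in \Delta_n$, the midpoint $q = \tfrac{p+p'}{2}$ again lies in $\Delta_n$, so $\gamma(q)$ is well-defined. Moreover, because $p, p' \in \Gamma_u$, Lemma~\ref{lemma: surrogate_opt_condition_1} gives $\nabla L(u)^\top p = \nabla L(u)^\top p' = \mathbf{0}_d$, hence $\nabla L(u)^\top q = \mathbf{0}_d$ and $q \in \Gamma_u$ as well; I would state this for context (it is what makes the lemma useful downstream), though it is not actually needed for the claimed conclusion $\gamma(q) \subseteq S$.

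The core step is the following. Fix any $s \in S$ (possible since $S \neq \emptyset$) and any $r \in \gamma(q)$; the goal is to show $r \in \gamma(p) \cap \gamma(p')$. Since $s \in \gamma(p) = \argmin_{r''}\langle p, \ell(r'')\rangle$ and $s \in \gamma(p')$, we have $\langle p, \ell(s)\rangle \le \langle p, \ell(r)\rangle$ and $\langle p', \ell(s)\rangle \le \langle p', \ell(r)\rangle$. Adding these and using linearity in the first argument gives $\langle q, \ell(s)\rangle \le \langle q, \ell(r)\rangle$. On the other hand, $r \in \gamma(q)$ forces $\langle q, \ell(r)\rangle \le \langle q, \ell(s)\rangle$, so both inequalities are equalities. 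But each of $\langle p, \ell(s)\rangle \le \langle p, \ell(r)\rangle$ and $\langle p', \ell(s)\rangle \le \langle p', \ell(r)\rangle$ is a "$\le$", and their sum is an equality, so each must individually be an equality. Hence $\langle p, \ell(r)\rangle = \langle p, \ell(s)\rangle = \min_{r''}\langle p, \ell(r'')\rangle$, which means $r \in \gamma(p)$; symmetrically $r \in \gamma(p')$. Thus $r \in \gamma(p) \cap \gamma(p') = S$, and since $r \in \gamma(q)$ was arbitrary, $\gamma(q) \subseteq S$.

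I do not expect a genuine obstacle here: the only thing to notice is the elementary fact that two "$\le$" inequalities whose sum is an equality must each be an equality, together with the observation that the chosen $s \in S$ simultaneously witnesses the minimal value of $\langle p, \ell(\cdot)\rangle$ and of $\langle p', \ell(\cdot)\rangle$. In particular I would deliberately avoid invoking the power-diagram/polytope structure of the target cells $\gamma_r$ (Lemma~\ref{lemma: target_level_sets_polytope}) or any convexity of $\gamma_r$; it is tempting but unnecessary, since everything follows from linear functionals on the simplex.
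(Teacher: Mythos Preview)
Your proof is correct and takes essentially the same approach as the paper's own proof: both rely purely on linearity of $p \mapsto \langle p, \ell(r)\rangle$ and the fact that two nonstrict inequalities summing to an equality must each be equalities. The paper phrases this as a contradiction (assume $j \notin S$, obtain a strict inequality in the sum), whereas you give the direct version, but the content is identical.
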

\begin{proof}
    First note that by convexity of $\Gamma_{u}$, $q \in \Gamma_{u} \implies q \in \Delta_{n}$. For $i \in \mathcal{R}$, denote $\ell_{i} := (\ell(1,i), \ell(2,i), ..., \ell(n, i))$ as the loss vector corresponding to prediction $i$. Say $j \in \gamma(q) \implies q^{\top}\ell_{j} \leq q^{\top}\ell_{i},\forall i \in \mathcal{R}$. Now, suppose $j \notin S$. Let $t \in S \implies t \in \gamma(p) \cap \gamma(p')$. So, $p'^{\top}\ell_{t} \leq p'^{\top}\ell_{j}$ and $p^{\top}\ell_{t} \leq p^{\top}\ell_{j}$, with at least one inequality strict (as if both were equalities, then $j \in S$). So, summing the strict inequality with the other inequality, we get that $(p'+p)^{\top}\ell_{t} < (p'+p)^{\top}\ell_{j} \implies \frac{(p'+p)}{2}^{\top}\ell_{t} < \frac{(p'+p)}{2}^{\top}\ell_{j} \implies q^{\top}\ell_{t} < q^{\top}\ell_{j}$, which contradicts our supposition that $j 
    \in \gamma(q)$. Thus, $j \in S \implies \gamma(q) \subseteq S$. 
\end{proof}

\begin{lemma}\label{lemma: IE_equivalence_helper_2}
    Let $L: \mathbb{R}^{d} \to \mathbb{R}^{n}$ be a convex, differentiable surrogate. Let $\ell: \mathcal{R} \to \reals^{n}$. Let $u \in \mathbb{R}^{d}$, such that, $\gamma(p) \cap \gamma(p') \neq \emptyset$ for any $p, p' \in \Gamma_{u}$. Let $p_m \in \Gamma_{u}: |\gamma(p_m)| \leq |\gamma(p)|$ for every $p \in \Gamma_{u}$. Then, $\gamma(p_m) \subseteq \gamma(p), \forall p \in \Gamma_{u}$. 
\end{lemma}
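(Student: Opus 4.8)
The plan is to combine Lemma~\ref{lemma: IE_equivalence_helper_1} with a minimality argument on the cardinality $|\gamma(\cdot)|$ over $\Gamma_u$. First I would record that $\Gamma_u$ is convex: by Lemma~\ref{lemma: surrogate_opt_condition_1}, $\Gamma_u = \Delta_n \cap \ker(\nabla L(u)^\top)$ is an intersection of convex sets, so for any $p, p' \in \Gamma_u$ the midpoint $\frac{p+p'}{2}$ again lies in $\Gamma_u$. (If $\Gamma_u = \emptyset$ the statement is vacuous, and the hypothesis that $p_m$ exists rules this out in the nontrivial case.)

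Next, fix an arbitrary $p \in \Gamma_u$ and set $q := \frac{p_m + p}{2} \in \Gamma_u$. By the standing hypothesis on $u$, the set $S := \gamma(p_m) \cap \gamma(p)$ is nonempty, so Lemma~\ref{lemma: IE_equivalence_helper_1} applies to the pair $(p_m, p)$ and gives $\gamma(q) \subseteq S \subseteq \gamma(p_m)$. Since $\gamma(q)$ is nonempty (it is the $\argmin$ of a linear functional over the finite set $\mathcal{R}$) and $|\gamma(q)| \geq |\gamma(p_m)|$ by the defining minimality of $p_m$, the inclusion $\gamma(q) \subseteq \gamma(p_m)$ between finite sets forces $\gamma(q) = \gamma(p_m)$. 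Hence $\gamma(p_m) = \gamma(q) \subseteq S \subseteq \gamma(p)$. Because $p \in \Gamma_u$ was arbitrary, this proves $\gamma(p_m) \subseteq \gamma(p)$ for every $p \in \Gamma_u$.

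I expect no serious obstacle here: the argument is a two-line deduction once Lemma~\ref{lemma: IE_equivalence_helper_1} is in hand. The only points needing care are (i) invoking convexity of $\Gamma_u$ so that the midpoint $q$ genuinely lies in $\Gamma_u$ and the minimality of $|\gamma(p_m)|$ can be applied at $q$, and (ii) confirming $\gamma(q) \neq \emptyset$ so that "subset of no larger cardinality" upgrades to "equal". Both are immediate. As a remark, the proof actually establishes the slightly stronger fact that $\gamma\!\left(\frac{p_m+p}{2}\right) = \gamma(p_m)$ for every $p \in \Gamma_u$, which may be convenient in subsequent arguments.
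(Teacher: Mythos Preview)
Your proof is correct and follows essentially the same approach as the paper: both use the convexity of $\Gamma_u$ to place the midpoint $q = \tfrac{p_m+p}{2}$ in $\Gamma_u$, invoke Lemma~\ref{lemma: IE_equivalence_helper_1} to get $\gamma(q) \subseteq \gamma(p_m)\cap\gamma(p)$, and then appeal to the minimality of $|\gamma(p_m)|$. The only cosmetic difference is that the paper argues by contradiction (assuming $\gamma(p_m)\not\subseteq\gamma(p)$ and deriving $|\gamma(q)|<|\gamma(p_m)|$), whereas you give the direct version.
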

\begin{proof}
    Suppose not. We know that $\gamma(p_m) \cap \gamma(p) \neq \emptyset$, so $\exists S \subseteq R: \gamma(p_m) \cap \gamma(p) = S \neq \emptyset$. Clearly, $S \subset \gamma(p_m)$ and $S \subset \gamma(p)$, since $S \neq \gamma(p_m)$. Now, pick $q = \frac{p + p_m}{2}$. Since $\Gamma_{u}$ is convex, $q \in \Gamma_{u}$. Now, we know from Lemma \ref{lemma: IE_equivalence_helper_1} that $\gamma(q) \subseteq S \implies |\gamma(q)| \leq |S| < |\gamma(p_m)| \implies |\gamma(q)| < |\gamma(p_m)|$. However, since $q \in \Gamma_{u}$, this yields a contradiction.  
\end{proof}

\begin{lemma}\label{lemma: IE_equivalence}
    Let $L: \mathbb{R}^{d} \to \mathbb{R}^{n}$ be a convex, differentiable surrogate. Let $\ell: \mathcal{R} \to \reals^{n}$. $L$ indirectly elicits $\ell$ if and only if $\forall u \in \mathbb{R}^{d}$, it holds that $\gamma(p) \cap \gamma(p') \neq \emptyset$, for any $p, p' \in \Gamma_{u}$. 
\end{lemma}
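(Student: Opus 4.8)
The plan is to prove the two directions separately. The forward implication is immediate from the definition of indirect elicitation, while the reverse implication reduces almost entirely to the helper Lemmas~\ref{lemma: IE_equivalence_helper_1} and~\ref{lemma: IE_equivalence_helper_2}; the only extra work is a finiteness argument to extract a ``smallest'' target cell.

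For the forward direction ($\Rightarrow$), suppose $L$ indirectly elicits $\ell$ and fix any $u \in \reals^{d}$. If $\Gamma_{u} = \emptyset$ there is nothing to check, so assume it is nonempty. By Definition~\ref{definition: IE} there is some $r \in \R$ with $\Gamma_{u} \subseteq \gamma_{r}$. Then for any $p, p' \in \Gamma_{u}$ we have $p, p' \in \gamma_{r}$, i.e.\ $r \in \gamma(p)$ and $r \in \gamma(p')$, so $r \in \gamma(p) \cap \gamma(p')$ and in particular this intersection is nonempty.

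For the reverse direction ($\Leftarrow$), assume that for every $u \in \reals^{d}$ and every $p, p' \in \Gamma_{u}$ we have $\gamma(p) \cap \gamma(p') \neq \emptyset$; the goal is to produce, for each $u$, a single report $r$ with $\Gamma_{u} \subseteq \gamma_{r}$. Fix $u$; if $\Gamma_{u} = \emptyset$ any $r$ works, so assume $\Gamma_{u} \neq \emptyset$. Since $\gamma(p) = \argmin_{r \in \R}\langle p, \ell(r)\rangle$ is a nonempty subset of the finite set $\R = [k]$ for every $p$, the set $\{\,|\gamma(p)| : p \in \Gamma_{u}\,\}$ is a nonempty subset of $\{1,\dots,k\}$ and hence attains its minimum at some $p_{m} \in \Gamma_{u}$. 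This $p_{m}$ satisfies the hypotheses of Lemma~\ref{lemma: IE_equivalence_helper_2} (its pairwise-intersection assumption at $u$ is exactly what we assumed), so $\gamma(p_{m}) \subseteq \gamma(p)$ for all $p \in \Gamma_{u}$. Choosing any $r \in \gamma(p_{m})$ (nonempty), we get $r \in \gamma(p)$ for every $p \in \Gamma_{u}$, i.e.\ $p \in \gamma_{r}$ for every $p \in \Gamma_{u}$, i.e.\ $\Gamma_{u} \subseteq \gamma_{r}$. As $u$ was arbitrary, $L$ indirectly elicits $\ell$.

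The substantive content lives in Lemma~\ref{lemma: IE_equivalence_helper_2}, which rests on Lemma~\ref{lemma: IE_equivalence_helper_1}; the key ingredients there are the convexity of $\Gamma_{u}$ (so midpoints $\tfrac{p+p'}{2}$ remain in $\Gamma_{u}$, which follows from $\Gamma_{u} = \Delta_{n} \cap \ker \nabla L(u)^{\top}$ as in Lemma~\ref{lemma: surrogate_opt_condition_1}) together with the fact that $\gamma$ is an $\argmin$ of linear functionals, so a strict inequality on a loss vector for one of $p, p'$ propagates to the midpoint. Within the present proof, the only thing to be careful about is the existence of the size-minimal distribution $p_{m}$ (which uses only finiteness of $\R$) and the nonemptiness of $\gamma(p_{m})$, plus the vacuous case $\Gamma_{u} = \emptyset$; notably, there is no Helly-type obstacle, since we never need a common element of arbitrarily many cells directly---Lemma~\ref{lemma: IE_equivalence_helper_2} hands us such an element via the minimal cell $\gamma(p_{m})$.
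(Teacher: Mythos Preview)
Your proof is correct and follows essentially the same approach as the paper: both directions are handled exactly as the paper does, with the forward direction immediate from Definition~\ref{definition: IE} and the reverse direction obtained by picking $p_{m}\in\Gamma_{u}$ minimizing $|\gamma(\cdot)|$ and invoking Lemma~\ref{lemma: IE_equivalence_helper_2}. Your version is slightly more careful in explicitly treating the vacuous case $\Gamma_{u}=\emptyset$ and justifying the existence of $p_{m}$ via finiteness of $\R$, but otherwise it matches the paper's argument.
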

\begin{proof}
    We first show the $\implies$ direction. Since $L$ indirectly elicits $\ell$, it holds that $\forall u \in \mathbb{R}^{d}, \exists r \in \mathcal{R}$, such that $\Gamma_{u} \subseteq \gamma_{r}$. Pick any $p, p' \in \Gamma_{u}$. Clearly, $p, p' \in \gamma_{r} \implies r \in \gamma(p) \cap \gamma(p') \implies \gamma(p) \cap \gamma(p') \neq \emptyset$, for any $p, p' \in \Gamma_{u}$. 

    We now prove the $\impliedby$ direction. Suppose that for any $u \in \mathbb{R}^{d}$, it holds that $\gamma(p) \cap \gamma(p') \neq \emptyset$ for any $p, p' \in \Gamma_{u}$. Let $p_m \in \Gamma_{u}: |\gamma(p_m)| \leq |\gamma(p)|, \forall p \in \Gamma_{u}$. We know from Lemma \ref{lemma: IE_equivalence_helper_2}, that $\gamma(p_m) \subseteq \gamma(p), \forall p \in \Gamma_{u}$. This implies that $\exists r \in \mathcal{R}: r \in \gamma(p_m) \implies r \in \gamma(p), \forall p \in \Gamma_{u} \implies \exists r \in \mathcal{R}: p \in \gamma_{r}, \forall p \in \Gamma_{u} \implies \exists r \in \mathcal{R}: \Gamma_{u} \subseteq \gamma_{r}$. 
\end{proof}

\begin{lemma}\label{lemma: surrogate_level_sets_corners}
     Let $L: \mathbb{R}^{d} \to \mathbb{R}^{n}$ be a convex, differentiable surrogate.  Let $\mathcal{C}_{u} \subseteq \Gamma_{u}$ be the set of corners of $\Gamma_{u}$. Let $\ell: \mathcal{R} \to \reals^{n}$. Suppose $r \in \mathcal{R}$. 
     Then, $p \in \gamma_{r}$ for every $p \in \mathcal{C}_{u} \iff \Gamma_{u} \subseteq \gamma_{r}$. 
\end{lemma}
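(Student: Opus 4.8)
The plan is to handle the two implications separately; the reverse direction is immediate and the forward direction reduces to the elementary fact that a compact polytope is the convex hull of its corners, combined with convexity of the target level set $\gamma_r$.

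The $(\Leftarrow)$ direction needs nothing: if $\Gamma_u \subseteq \gamma_r$, then since $\mathcal{C}_u \subseteq \Gamma_u$ by definition of the corner set, every $p \in \mathcal{C}_u$ lies in $\gamma_r$.

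For $(\Rightarrow)$, suppose every corner of $\Gamma_u$ lies in $\gamma_r$. By Lemma~\ref{lemma: surrogate_level_set_polytope}, $\Gamma_u$ is a polytope, and since $\Gamma_u \subseteq \Delta_n$ it is bounded, hence a compact polytope (its closedness is also recorded in Lemma~\ref{lemma: surrogate_level_set_closed_general}). A standard fact about polytopes then gives $\Gamma_u = \mathrm{co}\,\mathcal{C}_u$: every $p \in \Gamma_u$ can be written as a finite convex combination $p = \sum_i \lambda_i c_i$ with each $c_i \in \mathcal{C}_u$, $\lambda_i \ge 0$, and $\sum_i \lambda_i = 1$. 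On the target side, unwinding Definition~\ref{defn: gamma} shows $\gamma_r = \{q \in \Delta_n : \langle q, \ell(r) - \ell(r') \rangle \le 0 \text{ for all } r' \in \mathcal{R}\}$, an intersection of closed half-spaces with $\Delta_n$, hence convex (this is also the content of Lemma~\ref{lemma: target_level_sets_polytope}). By hypothesis each $c_i \in \gamma_r$, so convexity of $\gamma_r$ gives $p = \sum_i \lambda_i c_i \in \gamma_r$. As $p \in \Gamma_u$ was arbitrary, $\Gamma_u \subseteq \gamma_r$.

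The only point requiring care --- and the step I would flag as the crux --- is the appeal to $\Gamma_u = \mathrm{co}\,\mathcal{C}_u$: this is exactly where boundedness of $\Gamma_u$ (inherited from $\Delta_n$) together with its polytope structure is essential, since for an unbounded polyhedron one would additionally have to dispatch extreme rays. The degenerate cases are benign: if $\Gamma_u = \emptyset$ then $\mathcal{C}_u = \emptyset$ and both sides of the equivalence hold vacuously, and if $\Gamma_u$ is a single point it is its own unique corner.
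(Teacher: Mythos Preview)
Your proof is correct and follows essentially the same approach as the paper: both directions are handled identically, with the forward direction reducing to $\Gamma_u = \mathrm{co}\,\mathcal{C}_u$ (via Lemma~\ref{lemma: surrogate_level_set_polytope}) together with convexity of $\gamma_r$ (via Lemma~\ref{lemma: target_level_sets_polytope}). Your version is slightly more careful in explicitly noting boundedness and dispatching the degenerate cases, but the argument is the same.
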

\begin{proof} 
    Recall from Lemma \ref{lemma: surrogate_level_set_polytope} that for any $u \in \mathbb{R}^{d}$, $\Gamma_{u}$ is a polytope. Thus, a finite set $\mathcal{C}_u \subseteq \Gamma_{u}$ exists such that $\mathcal{C}_u$ are the corners of $\Gamma_{u}$. Say $\exists r \in \mathcal{R}$, such that $p \in \gamma_{r}$, $\forall p \in \mathcal{C}_{u}$. Pick any $q \in \Gamma_{u}$. Clearly, $q \in \text{conv}(\mathcal{C}_{u}) \subseteq \gamma_{r}$, as $\gamma_{r}$ is convex by Lemma \ref{lemma: target_level_sets_polytope}. Thus, $q \in \gamma_{r} \implies \Gamma_{u} \subseteq \gamma_{r}$. For the reverse direction, suppose $\Gamma_{u} \subseteq \gamma_{r}$. Let $p \in \mathcal{C}_{u} \subseteq \Gamma_{u}$. Then $p \in \gamma_{r}$. 
\end{proof}
\begin{lemma}\label{lemma: IE_verfies_sIE}
     Let $L: \mathbb{R}^{d} \to \mathbb{R}^{n}$ be a convex, differentiable surrogate.  Let $\mathcal{C}_{u} \subseteq \Gamma_{u}$ be the set of corners of $\Gamma_{u}$ for any $u 
    \in \mathbb{R}^{d}$. Let $\ell: \mathcal{R} \to \reals^{n}$. If $\exists S \subseteq \mathcal{R}: \gamma(p) = S$ for every $p \in \mathcal{C}_{u}$ then $L$ strongly indirectly elicits $\ell$. 
\end{lemma}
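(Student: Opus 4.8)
The plan is to fix an arbitrary $u \in \mathbb{R}^{d}$ and verify the defining condition of strong indirect elicitation at $u$ with the witness set $S$ furnished by the hypothesis, i.e.\ the common value of $\gamma$ on the corner set $\mathcal{C}_{u}$ of $\Gamma_{u}$. If $\Gamma_{u} = \emptyset$ there is nothing to check, and if $\Gamma_u$ is a single point it is its own (unique) corner; so assume otherwise. By Lemma~\ref{lemma: surrogate_level_set_polytope}, $\Gamma_{u}$ is a polytope contained in the bounded set $\Delta_{n}$, so $\Gamma_{u} = \mathrm{conv}(\mathcal{C}_{u})$ with $\mathcal{C}_{u}$ finite and nonempty. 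Also $S \neq \emptyset$, since $\gamma(p)$ is an $\argmin$ over the nonempty finite set $\R$ for every $p \in \Delta_n$. The goal then reduces to showing $\gamma(q) = S$ for every $q \in \Gamma_{u}$.

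To that end I would pick an arbitrary $q \in \Gamma_{u}$ and write it as a convex combination $q = \sum_{i=1}^{m} \lambda_{i} p_{i}$ of corners with strictly positive weights $\lambda_{i} > 0$, $\sum_{i} \lambda_{i} = 1$ (discarding any corner that receives weight zero). The hypothesis is used through the following per-corner dichotomy: for each $i$, since $\gamma(p_{i}) = S$, every $s \in S$ attains $\inprod{p_{i}}{\ell(s)} = m_{i} := \min_{r \in \R}\inprod{p_{i}}{\ell(r)}$, whereas every $r \notin S$ has $\inprod{p_{i}}{\ell(r)} > m_{i}$ \emph{strictly} (because $r \notin S = \gamma(p_{i})$). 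Averaging over $i$ with the positive weights $\lambda_{i}$ then gives $\inprod{q}{\ell(s)} = \sum_{i} \lambda_{i} m_{i} =: v$ for all $s \in S$, a value not depending on which $s \in S$ we chose, and $\inprod{q}{\ell(r)} > v$ for all $r \notin S$ — the inequality being strict since it is strict in each term and each $\lambda_{i}$ is positive. Hence $\argmin_{r \in \R}\inprod{q}{\ell(r)} = S$, that is, $\gamma(q) = S$.

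Since $q \in \Gamma_{u}$ was arbitrary, $\Gamma_{u} \subseteq \gamma^{*}_{S}$, and since $u$ was arbitrary, $L$ strongly indirectly elicits $\ell$ by Definition~\ref{defn: main_strong_IE}. I do not expect a real obstacle — the core is a one-line averaging argument — but the step that warrants care is exactly the upgrade from ``$\gamma$ constant on $\mathcal{C}_{u}$'' to ``$\gamma$ constant on $\Gamma_{u}$'': it depends on $\Gamma_{u}$ being a polytope so that every point is a positive convex combination of corners (Lemma~\ref{lemma: surrogate_level_set_polytope}), and crucially on the strictness $\inprod{p_{i}}{\ell(r)} > m_{i}$ holding at \emph{every} corner when $r \notin S$, which is what the equality $\gamma(p_{i}) = S$ (rather than a mere inclusion $S \subseteq \gamma(p_{i})$) provides. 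If one prefers, the inclusion $S \subseteq \gamma(q)$ can instead be obtained from convexity of the target cells $\gamma_{r}$ (Lemma~\ref{lemma: target_level_sets_polytope}) together with Lemma~\ref{lemma: surrogate_level_sets_corners}, but the direct averaging argument already yields both inclusions simultaneously.
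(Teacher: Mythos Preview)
Your proof is correct and takes a genuinely different route from the paper's. The paper argues geometrically: from $\gamma(p)=S$ for every corner $p\in\mathcal{C}_u$ it infers $\mathcal{C}_u\subseteq\relint\bigl(\bigcap_{r\in S}\gamma_r\bigr)$, then uses convexity of the relative interior of a convex set to conclude $\Gamma_u=\conv{\mathcal{C}_u}\subseteq\relint\bigl(\bigcap_{r\in S}\gamma_r\bigr)$, which in turn forces $\gamma(q)=S$ for every $q\in\Gamma_u$. Your averaging argument bypasses the relative-interior machinery entirely: you work directly with the defining inequalities of $\gamma$, exploiting that a positive convex combination of strict inequalities remains strict. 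This is more elementary and self-contained; in particular, it does not rely on the (true but unproven in the paper) identification of $\gamma^{*}_{S}$ with $\relint\bigl(\bigcap_{r\in S}\gamma_r\bigr)$, which depends on the power-diagram structure of the target cells. The paper's approach, on the other hand, makes the geometry on the simplex more visible and connects naturally to Lemma~\ref{lemma: surrogate_level_sets_corners} and Proposition~\ref{prop: IE and SIE}. Both arguments hinge on the same two ingredients you correctly flag: that $\Gamma_u$ is a polytope (so every point is a convex combination of corners) and that the hypothesis gives $\gamma(p_i)=S$ as an \emph{equality}, not merely $S\subseteq\gamma(p_i)$.
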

\begin{proof}
    Suppose $\gamma(p) = S \subseteq R$, for every $p \in \mathcal{C}_u$. This implies that $p \in \text{relint}(\cap_{r \in S}\gamma_{r})$ for each $p \in  \mathcal{C}_u \implies \mathcal{C}_u \subseteq \text{relint}(\cap_{r \in S}\gamma_r)$.  Since $\gamma_{r}$ is a convex polytope for each $r \in S$,  the set $\cap_{r \in S}\gamma_{r}$ is convex and so is the set $\text{relint}(\cap_{r\in S}\gamma_r)$. Hence, $\text{conv}(\mathcal{C}_u) \subseteq \text{relint}(\cap_{r\in S}\gamma_r) \implies \Gamma_{u} \subseteq \text{relint}(\cap_{r\in S}\gamma_r) \implies p \in  \text{relint}(\cap_{r\in S}\gamma_r), \forall p \in \Gamma_u \implies \gamma(p) = S, \forall p \in \Gamma_{u} \implies \gamma(p) = \gamma(p'), \forall p, p' \in \Gamma_{u} \implies$ strong indirect elicitation is satisfied.
\end{proof}

 \begin{proposition}\label{prop: IE and SIE}
 Let $L: \mathbb{R}^{d} \to \mathbb{R}^{n}$ be a convex, differentiable surrogate and let $\ell: \mathcal{R} \to \reals^{n}$ be a target. Let $u \in \mathbb{R}^{d}$ be a report and suppose $\mathcal{C}_{u}$ is the finite set of corners for $\Gamma_{u}$. Then the set $\{\gamma(p) | p \in \mathcal{C}_{u}\}$ suffices to check both indirect elicitation and strong indirect elicitation at $u$. 
 \end{proposition}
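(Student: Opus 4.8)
The plan is to reduce both conditions, at the fixed report $u$, to statements about the finite corner set $\mathcal{C}_u$, invoking Lemma~\ref{lemma: surrogate_level_sets_corners} for indirect elicitation and Lemma~\ref{lemma: IE_verfies_sIE} for strong indirect elicitation, together with the trivial inclusion $\mathcal{C}_u \subseteq \Gamma_u$. Recall from Lemma~\ref{lemma: surrogate_level_set_polytope} that $\Gamma_u$ is a polytope, so $\mathcal{C}_u$ is finite and $\Gamma_u = \mathrm{conv}(\mathcal{C}_u)$; we may assume $\Gamma_u \neq \emptyset$, since otherwise $\mathcal{C}_u = \emptyset$ and both IE and strong IE hold vacuously at $u$, which is exactly what the empty collection $\{\gamma(p) : p \in \mathcal{C}_u\}$ records.

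For indirect elicitation: IE holds at $u$ precisely when $\exists r \in \mathcal{R}$ with $\Gamma_u \subseteq \gamma_r$. By Lemma~\ref{lemma: surrogate_level_sets_corners}, $\Gamma_u \subseteq \gamma_r$ iff $p \in \gamma_r$ for every corner $p \in \mathcal{C}_u$, i.e.\ iff $r \in \gamma(p)$ for every $p \in \mathcal{C}_u$, i.e.\ iff $r \in \bigcap_{p \in \mathcal{C}_u} \gamma(p)$. Hence IE holds at $u$ iff $\bigcap_{p \in \mathcal{C}_u} \gamma(p) \neq \emptyset$, a condition depending only on the collection $\{\gamma(p) : p \in \mathcal{C}_u\}$ of corner cells.

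For strong indirect elicitation: strong IE at $u$ requires $\gamma(p) = \gamma(q)$ for all $p, q \in \Gamma_u$. If $\{\gamma(p) : p \in \mathcal{C}_u\}$ is not a singleton, two corners disagree, and since $\mathcal{C}_u \subseteq \Gamma_u$ this already violates strong IE. Conversely, if $\{\gamma(p) : p \in \mathcal{C}_u\} = \{S\}$ for some $S \subseteq \mathcal{R}$, then Lemma~\ref{lemma: IE_verfies_sIE} applies directly and yields strong IE at $u$. Thus strong IE holds at $u$ iff $\{\gamma(p) : p \in \mathcal{C}_u\}$ is a singleton --- again determined by the corner cells alone. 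Combining the two equivalences, the finite data $\{\gamma(p) : p \in \mathcal{C}_u\}$ suffices to decide both IE and strong IE at $u$.

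The argument is essentially a packaging of the two cited lemmas, so the only real obstacle is bookkeeping: handling the vacuous case $\Gamma_u = \emptyset$, and clarifying that ``the set $\{\gamma(p) : p \in \mathcal{C}_u\}$'' should be read as recording, for IE, the intersection of the corner cells and, for strong IE, whether all corner cells coincide. The substantive content --- propagating membership and equality information from the corners to the whole polytope $\Gamma_u$ --- is carried by Lemmas~\ref{lemma: surrogate_level_sets_corners} and~\ref{lemma: IE_verfies_sIE}, which in turn rely on convexity of the target cells $\gamma_r$ (Lemma~\ref{lemma: target_level_sets_polytope}).
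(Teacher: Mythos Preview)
Your proposal is correct and follows essentially the same approach as the paper: the paper's proof is a one-liner stating that the result follows from Lemmas~\ref{lemma: surrogate_level_sets_corners} and~\ref{lemma: IE_verfies_sIE}, which are exactly the two lemmas you invoke. Your write-up simply unpacks those citations into explicit equivalences (IE at $u$ iff $\bigcap_{p \in \mathcal{C}_u} \gamma(p) \neq \emptyset$; strong IE at $u$ iff $\{\gamma(p) : p \in \mathcal{C}_u\}$ is a singleton) and handles the vacuous case $\Gamma_u = \emptyset$, but the substance is identical.
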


\begin{proof}
    The proof follows by Lemmas \ref{lemma: surrogate_level_sets_corners} and \ref{lemma: IE_verfies_sIE}. 
 \end{proof}

\section{Properties of convex, differentiable functions}\label{appendix: cvx_general}
\begin{lemma}\label{lemma: cvx_argmin_cvx}
    Let $f: \mathbb{R}^{d} \to \mathbb{R}$ be a convex, differentiable function. Then $\emph{argmin}_{u 
    \in \mathbb{R}^{d}} f(u)$ is convex.
\end{lemma}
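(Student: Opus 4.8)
The plan is to observe that the argmin set of $f$, when nonempty, coincides with a sublevel set of $f$, and then invoke the elementary fact that sublevel sets of convex functions are convex. Concretely, I would first dispense with the trivial case: if $\operatorname{argmin}_{u\in\reals^d} f(u) = \emptyset$, then it is (vacuously) convex and there is nothing to prove. So assume the set is nonempty and let $m := \min_{u\in\reals^d} f(u)$, which exists by assumption. Then by definition
\[
\operatorname{argmin}_{u\in\reals^d} f(u) = \{\, u \in \reals^d : f(u) = m \,\} = \{\, u \in \reals^d : f(u) \le m \,\},
\]
the last equality because $m$ is the global minimum, so no point can have $f$-value strictly below $m$.

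Next I would show directly that this set is convex, using only convexity of $f$ (differentiability is not actually needed for the conclusion, though the lemma is stated for differentiable $f$). Take any $u_1, u_2$ with $f(u_1) \le m$ and $f(u_2) \le m$, and any $\lambda \in [0,1]$. By convexity of $f$,
\[
f\big(\lambda u_1 + (1-\lambda) u_2\big) \;\le\; \lambda f(u_1) + (1-\lambda) f(u_2) \;\le\; \lambda m + (1-\lambda) m \;=\; m,
\]
so $\lambda u_1 + (1-\lambda) u_2$ again lies in the set. Hence $\operatorname{argmin}_{u\in\reals^d} f(u)$ is convex.

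There is essentially no obstacle here; the only points requiring a moment's care are (i) handling the possibly-empty argmin set separately, and (ii) noting that the argmin equals the sublevel set at the minimum value precisely because $m$ is a \emph{global} minimum. As an alternative route one could instead use the first-order characterization $u \in \operatorname{argmin} f \iff \nabla f(u) = \mathbf{0}_d$ (valid for convex differentiable $f$, cf.\ the reasoning in Lemma~\ref{lemma: surrogate_opt_condition_1}), but this does not simplify matters, since one would still need convexity of $f$ to argue that the zero-gradient set is convex; the sublevel-set argument above is the cleanest and is the one I would present.
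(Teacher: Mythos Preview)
Your proof is correct and essentially identical to the paper's: both handle the empty case first, then take two minimizers $u_1,u_2$ and use the convexity inequality $f(\lambda u_1 + (1-\lambda)u_2) \le \lambda f(u_1) + (1-\lambda)f(u_2) = m$ to conclude the convex combination is again a minimizer. Your framing via the sublevel set at $m$ is a minor cosmetic addition, but the core argument matches exactly.
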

\begin{proof}
    If $\text{argmin}_{u 
    \in \mathbb{R}^{d}} f(u) = \emptyset$, the result follows vacuously. Else, suppose $f^{*} = \text{min}_{u 
    \in \mathbb{R}^{d}} f(u)$ and that  $x, y \in \text{argmin}_{u 
    \in \mathbb{R}^{d}} f(u)$. Then for any $\lambda \in [0, 1]$, $f(\lambda x + (1-\lambda) y) \leq \lambda\cdot f(x) + (1-\lambda)\cdot f(y) = \lambda\cdot f^{*} + (1-\lambda)\cdot f^{*} = f^{*} \implies \lambda \cdot x + (1-\lambda)\cdot y \in \text{argmin}_{u \in \mathbb{R}^{d}} f(u), \forall \lambda \in [0,1]$. 
\end{proof}

\begin{lemma}\label{lemma: cvx_argmin_closed}
    Let $f: \reals^{d} \to \reals$ be a convex finite function on $\reals^d$.
    Then $\emph{argmin}_{u \in \reals^{d}} f(u)$ is closed. 
    If $\emph{argmin}_{u \in \reals^{d}} f(u)$ is bounded, $\emph{argmin}_{u \in \reals^{d}} f(u)$ is compact. 
\end{lemma}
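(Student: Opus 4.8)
The plan is to reduce the statement to the continuity of a finite convex function together with the Heine--Borel theorem. First I would dispose of the degenerate case: if $\argmin_{u \in \reals^{d}} f(u) = \emptyset$ then it is vacuously closed, and the compactness claim is vacuous as well. So assume the argmin is nonempty and set $f^{\ast} := \min_{u \in \reals^{d}} f(u)$, which exists precisely because the argmin is nonempty. The elementary but crucial reformulation is that
\[
\argmin_{u \in \reals^{d}} f(u) = \{u \in \reals^{d} : f(u) \le f^{\ast}\},
\]
i.e.\ the argmin is a sublevel set of $f$.

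Next I would invoke the classical fact that a convex function which is finite on all of $\reals^{d}$ is continuous (indeed locally Lipschitz) on $\reals^{d}$, since $\reals^{d}$ is open and $f$ is real-valued everywhere on it. Granting this, $\argmin_{u \in \reals^{d}} f(u)$ is the preimage $f^{-1}\big((-\infty, f^{\ast}]\big)$ of a closed set under a continuous map, hence closed. For the second assertion, suppose in addition that $\argmin_{u \in \reals^{d}} f(u)$ is bounded; being also closed by the first part and a subset of $\reals^{d}$, it is compact by Heine--Borel.

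I do not expect any real obstacle here; the only step that merits a word of care is the appeal to continuity of $f$. If one prefers to avoid citing it, closedness can be argued directly in the spirit of Lemma~\ref{lemma: surrogate_level_set_closed_general}: given $u_{t} \to u$ with $u_{t} \in \argmin f$, fix an arbitrary $v \in \reals^{d}$ and pass to the limit in $f(u_{t}) \le f(v)$ --- but this still needs (lower semi)continuity of $f$ to conclude $f(u) \le f(v)$, so invoking continuity of finite convex functions is the cleanest route. One could also note that $\argmin_{u \in \reals^{d}} f(u)$ is convex by Lemma~\ref{lemma: cvx_argmin_cvx}, though convexity is not needed for either conclusion.
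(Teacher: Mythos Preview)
Your proposal is correct and follows essentially the same route as the paper: invoke continuity of a finite convex function on $\reals^{d}$ (the paper cites Rockafellar, Corollary~10.1.1), identify the argmin with a sublevel set and conclude closedness (the paper cites Hiriart-Urruty--Lemar\'echal, Prop.~3.2.2), then compactness via Heine--Borel. Your write-up is in fact more explicit than the paper's, which simply cites these two references without spelling out the sublevel-set identification or the compactness step.
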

\begin{proof}

    By \cite[Corollary 10.1.1]{rockafellar1970convex} $f$ is continuous.
    By \cite[Prop 3.2.2]{hiriart1996convex} every sublevel-set of $f$ is closed.
\end{proof}

\begin{lemma}\label{lemma: cvx_ctsdiff}
Let $f: \mathbb{R}^{d} \to \mathbb{R}$ be a convex, differentiable function. Then $f$ is continuously differentiable. 
\end{lemma}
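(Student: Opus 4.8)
The plan is to show the gradient map $\nabla f : \reals^d \to \reals^d$ is continuous by combining two standard facts about finite convex functions: that they are locally Lipschitz on $\reals^d$ (which forces $\nabla f$ to be locally bounded), and that the subgradient inequality survives passage to the limit (which pins down every limit point of a sequence of gradients). This is the familiar ``locally bounded $+$ closed graph $+$ single-valued $\Rightarrow$ continuous'' argument for the subdifferential operator.

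First I would fix $x \in \reals^d$ and recall that since $f$ is convex and finite on all of $\reals^d$, it is locally Lipschitz: there are $r > 0$ and $M < \infty$ such that $f$ is $M$-Lipschitz on $\bar B(x,r)$ (see \cite{rockafellar1970convex}; mere continuity is \cite[Corollary 10.1.1]{rockafellar1970convex}, already invoked in Lemma~\ref{lemma: cvx_argmin_closed}). For $y \in B(x,r)$ and any unit vector $v$, differentiability and convexity give $\langle \nabla f(y), v\rangle = \lim_{t \downarrow 0} \tfrac{f(y+tv) - f(y)}{t} \le M$, so taking $v = \nabla f(y)/\|\nabla f(y)\|$ yields $\|\nabla f(y)\| \le M$. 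Hence $\nabla f$ is bounded on $B(x,r)$. Now take any sequence $x_k \to x$; discarding finitely many terms we may assume $x_k \in B(x,r)$, so $\{\nabla f(x_k)\}$ is bounded, and it suffices to show its only subsequential limit is $\nabla f(x)$. Let $\nabla f(x_{k_j}) \to g$. For every $y \in \reals^d$ the gradient inequality gives $f(y) \ge f(x_{k_j}) + \langle \nabla f(x_{k_j}), y - x_{k_j}\rangle$; letting $j \to \infty$ and using continuity of $f$ gives $f(y) \ge f(x) + \langle g, y - x\rangle$ for all $y$, i.e. $g \in \partial f(x)$. Since $f$ is differentiable at $x$, $\partial f(x) = \{\nabla f(x)\}$, so $g = \nabla f(x)$. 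Thus $\nabla f(x_k) \to \nabla f(x)$, and as $x$ was arbitrary, $f \in C^1$.

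The only genuinely non-routine point is the local boundedness of $\nabla f$; everything after that is the standard packaging. So the step I would be most careful about is invoking the local Lipschitz property of finite convex functions explicitly (rather than just continuity), since that is exactly what delivers the uniform bound $\|\nabla f(y)\| \le M$ on a neighborhood. An alternative that avoids citing Lipschitzness directly would be to bound the difference quotients above using continuity of $f$ on a slightly larger ball, but invoking the known Lipschitz estimate is cleaner.
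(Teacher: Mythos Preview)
Your argument is correct. The paper, however, does not prove this lemma at all: it simply cites Corollary~25.5.1 of \cite{rockafellar1970convex} and moves on. What you have written is essentially a self-contained unpacking of that corollary---the standard ``bounded subdifferential with closed graph, hence continuous when single-valued'' proof. The trade-off is the obvious one: the paper's one-line citation keeps the appendix lean and defers to a canonical source, whereas your version is independent of external references and makes the mechanism (local Lipschitzness $\Rightarrow$ gradient boundedness, plus stability of the subgradient inequality under limits) visible to the reader. Either is acceptable; just be aware that in context the authors treated this as a black-box fact rather than something warranting an in-text proof.
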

\begin{proof}
    See Corollary 25.5.1 of \citep{rockafellar1970convex}. 
\end{proof}

\begin{lemma}\label{lemma: ball_around_argmin_1}
    Let $f: \mathbb{R}^{d} \to \mathbb{R}$ be a convex, differentiable function. 
    Suppose also that $f$ is minimizable and that the set $\emph{argmin}_{u 
    \in \mathbb{R}^{d}} f(u)$ is bounded. 
    Let $\mathcal{U}^{*} := \emph{argmin}_{u \in \mathbb{R}^{d}} f(u)$ and let $f^{*} := \emph{min}_{u \in \mathbb{R}^{d}} f(u)$. 
    Then, for $\delta > 0$, it holds that: $$\emph{inf}_{u \in \mathbb{R}^{d}\setminus B_{\delta}(\mathcal{U}^{*})}f(u) = \emph{inf}_{u \in \partial \bar{B}_{\delta}(\mathcal{U}^{*})}f(u) = \emph{min}_{u \in \partial \bar{B}_{\delta}(\mathcal{U}^{*})}f(u) > f^{*}$$
\end{lemma}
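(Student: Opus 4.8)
The plan is to reduce everything to elementary facts about the Euclidean projection onto the compact convex set $\mathcal{U}^{*}$. First note that $\mathcal{U}^{*}$ is nonempty because $f$ is minimizable, convex by Lemma~\ref{lemma: cvx_argmin_cvx}, and closed by Lemma~\ref{lemma: cvx_argmin_closed}; together with the boundedness hypothesis this makes $\mathcal{U}^{*}$ a nonempty compact convex set. Write $\rho(u) := \inf_{u' \in \mathcal{U}^{*}}\|u - u'\|$ for the distance to $\mathcal{U}^{*}$; compactness makes this infimum attained, and convexity makes the minimizing point (the projection of $u$ onto $\mathcal{U}^{*}$) unique. Since $\rho$ is $1$-Lipschitz, $B_{\delta}(\mathcal{U}^{*}) = \{\rho < \delta\}$ is open and $\bar{B}_{\delta}(\mathcal{U}^{*}) = \{\rho \le \delta\}$ is compact; a supporting-hyperplane argument at the projection point identifies the topological boundary as $\partial\bar{B}_{\delta}(\mathcal{U}^{*}) = \{\rho = \delta\}$, which is nonempty and compact. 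Finally $f$ is continuous (Lemma~\ref{lemma: cvx_ctsdiff}, or \citet[Corollary~10.1.1]{rockafellar1970convex}), hence attains its infimum over the compact set $\partial\bar{B}_{\delta}(\mathcal{U}^{*})$, which gives the second equality $\inf_{u \in \partial\bar{B}_{\delta}(\mathcal{U}^{*})}f(u) = \min_{u \in \partial\bar{B}_{\delta}(\mathcal{U}^{*})}f(u)$.

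For the first equality, the inequality $\inf_{\{\rho \ge \delta\}} f \le \min_{\{\rho = \delta\}} f$ is immediate from $\{\rho = \delta\} \subseteq \mathbb{R}^{d}\setminus B_{\delta}(\mathcal{U}^{*}) = \{\rho \ge \delta\}$. For the reverse, fix $u$ with $\rho(u) = t \ge \delta$, let $u_{0}$ be its projection onto $\mathcal{U}^{*}$, and set $w := (1 - \tfrac{\delta}{t})u_{0} + \tfrac{\delta}{t}u$. The obtuse-angle characterization of the projection, $\langle u - u_{0}, u' - u_{0}\rangle \le 0$ for all $u' \in \mathcal{U}^{*}$, scales along the segment $[u_{0}, u]$, so $u_{0}$ is also the projection of $w$; hence $\rho(w) = \|w - u_{0}\| = \tfrac{\delta}{t}\,t = \delta$, i.e.\ $w \in \partial\bar{B}_{\delta}(\mathcal{U}^{*})$. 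Convexity of $f$ along $[u_{0}, u]$ and $f(u_{0}) = f^{*} \le f(u)$ give $f(w) \le (1-\tfrac{\delta}{t})f^{*} + \tfrac{\delta}{t}f(u) \le f(u)$, so $f(u) \ge f(w) \ge \min_{\{\rho = \delta\}} f$. Taking the infimum over all such $u$ yields $\inf_{\{\rho \ge \delta\}} f \ge \min_{\{\rho = \delta\}} f$, completing the first equality.

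The strict inequality is then immediate: any $w$ with $\rho(w) = \delta > 0$ satisfies $w \notin \mathcal{U}^{*} = \operatorname{argmin}_{u} f(u)$ (as $\rho$ vanishes exactly on the closed set $\mathcal{U}^{*}$), so $f(w) > f^{*}$; applying this to the minimizer of $f$ over $\partial\bar{B}_{\delta}(\mathcal{U}^{*})$ gives $\min_{u \in \partial\bar{B}_{\delta}(\mathcal{U}^{*})}f(u) > f^{*}$.

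The main obstacle is the claim that the projection point does not move as one slides from $u$ toward $\mathcal{U}^{*}$ — i.e.\ that $\rho(w) = \delta$ exactly, not merely $\le \delta$ — since this is precisely what transfers a lower bound on $f$ over the ``annulus'' $\{\rho \ge \delta\}$ to the ``sphere'' $\{\rho = \delta\}$. It is a short consequence of the variational inequality for projections onto convex sets, but it is the one place where convexity of $\mathcal{U}^{*}$ (rather than just convexity of $f$) is essential. A secondary point needing a little care is identifying $\partial\bar{B}_{\delta}(\mathcal{U}^{*})$ with $\{\rho = \delta\}$, which again relies on convexity of $\mathcal{U}^{*}$ via a supporting hyperplane at the projection point.
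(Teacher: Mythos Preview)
Your proof is correct and follows essentially the same skeleton as the paper's: compactness of $\mathcal{U}^{*}$ and of $\partial\bar{B}_{\delta}(\mathcal{U}^{*})$ handle the second equality and the strict inequality, and the first equality comes from convexity of $f$ along a segment joining a point of $\mathcal{U}^{*}$ to an exterior point $u$, which must cross the $\delta$-boundary. The execution differs slightly: the paper argues by contradiction, picking an \emph{arbitrary} $u^{*}\in\mathcal{U}^{*}$ and simply observing that the segment $\operatorname{conv}(u^{*},u')$ meets $\partial\bar{B}_{\delta}(\mathcal{U}^{*})$ somewhere, whereas you give a direct argument using the projection $u_{0}$ and the obtuse-angle inequality to pin down the crossing point $w$ exactly at $\rho(w)=\delta$. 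Your route makes explicit use of the convexity of $\mathcal{U}^{*}$ (via the variational inequality for projections and the supporting-hyperplane identification of the boundary), which is a bit more machinery than the paper needs---its segment-crossing step is purely topological and its $\partial\bar{B}_{\delta}(\mathcal{U}^{*})$ is taken \emph{by definition} to be $\bar{B}_{\delta}(\mathcal{U}^{*})\setminus B_{\delta}(\mathcal{U}^{*})=\{\rho=\delta\}$, so there is nothing to prove there. On the other hand, your argument is constructive and avoids the contradiction detour.
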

\begin{proof}
First, notice that since \( \mathcal{U}^{*} \) is bounded, it is compact by Lemma \ref{lemma: cvx_argmin_closed}. Thus, \( \bar{B}_{\delta}(\mathcal{U}^{*}) \setminus B_{\delta}(\mathcal{U}^{*}) \) $:= \partial\bar{B}_{\delta}(\mathcal{U}^{*})$ is also compact. Since \( f \) is differentiable, it is continuous everywhere, and thus \( f \) attains its infimum over the compact set $\partial\bar{B}_{\delta}(\mathcal{U}^{*})$. This proves that  
\[
\inf_{u \in \partial\bar{B}_{\delta}(\mathcal{U}^{*})} f(u) = \min_{u \in \partial\bar{B}_{\delta}(\mathcal{U}^{*})} f(u) > f^{*},
\]
where the final inequality holds as  $\partial\bar{B}_{\delta}(\mathcal{U}^{*}) \cap \mathcal{U}^{*} = \emptyset.$ We are left to show that  
\[
\inf_{u \in \mathbb{R}^{d} \setminus B_{\delta}(\mathcal{U}^{*})} f(u) = \min_{u \in \partial\bar{B}_{\delta}(\mathcal{U}^{*})} f(u).
\]  
Clearly,  $\inf_{u \in \mathbb{R}^{d} \setminus B_{\delta}(\mathcal{U}^{*})} f(u) \leq \min_{u \in \partial\bar{B}_{\delta}(\mathcal{U}^{*})} f(u),$ so for the equality to fail, we would need  
$\inf_{u \in \mathbb{R}^{d} \setminus \bar{B}_{\delta}(\mathcal{U}^{*})} f(u) < \min_{u \in \partial\bar{B}_{\delta}(\mathcal{U}^{*})} f(u).$
This, in turn, requires that there exist \( u' \in \mathbb{R}^{d} \setminus \bar{B}_{\delta}(\mathcal{U}^{*}) \) such that  
\[
f(u') < \min_{u \in \partial\bar{B}_{\delta}(\mathcal{U}^{*})} f(u).
\]  

Pick any \( u^{*} \in \mathcal{U}^{*} \) and consider the line segment  $\operatorname{conv}(u^{*}, u^{'})$, connecting \( u^{*} \) and \( u' \). There exists some \( v \in \partial\bar{B}_{\delta}(\mathcal{U}^{*}) \) such that \( v \in \operatorname{conv}(u^{*}, u^{'}) \). It holds that  
\[
f(v) > f(u') > f(u^{*}),
\]  
which violates convexity of \( f \) since \( v \in \operatorname{conv}(u^{*}, u^{'}) \), completing the proof.

\end{proof}

\begin{lemma}\label{lemma: minimizing_sequences}
    Let $f: \mathbb{R}^{d} \to \mathbb{R}$ be a convex, differentiable function. 
    Suppose also that $f$ is minimizable and that the set $\mathcal{U}^* := \emph{argmin}_{u 
    \in \mathbb{R}^{d}} f(u)$ is bounded. 
    Let $f^{*} := \emph{min}_{u \in \mathbb{R}^{d}} f(u)$. 
    Let $\{u_{t}\}_{t \in \mathbb{N}_{+}}$ be a sequence in $\mathbb{R}^{d}$. 
    Then:
    \begin{align*}
        &\emph{lim}_{t \to \infty} d(u_t, \mathcal{U}^{*}) = 0 \iff \emph{lim}_{t \to \infty} f(u_t) = f^{*} 
    \end{align*}
\end{lemma}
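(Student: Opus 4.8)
The plan is to prove the two implications separately, using compactness of $\mathcal{U}^{*}$ (which follows from boundedness together with Lemma~\ref{lemma: cvx_argmin_closed}) and, for the harder direction, the strict separation in loss value supplied by Lemma~\ref{lemma: ball_around_argmin_1}.

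For the forward direction ($\Rightarrow$), assume $d(u_t, \mathcal{U}^{*}) \to 0$. First I would observe that this already forces $\{u_t\}$ to be bounded: for all large $t$ we have $d(u_t, \mathcal{U}^{*}) \le 1$, so $u_t$ lies in the $1$-neighborhood of the compact set $\mathcal{U}^{*}$, which is bounded. Hence $\{u_t\}$ together with $\mathcal{U}^{*}$ is contained in some compact set $K \subseteq \mathbb{R}^{d}$. Since $f$ is differentiable it is continuous, hence uniformly continuous on $K$. For each $t$, pick $u_t^{*} \in \mathcal{U}^{*}$ realizing the distance (possible since $\mathcal{U}^{*}$ is compact), so that $\|u_t - u_t^{*}\| = d(u_t, \mathcal{U}^{*}) \to 0$. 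Uniform continuity of $f$ on $K$ then yields $|f(u_t) - f(u_t^{*})| = |f(u_t) - f^{*}| \to 0$, i.e. $f(u_t) \to f^{*}$.

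For the reverse direction ($\Leftarrow$), assume $f(u_t) \to f^{*}$ and argue by contradiction. If $d(u_t, \mathcal{U}^{*}) \not\to 0$, there exist $\delta > 0$ and a subsequence $\{u_{t_k}\}$ with $d(u_{t_k}, \mathcal{U}^{*}) \ge \delta$, that is, $u_{t_k} \in \mathbb{R}^{d} \setminus B_{\delta}(\mathcal{U}^{*})$ for every $k$. By Lemma~\ref{lemma: ball_around_argmin_1}, $\inf_{u \in \mathbb{R}^{d} \setminus B_{\delta}(\mathcal{U}^{*})} f(u) = \min_{u \in \partial \bar{B}_{\delta}(\mathcal{U}^{*})} f(u) =: m > f^{*}$. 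Hence $f(u_{t_k}) \ge m > f^{*}$ for all $k$, contradicting $f(u_t) \to f^{*}$. Therefore $d(u_t, \mathcal{U}^{*}) \to 0$.

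I do not expect a serious obstacle here. The reverse implication is essentially immediate once Lemma~\ref{lemma: ball_around_argmin_1} is invoked, and the only mild subtlety in the forward implication is noticing that the hypothesis itself confines $\{u_t\}$ to a compact set, so that continuity of $f$ can be upgraded to the uniform continuity needed to pass the loss convergence through.
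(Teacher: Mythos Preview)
Your proposal is correct and follows essentially the same approach as the paper: uniform continuity of $f$ on a compact neighborhood of $\mathcal{U}^{*}$ for the forward direction, and a contradiction via Lemma~\ref{lemma: ball_around_argmin_1} for the reverse direction. Your forward argument is in fact slightly cleaner than the paper's, which fixes $\bar{B}_\delta(\mathcal{U}^{*})$ first and then does a case split on whether the uniform-continuity $\delta_\epsilon$ exceeds $\delta$; by instead taking a single compact $K$ containing the whole sequence you avoid that case analysis, but the underlying idea is identical.
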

\begin{proof}
We first show that: 
\[
\lim_{t \to \infty} d(u_t, \mathcal{U}^{*}) = 0 \implies \lim_{t \to \infty} f(u_t) = f^{*}
\]

Since \( \mathcal{U}^{*} \) is bounded, it is compact by Lemma \ref{lemma: cvx_argmin_closed}. Pick \( \delta > 0 \). There exists \( T_{\delta} \in \mathbb{N}_{+} \) such that for every \( t \geq T_{\delta} \),
\[
d(u_t, \mathcal{U}^{*}) < \delta.
\]
In particular, for each \( t \geq T_{\delta} \), there exists \( u_{t}^{*} \in \mathcal{U}^{*} \) such that
\[
\| u_t - u_t^{*} \| < \delta.
\]
This implies \( u_t \in B_{\delta}(\mathcal{U}^{*}) \subseteq \bar{B}_{\delta}(\mathcal{U}^{*}) \) for every \( t \geq T_{\delta} \). Since \( \mathcal{U}^{*} \) is compact, \( \bar{B}_{\delta}(\mathcal{U}^{*}) \) is also compact.

Now, \( f \) is differentiable and therefore continuous everywhere. Since \( \bar{B}_{\delta}(\mathcal{U}^{*}) \) is compact, \( f \) is uniformly continuous within \( \bar{B}_{\delta}(\mathcal{U}^{*}) \). Pick \( \epsilon > 0 \). It suffices to show the existence of some \( T \in \mathbb{N}_{+} \) such that
\[
|f(u_t) - f^{*}| < \epsilon, \quad \forall t \geq T.
\]
By uniform continuity, there exists \( \delta_{\epsilon} > 0 \) such that
\[
|f(u) - f(v)| < \epsilon, \quad \text{whenever } \| u - v \| < \delta_{\epsilon} \text{ and } u, v \in \bar{B}_{\delta}(\mathcal{U}^{*}).
\]
If \( \delta_{\epsilon} \geq \delta \), then for any \( u_t \) with \( t \geq T_{\delta} \),
\[
\| u_t - u_{t}^{*} \| < \delta \leq \delta_{\epsilon},
\]
where \( u_t^{*} \in \mathcal{U}^{*} \), implying \( u_t, u_t^{*} \in \bar{B}_{\delta}(\mathcal{U}^{*}) \). Thus, for any \( t \geq T_{\delta} \),
\[
| f(u_t) - f(u_{t}^{*})| = |f(u_t) - f^{*}| < \epsilon.
\]
Otherwise, if \( \delta_{\epsilon} < \delta \), pick \( T_{\delta_{\epsilon}} \in \mathbb{N}_{+} \) such that \( d(u_t, \mathcal{U}^{*}) < \delta_{\epsilon} \) for every \( t \geq T_{\delta_{\epsilon}} \). Then, for each \( t \geq T_{\delta_{\epsilon}} \), there exists \( u_{t}^{*} \in \mathcal{U}^{*} \) such that
\[
\| u_t - u_{t}^{*} \| < \delta_{\epsilon}, \quad u_t, u_t^{*} \in \bar{B}_{\delta}(\mathcal{U}^{*}).
\]
Thus, for all \( t \geq T_{\delta_{\epsilon}} \),
\[
|f(u_t) - f(u_t^{*})| = |f(u_t) - f^{*}| < \epsilon.
\]

We now prove the reverse direction 
\[
\lim_{t \to \infty} f(u_t) = f^{*} \implies \lim_{t \to \infty} d(u_t, \mathcal{U}^{*}) = 0.
\]
Assume to the contrary that this implication does not hold. Then, there exists some \( \delta > 0 \) such that for every \( T \in \mathbb{N}_{+} \), there exists \( t \geq T \) such that
\[
d(u_t, \mathcal{U}^{*}) \geq \delta.
\]
This implies the existence of a subsequence \( \{u_{t_{j}}\}_{j \in \mathbb{N}_{+}} \) such that
\[
d(u_{t_{j}}, \mathcal{U}^{*}) \geq \delta, \quad \forall j \in \mathbb{N}_{+}.
\]
For every \( j \in \mathbb{N}_{+} \),
\[
f(u_{t_{j}}) \geq \inf_{u \in \mathbb{R}^{d} \setminus B_{\delta}(\mathcal{U}^{*})} f(u) = \inf_{u \in \partial\bar{B}_{\delta}(\mathcal{U}^{*})} f(u).
\]
By Lemma \ref{lemma: ball_around_argmin_1},
\[
\inf_{u \in \partial\bar{B}_{\delta}(\mathcal{U}^{*})} f(u) = \min_{u \in \partial\bar{B}_{\delta}(\mathcal{U}^{*})} f(u) > f^{*}.
\]
Thus, for every \( j \in \mathbb{N}_{+} \),
\[
f(u_{t_{j}}) - f^{*} \geq \min_{u \in \partial\bar{B}_{\delta}(\mathcal{U}^{*})} f(u) - f^{*} > 0.
\]
This contradicts the assumption that \( f(u_t) \to f^{*} \) as \( t \to \infty \), completing the proof.
\end{proof}

\begin{lemma} \label{lemma:compact-argmin-sum}
    Let $f_1, f_2: \reals^d \to \reals$ be convex finite functions such that $S_1 = \argmin_{x \in \reals^d} f_1(x)$ and $S_2 = \argmin_{x \in \reals^d} f_2(x)$ are compact and nonempty.
    Let $g(x) = f_1(x) + f_2(x)$.
    Then, $S = \argmin_{x \in \reals^d} g(x)$ is also compact and nonempty.
\end{lemma}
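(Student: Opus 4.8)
The plan is to reduce the statement to a coercivity fact about $f_1$. The key claim I would prove first is: \emph{a finite convex function $f:\reals^d \to \reals$ whose minimizer set is nonempty and bounded is coercive}, meaning $f(x)\to\infty$ as $\|x\|\to\infty$. Granting this, the lemma follows quickly. Applying the claim to $f_1$ makes $f_1$ coercive; since $f_2$ attains its minimum, $f_2 \ge \min f_2$ pointwise, so $g = f_1 + f_2 \ge f_1 + \min f_2$ is coercive as well. A coercive continuous function attains its infimum ($g$ is continuous as a finite convex function, \cite[Corollary 10.1.1]{rockafellar1970convex}: restrict $g$ to a closed ball large enough that $g$ exceeds $g(0)$ everywhere outside it, then minimize over that compact ball), so $\argmin g \neq \emptyset$. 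Finally $\argmin g = \{x : g(x) \le \min g\}$ is bounded, since an unbounded sequence inside it would contradict coercivity, and closed, hence compact, by Lemma~\ref{lemma: cvx_argmin_closed}.

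For the coercivity claim itself, I would argue by contradiction using the convexity inequality along segments emanating from a fixed minimizer. Fix $x^* \in \argmin f$, write $f^* = \min f$, and choose a radius $r$ strictly larger than $\sup_{x\in\argmin f}\|x - x^*\|$, which is finite since $\argmin f$ is bounded. If $f$ were not coercive there would be a sequence $z_k$ with $\|z_k\|\to\infty$ and $f(z_k) \le B$ for some constant $B$. For large $k$, let $w_k$ be the point of the segment $[x^*, z_k]$ at distance exactly $r$ from $x^*$; then $w_k = (1-\lambda_k)x^* + \lambda_k z_k$ with $\lambda_k = r / \|z_k - x^*\| \to 0$, so convexity gives $f(w_k) \le (1-\lambda_k)f^* + \lambda_k B$, which tends to $f^*$, and hence $f(w_k)\to f^*$. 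The points $w_k$ all lie on the sphere of radius $r$ about $x^*$, which is compact, so along a subsequence $w_k \to w^*$ on that sphere, and continuity of $f$ gives $f(w^*) = f^*$, i.e.\ $w^* \in \argmin f$ — contradicting that $\|w^* - x^*\| = r$ exceeds the diameter bound.

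The main obstacle is exactly this coercivity step: a bounded set of minimizers does not obviously prevent $f$ from flattening out to its minimum value along some direction escaping to infinity, and convexity is precisely what forbids this. Everything afterward — coercivity of $g$, nonemptiness and boundedness of $\argmin g$, and the concluding compactness — is routine, relying only on continuity of finite convex functions and Lemma~\ref{lemma: cvx_argmin_closed}. An alternative, less self-contained route uses recession-function calculus: for a finite convex function, having a nonempty bounded minimizer set is equivalent to the recession function being strictly positive on every nonzero direction, and recession functions add over sums of finite convex functions, so the recession function of $g$ is strictly positive on every nonzero direction and the same conclusion follows; I would nonetheless favor the elementary segment argument above.
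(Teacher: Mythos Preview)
Your proof is correct and, at its core, relies on the same geometric fact as the paper's: a finite convex function with nonempty bounded minimizer set must grow without bound away from that set. The execution differs, however. The paper treats $f_1$ and $f_2$ symmetrically, building for each an explicit cone lower bound outside a ball of radius $2\diam(S_i)$ and then choosing radii $r_1,r_2$ so that each $f_i$ exceeds its minimum by a prescribed $\delta$ outside $\bar B_{r_i}$; the union of these balls then contains $\argmin g$. Your route is more modular: you isolate coercivity as a standalone claim, prove it by a compactness-and-contradiction argument on a fixed sphere, and then observe that coercivity of $f_1$ alone (together with the trivial lower bound $f_2 \ge \min f_2$) suffices. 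This asymmetric use of the hypotheses is a genuine simplification---you never need the full strength of the assumption on $f_2$---and the argument is cleaner, at the cost of being less constructive than the paper's explicit radii. The recession-function alternative you mention is the textbook route (see, e.g., Rockafellar, Theorems~8.4 and~27.1) and would be shorter still, though your elementary segment argument keeps the proof self-contained within the paper's framework.
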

\begin{proof}
    By Lemma~\ref{lemma: cvx_argmin_closed}, $S$ is closed, so it suffices to show it is bounded and nonempty.
    Fix any $x_1 \in S_1$ and $x_2 \in S_2$.
    Let $y_1 = f_1(x_1)$ and $y_2 = f_2(x_2)$ be the minimia achieved by $f_1$ and $f_2$. 
    Now, choose any $x_g \in \reals^d$ and let $y = g(x_g)$.
    Note that by construction we must have $y \geq y_1 + y_2$. 
    Therefore, let $\delta := y - (y_1 + y_2) \geq 0$.

    Now, let $d_1 = \diam(S_1)$ be the maximum distance between any two points in $S_1$.
    Then $\partial B_{2d_1}(x_1)$, the set of points distance $2d_1$ from $x_1$, must be disjoint from $S_1$, so $f_1$ does not achieve its minimum on this set. 
    However, since the set is compact, we can still minimize $f_1$ on it.
    Let $x_1^* = \argmin_{x \in \partial B_{2d_1}(x_1)} f_1(x)$, and $y_1^* = f_1(x_1^*) > y_1$.
    By convexity, the segment between $(x_1, y_1)$ and any other point in the epigraph of $f_1$ must be entirely contained within the epigraph.
    In particular, for any $x$ outside the ball of radius $2d_1$, the line connecting $(x_1, y_1)$ and $(x, f(x))$ must pass through or above $(x', y_1^*)$ for some $x' \in \partial B_{2d_1}(x_1)$.
    Essentially, this tells us that outside the $2d_1$-ball the epigraph of $f_1$ lies above the cone of slope $\frac{y_1^* - y_1}{2d_1}$ centered at $x_1$.
    Algebraically, this means that for any $x \not\in B_{2d_1}(x_1)$, $f_1(x) \geq \frac{y_1^* - y_1}{2d_1} \|x - x_1\| + y_1$.
    In particular, if we let $r_1 = \max(2d_1, \frac{\delta 2d_1}{y_1^* - y_1})$, then for any $x \not\in \overline{B}_{r_1}(x_1)$, 
    \[ f_1(x) > \frac{y_1^* - y_1}{2d_1} r_1 + y_1 \geq y_1 + \delta ~.\]

    We can repeat the same process for $x_2$ and $f_2$, letting $d_2 = \diam(S_2)$, 
    
    $x_2^* = \argmin_{x \in \partial B_{2d_2}(x_2)} f_2(x)$, and $y_2 = f_2(x_2^*)$, and $r_2 = \max(2d_2, \frac{\delta 2d_2}{y_2^* - y_2})$, we have for any $x \not\in \overline{B}_{r_2}(x_2)$, 
    \[ f_2(x) > y_2 + \delta ~.\]
    Let $B = \overline{B}_{r_1}(x_1) \cup \overline{B}_{r_2}(x_2)$.
    Combining the previous two equations, we have that for any $x \not\in B$, 
    \[ g(x) = f_1(x) + f_2(x) > (y_1 + \delta) + (y_2 + \delta) \geq y ~.\]

    Recall that $y = g(x_g)$ was chosen arbitrarily.
    In particular, we must have $\inf_{x \in \reals^d} g(x) \leq y$. 
    Therefore, $g$ can achieve its minimum only on $B$, so we can equivalently define $S = \argmin_{x \in B} g(x)$.
    Finally, since $B$ is bounded, $S$ must be as well, and since it is closed the argmin of $g$ must be achieved somewhere, so $S$ is nonempty.
    
\end{proof}

\begin{theorem}\label{theorem: compact_nonempty_easy}
Let $L: \mathbb{R}^{d} \to \mathbb{R}^{n}$. If for each $y \in [n]$, $L(\cdot)_{y}:\mathbb{R}^{d} \to \mathbb{R}$ is convex, and $\argmin_{u \in \mathbb{R}^{d}} L(u)_{y}$ is non-empty and compact, then $\Gamma(p)$ is non-empty and compact for each $p \in \Delta_{n}$. 
\end{theorem}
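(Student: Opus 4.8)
The plan is to write $\langle p, L(\cdot)\rangle = \sum_{y \in [n]} p_y\, L(\cdot)_y$ as a finite sum of convex finite functions, each with non-empty compact argmin, and then invoke Lemma~\ref{lemma:compact-argmin-sum} inductively. By Lemma~\ref{lemma: cvx_argmin_closed} the set $\Gamma(p)$ is automatically closed, so the content of the statement is really non-emptiness together with boundedness, which is exactly what Lemma~\ref{lemma:compact-argmin-sum} delivers for a sum of two such functions.

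First I would restrict attention to the support $Y_p := \{y \in [n] : p_y > 0\}$, which is non-empty since $p \in \Delta_n$. For $y \in Y_p$, the map $u \mapsto p_y\, L(u)_y$ is convex and finite on $\reals^d$, and since scaling a function by a positive constant does not change its set of minimizers, $\argmin_u p_y\, L(u)_y = \argmin_u L(u)_y$, which is non-empty and compact by hypothesis. For $y \notin Y_p$ the term $p_y\, L(\cdot)_y$ is identically zero and may be dropped, so $\langle p, L(\cdot)\rangle = \sum_{y \in Y_p} p_y\, L(\cdot)_y$ as functions on $\reals^d$.

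Next I would induct on $|Y_p|$. The base case $|Y_p| = 1$ is immediate from the previous paragraph. For the inductive step, pick $y_0 \in Y_p$ and write $\sum_{y \in Y_p} p_y\, L(\cdot)_y = \big(\sum_{y \in Y_p \setminus \{y_0\}} p_y\, L(\cdot)_y\big) + p_{y_0}\, L(\cdot)_{y_0}$; the first summand is convex, finite, and has non-empty compact argmin by the inductive hypothesis, and the second has non-empty compact argmin as shown above, so Lemma~\ref{lemma:compact-argmin-sum} applies and the sum again has non-empty compact argmin. Taking $|Y_p|$ steps shows $\Gamma(p) = \argmin_{u \in \reals^d} \langle p, L(u)\rangle$ is non-empty and compact, as desired.

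Since every step is a direct appeal to the already-established Lemma~\ref{lemma:compact-argmin-sum}, I do not anticipate any genuine obstacle. The only point that needs a moment's care is excluding the coordinates with $p_y = 0$ \emph{before} applying the lemma: the lemma requires both summands to have non-empty argmin, whereas $0 \cdot L(\cdot)_y \equiv 0$ has argmin equal to all of $\reals^d$, which is not compact. Once the sum is taken over the support only, the induction is pure bookkeeping.
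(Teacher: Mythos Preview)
Your proposal is correct and follows the same approach as the paper: scale each component by $p_y$ and apply Lemma~\ref{lemma:compact-argmin-sum} inductively. In fact, your version is slightly more careful than the paper's, which asserts that $p_y\,L(\cdot)_y$ is ``$L(\cdot)_y$ scaled by a positive scalar'' for each $y\in[n]$ without handling the possibility $p_y=0$; your explicit restriction to the support $Y_p$ cleanly avoids the issue that the zero function has non-compact argmin.
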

\begin{proof}
    Pick any $p \in \Delta_{n}$. Then, for each $y \in [n]$, $p_y\cdot L(\cdot)_{y}$ is convex. Further, notice that since $p_{y}\cdot L(\cdot)_{y}$ is just $L(\cdot)_{y}$ scaled by a positive scalar, it follows that $\text{arg min}_{u \in \mathbb{R}^{d}} p_{y}\cdot L(\cdot)_{y} = \text{arg min}_{u \in \mathbb{R}^{d}} L(\cdot)_{y}$, and thus, $\text{arg min}_{u \in \mathbb{R}^{d}}p_{y}\cdot L(\cdot)_{y}$ is non-empty and compact for each $y \in [n]$. Applying Lemma \ref{lemma:compact-argmin-sum} inductively, it follows that $\text{arg min}_{u \in \mathbb{R}^{d}} \langle p, L(\cdot)\rangle$ is non-empty and compact. Since we picked $p$ arbitrarily, it follows that $\Gamma(p)$ is non-empty and compact for each $p \in \Delta_{n}$. 
\end{proof}

\section{One-Dimensional Surrogate Losses} \label{appendix: 1d}
 \begin{definition}\label{defn: Orderable}
     \emph{\textbf{(Orderable)} }\citep{lambert2011elicitation} A finite property $\gamma: \Delta_{n} \rightrightarrows \mathcal{R}$ is \emph{orderable}, if there is an enumeration of $\mathcal{R} = \{r_1, r_2, ..., r_{k}\}$ such that for all $i \leq k -1$, we have $\gamma_{r_{j}} \cap \gamma_{r_{j+1}}$ is a hyperplane intersected with $\Delta_{n}$. We say that the ordered tuple $E_{\gamma} := (r_1, r_2, ..., r_k)$ is the enumeration associated with $\mathcal{R}$. 
 \end{definition}

Without loss of generality, we assume for the rest of this section that for any finite orderable property $\gamma$, it holds that, $\gamma_{j} \cap \gamma_{j+1}$ is a hyperplane intersected with $\Delta_{n}, \forall j \in [k-1]$. In particular, the enumeration associated with $\mathcal{R}$ will always assumed to be $E_{\gamma} = (1, 2, ..., k-1, k)$. 

\begin{theorem}\label{theorem: 1d_IE_implies_orderable}\citep{finocchiaro2020embedding}
    If a convex surrogate loss $L: \mathbb{R} \to \mathbb{R}^{n}$ indirectly elicits a target loss $\ell: \mathcal{R} \to \reals^{n}$, then the property $\gamma = \emph{prop[$\ell$]}$ is orderable. 
\end{theorem}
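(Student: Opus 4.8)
The plan is to transport the total order on the one-dimensional surrogate domain $\reals$ onto $\R$ via the link. Fix a link $\psi$ witnessing IE (Definition~\ref{definition: IE}), so $\Gamma_u\subseteq\gamma_{\psi(u)}$ for all $u$; under Assumption~\ref{assumption 1} each $\Gamma(p)$ is a nonempty compact interval of $\reals$. Recall (Lemma~\ref{lemma: target_level_sets_polytope}) that $\{\gamma_r\}_{r\in\R}$ is a power diagram: the $\gamma_r$ are full-dimensional convex polytopes covering $\Delta_n$, and for $r\neq r'$ the intersection $\gamma_r\cap\gamma_{r'}$ is contained in the hyperplane $H_{r,r'}=\{p:\langle p,\ell(r)-\ell(r')\rangle=0\}$, so $\relint(\gamma_r)$ is disjoint from every other cell. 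The task is to produce an enumeration $r_1,\dots,r_k$ of $\R$ with $\gamma_{r_j}\cap\gamma_{r_{j+1}}$ equal to a hyperplane intersected with $\Delta_n$ for each $j$.

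\textbf{Step 1: the monotone sweep.} Convexity of each $\langle p,L(\cdot)\rangle$ yields the comparative-statics fact that drives the argument: if $u\le u'$, $p\in\Gamma_u$, and $p'\in\Gamma_{u'}$, then for every $q$ on the segment $[p,p']$ the function $\langle q,L(\cdot)\rangle$ is nonincreasing on $(-\infty,u]$ and nondecreasing on $[u',\infty)$, so $\emptyset\neq\Gamma(q)\subseteq[u,u']$. Two consequences: first, $\{u:\Gamma_u\neq\emptyset\}$ is an interval and the nonempty level sets sweep monotonically across $\Delta_n$ as $u$ increases; second, since the argmin correspondence $\Gamma$ is upper hemicontinuous (Berge, using Assumption~\ref{assumption 1}), the sets $\{q:\Gamma(q)\subseteq(-\infty,u)\}$ and $\{q:\Gamma(q)\subseteq(u,\infty)\}$ are open, and together with $\Gamma_u$ they partition $\Delta_n$ for each $u$ (each $\Gamma(q)$ is an interval, so it cannot straddle $u$ without containing it).

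\textbf{Step 2: the enumeration.} For $r\in\R$ set $U_r:=\{u:\Gamma_u\cap\relint(\gamma_r)\neq\emptyset\}$. If $u\in U_r\cap U_{r'}$ then $\Gamma_u\subseteq\gamma_{\psi(u)}$ meets $\relint(\gamma_r)$ and $\relint(\gamma_{r'})$; since distinct cells overlap only on boundaries, $r=r'=\psi(u)$, so the $U_r$ are pairwise disjoint. Each $U_r$ is an interval (run the Step~1 sweep along a segment joining two points of $\relint(\gamma_r)$, which stays in $\relint(\gamma_r)$, and use the open-partition fact to see the sweep is onto the intervening values) and nonempty (by non-redundancy there is $p$ with $\gamma(p)=\{r\}$, so $p\in\relint(\gamma_r)$ and any $u\in\Gamma(p)$ lies in $U_r$). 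A family of pairwise-disjoint nonempty intervals of $\reals$ is linearly ordered; enumerate $\R=\{r_1,\dots,r_k\}$ accordingly, and for consecutive reports pick $u^j$ with $\Gamma_{u^j}\neq\emptyset$ lying between $U_{r_j}$ and $U_{r_{j+1}}$. Monotonicity together with IE then forces $\Gamma_{u^j}\subseteq\gamma_{r_j}\cap\gamma_{r_{j+1}}\subseteq H_{r_j,r_{j+1}}$, since $\relint(\gamma_{r_j})$ is swept strictly before $u^j$ and $\relint(\gamma_{r_{j+1}})$ strictly after.

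\textbf{Step 3 (the crux): the transition is a full slice.} It remains to upgrade $\Gamma_{u^j}\subseteq H_{r_j,r_{j+1}}\cap\Delta_n$ to equality, which then gives $\gamma_{r_j}\cap\gamma_{r_{j+1}}=H_{r_j,r_{j+1}}\cap\Delta_n$ by squeezing. Here I would use that $\Gamma_{u^j}$ \emph{disconnects} $\Delta_n$: by Step~1, $\Delta_n\setminus\Gamma_{u^j}$ is the disjoint union of the open sets $\{q:\Gamma(q)\subseteq(-\infty,u^j)\}$ and $\{q:\Gamma(q)\subseteq(u^j,\infty)\}$, both nonempty (they contain $\relint(\gamma_{r_j})$ and $\relint(\gamma_{r_{j+1}})$ respectively). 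But a \emph{proper} closed subset $C$ of the slice $H_{r_j,r_{j+1}}\cap\Delta_n$ cannot disconnect $\Delta_n$: a point in the relative interior of $(H_{r_j,r_{j+1}}\cap\Delta_n)\setminus C$ has a neighborhood in $\Delta_n$ that meets both sides of $H_{r_j,r_{j+1}}$ and avoids $C$, so $\Delta_n\setminus C$ is connected. Hence $\Gamma_{u^j}=H_{r_j,r_{j+1}}\cap\Delta_n$, finishing the proof. I expect Step~3 to be the main obstacle, since it is where the global consistency of the monotone sweep, non-redundancy, and the power-diagram structure must all be combined at once; the upper-hemicontinuity inputs and the routine but fiddly topological bookkeeping (intervals $U_r$ possibly half-open, relative interiors, boundary cases on $\partial\Delta_n$) are the remaining places that need care.
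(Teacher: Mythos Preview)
The paper does not give its own proof of Theorem~\ref{theorem: 1d_IE_implies_orderable}; it is cited from \cite{finocchiaro2020embedding} and used as a black box (e.g.\ at the start of the proof of Lemma~\ref{lemma: 1d_boundary_level_sets}). So there is no in-paper argument to compare your attempt against.

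Your outline has the right shape---pull back the order on $\reals$ to $\R$ through the surrogate minimizers---but two issues stand out. First, you invoke Assumption~\ref{assumption 1} to get compact nonempty $\Gamma(p)$ and then upper hemicontinuity via Berge in Step~1, whereas the theorem as stated assumes only convexity of $L$; the cited result holds without minimizability, so you are proving a strictly weaker statement than the one you quote. Second, and more substantively, the key assertion at the end of Step~2, ``IE then forces $\Gamma_{u^j}\subseteq\gamma_{r_j}\cap\gamma_{r_{j+1}}$,'' is not justified: IE supplies only $\Gamma_{u^j}\subseteq\gamma_s$ for \emph{some} $s\in\R$, and the monotone-sweep facts you have established show that $\Gamma_{u^j}$ misses every $\relint(\gamma_r)$, but neither pins down $s\in\{r_j,r_{j+1}\}$ nor that $\Gamma_{u^j}$ lies on the particular facet $\gamma_{r_j}\cap\gamma_{r_{j+1}}$ rather than some other facet of $\gamma_s$ (or on $\partial\Delta_n$). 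Step~3 then takes the containment $\Gamma_{u^j}\subseteq H_{r_j,r_{j+1}}$ as its starting hypothesis, so this gap propagates and the disconnection argument as written cannot close it. A repair is possible---for instance, first argue directly that the convex set $\Gamma_{u^j}$, being contained in a finite union of hyperplanes and separating $\Delta_n$, must equal a full slice $H\cap\Delta_n$, and only afterwards identify $H$ with $H_{r_j,r_{j+1}}$ using the power-diagram structure of $\gamma$---but that identification is real additional work, not the ``routine bookkeeping'' you flag at the end.
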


\begin{definition}\label{defn: intersection_graph}
    \emph{\textbf{(Intersection Graph)} }\citep{finocchiaro2020embedding} Given a discrete loss $\ell: \mathcal{R} \to \reals^{n}$ and associated finite property $\gamma = \text{prop}[\ell]$, the \emph{intersection graph} has vertices $\mathcal{R}$ and edges $(r, r')$ if $\gamma_r \cap \gamma_{r'} \cap \text{relint}(\Delta_n) \neq \emptyset$. 
\end{definition}

\begin{lemma}\label{lemma: intersection_graph_result}\citep{finocchiaro2020embedding} 
    A finite property $\gamma$ is \emph{orderable} if and only if its intersection graph is a path, i.e., a connected graph where two nodes have degree $1$ and all other nodes have degree $2$.
\end{lemma}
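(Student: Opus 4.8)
The plan is to reduce both implications to the combinatorics of the level‑set tiling. By Lemma~\ref{lemma: target_level_sets_polytope} the cells $\{\gamma_r\}_{r\in\mathcal{R}}$ are full‑dimensional convex polytopes that tile $\Delta_n$ and form a power diagram, so any two cells $\gamma_r,\gamma_{r'}$ meet, if at all, inside the bisector hyperplane $H_{r,r'}:=\{p:\langle p,\ell(r)-\ell(r')\rangle=0\}$; for an enumeration $r_1,\dots,r_k$ write $H_i:=H_{r_i,r_{i+1}}$. I would invoke as a black box \citet[Theorem~11]{finocchiaro2020embedding}: if $\gamma$ is orderable with enumeration $r_1,\dots,r_k$, there is a coordinatewise‑monotone sequence $v(0)\le v(1)\le\cdots\le v(k)$ in $\reals^n$ with $\gamma_{r_i}=\{p\in\Delta_n:\langle p,v(i-1)\rangle\le 0\le\langle p,v(i)\rangle\}$, up to lower‑dimensional boundary pieces. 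Two facts then flow from this: monotonicity gives $\langle p,v(0)\rangle\le\cdots\le\langle p,v(k)\rangle$ for every $p\in\Delta_n$, and non‑redundancy forces $v(i-1)\neq v(i)$ for each $i$ (a coincidence would make a cell lower‑dimensional, contradicting Lemma~\ref{lemma: target_level_sets_polytope}).

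For the forward direction ($\gamma$ orderable $\Rightarrow$ path), fix the orderable enumeration and the vectors $v(\cdot)$. From the slab formula, $\gamma_{r_i}\cap\gamma_{r_{i+1}}=\{p\in\Delta_n:\langle p,v(i)\rangle=0\}$, which lies in a hyperplane; since each of $\gamma_{r_i},\gamma_{r_{i+1}}$ has a point uniquely optimal for it with positive coordinates (non‑redundancy plus full‑dimensionality), a segment argument produces a point of this set in $\relint(\Delta_n)$, so $(r_i,r_{i+1})$ is an edge. These $k-1$ edges span a path, giving connectedness. There are no other edges: if $j\ge i+2$ and $p\in\gamma_{r_i}\cap\gamma_{r_j}\cap\relint(\Delta_n)$, then $\langle p,v(i)\rangle\ge 0$ and $\langle p,v(j-1)\rangle\le 0$, while $\langle p,v(i)\rangle\le\langle p,v(j-1)\rangle$ by monotonicity; hence $\langle p,v(i)\rangle=\langle p,v(i+1)\rangle=0$, which is impossible since $p\gg 0$, $v(i)\le v(i+1)$, and $v(i)\neq v(i+1)$ give $\langle p,v(i+1)-v(i)\rangle>0$. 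So the edge set is exactly $\{(r_i,r_{i+1}):i<k\}$, i.e.\ a path with $r_1,r_k$ of degree $1$ and all others of degree $2$.

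For the reverse direction (path $\Rightarrow$ $\gamma$ orderable), order the vertices of the path as $r_1-r_2-\cdots-r_k$ and take this as the candidate enumeration. The power‑diagram structure gives $\gamma_{r_i}\cap\gamma_{r_{i+1}}\subseteq H_i\cap\Delta_n$, so it suffices to prove $H_i\cap\Delta_n\subseteq\gamma_{r_i}$ (the symmetric inclusion for $\gamma_{r_{i+1}}$ is then automatic, as $r_i$ and $r_{i+1}$ are tied on $H_i$). Since $(r_i,r_{i+1})$ is an edge, $r_i$ is optimal at some point of $H_i\cap\relint(\Delta_n)$. If $r_i$ failed to be optimal at another point of $H_i\cap\relint(\Delta_n)$, then along a segment inside $H_i\cap\relint(\Delta_n)$ joining the two, the last point $p^{*}$ at which $r_i$ is optimal admits some $r_j$ ($j\neq i$) strictly beating $r_i$ just beyond $p^{*}$; since $r_i$ and $r_{i+1}$ coincide on $H_i$, $j\neq i+1$, and by continuity $r_j$ is also optimal at $p^{*}$. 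Then $p^{*}\in\relint(\Delta_n)$ witnesses both edges $(r_i,r_j)$ and $(r_{i+1},r_j)$, making $r_j$ adjacent to both $r_i$ and $r_{i+1}$, which is impossible in a path. Hence $r_i$ is optimal on all of $H_i\cap\relint(\Delta_n)$, and taking closures (the edge hypothesis rules out the degenerate case where $H_i\cap\Delta_n$ sits inside $\bd(\Delta_n)$) on all of $H_i\cap\Delta_n$, so $\gamma$ is orderable with this enumeration.

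The main obstacle is the step I would write most carefully: extracting from \citet[Theorem~11]{finocchiaro2020embedding} the precise slab description of the cells together with the caveat that it holds only up to lower‑dimensional boundary pieces — once that is in place, coordinatewise monotonicity of $v(\cdot)$ does the real work in the forward direction, since it is exactly what prevents the bisector hyperplanes from crossing inside the simplex, which is the geometric meaning of ``the intersection graph is a path.'' A recurring technical nuisance in both directions is controlling hyperplane slices on $\bd(\Delta_n)$: one repeatedly needs that a slice $H_i\cap\Delta_n$ cannot collapse onto a proper face of $\Delta_n$ nor lose dimension, which each time follows from non‑redundancy and the full‑dimensionality of the cells (Lemma~\ref{lemma: target_level_sets_polytope}).
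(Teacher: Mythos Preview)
The paper does not prove this lemma; it is stated with a citation to \citet{finocchiaro2020embedding} and used as an external result, so there is no in-paper proof to compare your proposal against.

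On the merits of your argument: the reverse direction (path $\Rightarrow$ orderable) is self-contained and the contradiction via a third report $r_j$ forced to be adjacent to both $r_i$ and $r_{i+1}$ is clean. The forward direction is where you should be careful: you invoke \citet[Theorem~11]{finocchiaro2020embedding} as a black box to obtain the monotone vectors $v(0),\dots,v(k)$, but that theorem and the present lemma are developed jointly in the source, so you should verify that the proof of Theorem~11 there does not itself rely on the intersection-graph characterization --- otherwise your forward direction is circular. A safer, direct argument for the forward direction avoids Theorem~11 entirely: from the orderable definition, $\gamma_{r_i}\cap\gamma_{r_{i+1}}=H_i\cap\Delta_n$ is a full $(n-2)$-dimensional slice, so $H_i$ separates $\Delta_n$ into two pieces, one containing $\relint(\gamma_{r_1}),\dots,\relint(\gamma_{r_i})$ and the other containing $\relint(\gamma_{r_{i+1}}),\dots,\relint(\gamma_{r_k})$ (induct on $i$ using that only consecutive cells share a facet on $H_i$); then for $|i-j|\ge 2$ the cells $\gamma_{r_i}$ and $\gamma_{r_j}$ lie on opposite sides of some $H_m$ with $i\le m<j$, so any common point lies in $H_m\cap\Delta_n=\gamma_{r_m}\cap\gamma_{r_{m+1}}$, and one checks such points cannot lie in $\relint(\Delta_n)$ unless $\{i,j\}=\{m,m+1\}$. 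Your boundary caveats (closure from $\relint(\Delta_n)$ to $\Delta_n$, non-degeneracy of $H_i\cap\Delta_n$) are the right things to flag and are handled by non-redundancy and Lemma~\ref{lemma: target_level_sets_polytope} as you note.
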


 \begin{lemma}\label{lemma: 1d_boundary_level_sets}
     Let $L: \mathbb{R} \to \mathbb{R}^{n}$ be a convex, differentiable surrogate and suppose $\ell: \mathcal{R} \to \reals^{n}$. If $L$ indirectly elicits $\ell$, then there exist disjoint sets $ I^{1}, I^{2}, ..., I^{k-1} \subset \mathbb{R}^{d}$, where for each $j \in [k-1]$, $I^{j} := \{u^{*} \in \mathbb{R}| \Gamma_{u^{*}} = \gamma_{j} \cap \gamma_{j+1}\}$. For each $j \in [k-1]$, the set $I^{j}$ is either a singleton $\{u^{j}\}$ or a closed compact interval $[u^{j,1}, u^{j, 2}]$.
 \end{lemma}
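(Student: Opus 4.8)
The plan is to first extract the structure forced by $1$-d indirect elicitation, then pin each surrogate level set we care about to the corresponding target boundary, and finally read off the shape and disjointness claims.

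\emph{Setup.} Since $L$ indirectly elicits $\ell$, Theorem~\ref{theorem: 1d_IE_implies_orderable} makes $\gamma=\mathrm{prop}[\ell]$ orderable; with the standing enumeration $(1,\dots,k)$, each $B_j:=\gamma_j\cap\gamma_{j+1}$ equals $A_j\cap\Delta_n$ for an affine hyperplane $A_j$ of $\mathrm{aff}(\Delta_n)$, and by Lemma~\ref{lemma: intersection_graph_result} the intersection graph is the path $1\!-\!\cdots\!-\!k$. Hence $B_j\cap\mathrm{relint}(\Delta_n)\neq\emptyset$ (so $\mathrm{affdim}(B_j)=n-2$), while for $|a-b|\geq 2$ we have $\gamma_a\cap\gamma_b\cap\mathrm{relint}(\Delta_n)=\emptyset$. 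Writing $v(u):=\nabla L(u)$, Lemmas~\ref{lemma: surrogate_opt_condition_1}, \ref{lemma: jacobian_rank_positive}, \ref{lemma: surrogate_level_set_polytope} and \ref{lemma: cvx_ctsdiff} give that $v$ is continuous with each coordinate nondecreasing, $v(u)\neq 0$ for all $u$, and $\Gamma_u=\Delta_n\cap\{p:\langle v(u),p\rangle=0\}$ is a polytope of affine dimension $\leq n-2$. By Assumption~\ref{assumption 1} and Theorem~\ref{theorem: compact_nonempty_easy} each $\Gamma(p)$ is nonempty and compact, and since $v(\cdot)_y$ is strictly negative before and strictly positive after the compact set $\operatorname{argmin}L(\cdot)_y$, the set $\{u:\Gamma_u\neq\emptyset\}$ is a bounded closed interval.

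\emph{Shape and disjointness.} If $u_1<u_3<u_2$ with $u_1,u_2\in I^j$, then for each $p\in B_j$ the nondecreasing map $u\mapsto\langle v(u),p\rangle$ vanishes at $u_1$ and $u_2$, hence at $u_3$, so $B_j\subseteq\Gamma_{u_3}$; since $\mathrm{affdim}(\Gamma_{u_3})\leq n-2=\mathrm{affdim}(B_j)$ this forces $\mathrm{aff}(\Gamma_{u_3})=A_j$ and thus $\Gamma_{u_3}\subseteq A_j\cap\Delta_n=B_j$, i.e.\ $u_3\in I^j$. So $I^j$ is an interval; it is closed by the same vanishing argument along a convergent sequence (continuity of $v$) and bounded since $I^j\subseteq\{u:\Gamma_u\neq\emptyset\}$, so once nonempty it is a singleton or a compact interval. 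For disjointness, $u\in I^j\cap I^{j'}$ with $j<j'$ forces $B_j=\Gamma_u=B_{j'}$, hence $B_j\subseteq\gamma_j\cap\gamma_{j'+1}$ with $|j-(j'+1)|\geq 2$, so $B_j\cap\mathrm{relint}(\Delta_n)=\emptyset$, contradicting the path structure.

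\emph{Nonemptiness.} Pick $b\in\mathrm{relint}(B_j)\cap\mathrm{relint}(\Delta_n)$, nonempty because $B_j\cap\mathrm{relint}(\Delta_n)$ is a nonempty relatively open subset of $B_j$. A point of $\mathrm{relint}(\Delta_n)$ lying in a third cell $\gamma_r$ besides $\gamma_j,\gamma_{j+1}$ would create a triangle in the intersection graph, so $\gamma(b)=\{j,j+1\}$. Take any $u_b\in\Gamma(b)$ (nonempty by Assumption~\ref{assumption 1}). Indirect elicitation gives $\Gamma_{u_b}\subseteq\gamma_r$ for some $r\in\gamma(b)=\{j,j+1\}$, and since $B_j$ is a face of the polytope $\gamma_r$, the affine function $p\mapsto\langle w_j,p\rangle$ (with $w_j$ the normal of $A_j$ in $\mathrm{aff}(\Delta_n)$) does not change sign on $\gamma_r\supseteq\Gamma_{u_b}$. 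Because $\Gamma_{u_b}\ni b$ is nonempty and $v(u_b)\neq 0$, we have $v(u_b)\not\parallel\mathbf 1$, so $\mathrm{affdim}(\Gamma_{u_b})=n-2$; and since $b\in\mathrm{relint}(\Delta_n)$, the point $b$ lies in the relative interior of $\Gamma_{u_b}$. An affine function of constant sign on a convex set that vanishes at a relative interior point vanishes identically, so $\Gamma_{u_b}\subseteq A_j\cap\Delta_n=B_j$, and matching affine dimensions gives $\Gamma_{u_b}=B_j$. Hence $u_b\in I^j$, so $\emptyset\neq\Gamma(b)\subseteq I^j$.

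\emph{Main obstacle.} I expect the bottleneck is not the ``pin to the boundary'' step (which is short once $b\in\mathrm{relint}(\Gamma_{u_b})$ is in hand) but the supporting power-diagram/orderability geometry: that $B_j=A_j\cap\Delta_n$ is a facet of both $\gamma_j$ and $\gamma_{j+1}$, that $\gamma(b)=\{j,j+1\}$ exactly on $\mathrm{relint}(B_j)\cap\mathrm{relint}(\Delta_n)$, and the careful handling of possibly lower-dimensional level sets $\Gamma_u$ (and of the endpoints of $\{u:\Gamma_u\neq\emptyset\}$) throughout.
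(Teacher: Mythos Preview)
Your proof is correct but organizes the argument differently from the paper. The paper picks $p\in\mathrm{relint}(B_j)$ and shows directly that $I^j=\Gamma(p)$, which yields nonemptiness and compactness in one stroke from Assumption~\ref{assumption 1}; its containment $\Gamma_{u^j}\subseteq B_j$ goes via a contradiction exploiting that $\Gamma_{u^j}$, being a subspace slice of $\Delta_n$, must have extreme points on $\partial\Delta_n$ and hence would cross into the opposite cell. You instead decouple nonemptiness from shape: you get $\Gamma(b)\subseteq I^j$ by the cleaner ``face-normal'' argument (the affine functional $\langle w_j,\cdot\rangle$ has constant sign on $\gamma_r$ and vanishes at $b\in\mathrm{relint}(\Gamma_{u_b})$, hence vanishes identically), and then prove $I^j$ is a compact interval separately using coordinatewise monotonicity of $v=\nabla L$, continuity, and the boundedness of $\{u:\Gamma_u\neq\emptyset\}$ from Assumption~\ref{assumption 1}. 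Both routes are valid; the paper's identification $I^j=\Gamma(p)$ is slightly more economical and is reused downstream (Lemma~\ref{lemma: main_optimal_surrogates_1d}), while your face-normal step is arguably crisper than the paper's ``extend to the boundary'' contradiction. Note that you never write $I^j=\Gamma(b)$ explicitly, though it follows immediately from your work (if $u\in I^j$ then $b\in B_j=\Gamma_u$, so $u\in\Gamma(b)$); making this explicit would align with how the paper packages the conclusion.
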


\begin{proof}
    Since $L$ indirectly elicits $\ell$, it follows by Theorem \ref{theorem: 1d_IE_implies_orderable} that $\gamma$ is orderable. Thus, for each $j \in [k-1]$, $\gamma_{j} \cap \gamma_{j+1}$ is a hyperplane intersected with $\Delta_{n}$. Denote $T_j := \gamma_{j} \cap \gamma_{j+1}$. By the non-redundancy of target reports, it holds for any $j \in [k-1]$ that $\text{affdim}(\gamma_{j}) = n-1$, $\text{affdim}(T_j) = n-2$ and that $\text{relint}(T_j) \subset \relint(\Delta_n)$. Fix $j \in [k-1]$. Suppose $p \in \text{relint}(T_j) \implies p \in \relint(\Delta_n)$. By minimizability of $\langle p, L(\cdot)\rangle$, there exists a $u^{j} \in \mathbb{R}$, such that $p \in \Gamma_{u^j}$. Since $L$ indirectly elicits $\ell$, it follows from Lemma \ref{lemma: jacobian_rank_positive} that $\text{rank}(\nabla L (u^j)) = 1$. Further, since $p \in \Gamma_{u^{j}} \cap \text{relint}(\Delta_n)$, it follows that $\text{affdim}(\Gamma_{u^j}) = n-2$. Now, we claim that $\Gamma_{u^j} = T_j$. We first show that $\Gamma_{u^{j}} \subseteq T_j$. Assume to the contrary. Then, $\exists p' \in \Gamma_{u^j}$, such that $p' \notin T_j$. Since $\Gamma_{u^j}$ is convex, it follows that $\text{conv}(p', p) \subseteq \Gamma_{u^j}$. Since $\gamma$ is orderable, we know from Lemma \ref{lemma: intersection_graph_result} that no $3$ target cells intersect in $\text{relint}(\Delta_n)$. Then, since $p$ lies on the interior of the common boundary between ($n-2$ dimensional) polytopes $\gamma_j$ and $\gamma_{j+1}$, there must be a sufficiently small $\epsilon > 0$, such that $B_\epsilon(p) \cap \Delta_{n}$ is fully contained in $\text{relint}(\gamma_{j})$ in one halfspace defined by the hyperplane $T_j$ and is fully contained in $\text{relint}(\gamma_{j+1})$ in the other halfspace defined by $T_j$. It follows that $\exists q \in \text{conv}(p', p)$, such that $q \in \text{relint}(\gamma_j)$ or $q \in \text{relint}(\gamma_{j+1})$. Suppose WLOG that $q \in \text{relint}(\gamma_j)$. This means that $\exists q \in \Gamma_{u^j}: \gamma(q) = \{j\}$. So, by indirect elicitation, it must hold that $\Gamma_{u^j} \subseteq \gamma_{j}$ and that $\Gamma_{u^j} \not\subseteq \gamma_{r}$ for any $r \neq j$. However, since $\Gamma_{u^j}$ is the intersection of the subspace $\text{ker}(\nabla L(u^j)^{\top})$ with $\Delta_{n}$, $\Gamma_{u^{j}}$ must have extremal points at the boundaries of $\Delta_{n}$. This means that $\Gamma_{u^{j}}$ cannot terminate at $p$ and must extend beyond $p$ into $\gamma_{j+1}$. Thus, $\exists q^{'} \in \text{relint}(\gamma_{j+1})$, such that $q^{'} \in \Gamma_{u^{j}}$. This violates $\Gamma_{u^{j}} \subseteq \gamma_{j}$ and hence violates indirect elicitation. So, $\Gamma_{u^{j}} \subseteq T_j$. 
    
    For the reverse inclusion, assume to the contrary that $\Gamma_{u^{j}} \subset T_j$. This means that $\Gamma_{u^{j}}$ must have an extremal point $p \in \text{relint}(T_j)$, which in turn means that $p^{*} \in \text{relint}(\Delta_{n})$. However, $\Gamma_{u^{j}}$ is the intersection of a subspace with $\Delta_{n}$, so its extremal points must be on the boundary of $\Delta_{n}$. Thus, $\Gamma_{u^{j}} = T_j$. 
    
    Now, define $I^{j} := \{u^{*} \in \mathbb{R}| \Gamma_{u^{*}} = \gamma_{j} \cap \gamma_{j+1}\}$. It follows that $u^{*} \in I^{j} \implies u^{*} \in \Gamma(p)$ since $p \in \gamma_j \cap \gamma_{j+1}$. Thus, $I_j \subseteq \Gamma(p)$. However, choosing $u^j \in \Gamma(p)$ arbitrarily, we proved that $\Gamma_{u^{j}} = \gamma_{j} \cap \gamma_{j+1}$. Thus, $\Gamma(p) \subseteq I^{j}$. Therefore, $I_j = \Gamma(p)$ which always exists by minimizability of $\langle p, L(\cdot)\rangle$. Further, since $\Gamma(p)$ is compact (by assumption), it follows that $I^{j}$ is either a singleton, or a compact interval in $\mathbb{R}$. Similarly, by picking $j' \in [k-1]: j' \neq j$, we can establish the existence of (a singleton or compact interval set) $I^{j'}$, such that $\Gamma_{u^{j'}} = \gamma_{j'} \cap \gamma_{j'+1}$. To see that, $I^{j} \cap I^{j'} = \emptyset, \forall j, j' \in [k-1]: j \neq j'$, assume to the contrary that there exists some $v \in I^{j} \cap I^{j'}$. Then, $\Gamma_{v} = \gamma_{j} \cap \gamma_{j+1} = \gamma_{j'} \cap \gamma_{j'+1}$. However, $\gamma_{j} \cap \gamma_{j+1} \neq \gamma_{j'} \cap \gamma_{j'+1}$ for any $j \neq j'$ and so the sets $I_j$ and $I_j^{'}$ must be disjoint. 
\end{proof}
Throughout the rest of this section, we will inherit from Lemma \ref{lemma: 1d_boundary_level_sets}, the notation $I^{j}$ for the set $\{u^{*} \in \mathbb{R}| \Gamma_{u^{*}} = \gamma_{j} \cap \gamma_{j+1}\}$ 
 
 \begin{lemma}\label{lemma: main_optimal_surrogates_1d}
     Let $L: \mathbb{R} \to \mathbb{R}^{n}$ be a convex, differentiable surrogate and suppose $\ell: \mathcal{R} \to \reals^{n}$. Suppose $L$ indirectly elicits $\ell$. Let $p, q \in \Delta_{n}: \gamma(p) = \{j\}$ and $\gamma(q) = \{j+1\}$, where $j \in [k-1]$. Let $u_p \in \Gamma(p)$ and $u_q \in \Gamma(q)$. Then, $u_p < \emph{min}(I^{j}) \leq \emph{max}(I^{j}) < u_q$, or $u_q < \emph{min}(I^{j}) \leq \emph{max}(I^{j}) < u_p$. 
 \end{lemma}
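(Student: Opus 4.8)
The plan is to recast everything in terms of the one-variable convex functions $g_r(u) := \langle r, L(u)\rangle$ and to track the sign of the scalar derivative $g_r'(u) = \langle r, \nabla L(u)\rangle$ --- where, since $d=1$, $\nabla L(u) \in \mathbb{R}^n$ is just the vector $L'(u)$ --- as $u$ crosses the boundary set $I^j$. Since each $g_r$ is convex and differentiable, $g_r'$ is non-decreasing, and by Lemma~\ref{lemma: surrogate_opt_condition_1} we have $\Gamma(r) = \{u \in \mathbb{R} : g_r'(u) = 0\}$; by Assumption~\ref{assumption 1} (via Theorem~\ref{theorem: compact_nonempty_easy} and Lemma~\ref{lemma: cvx_argmin_cvx}) this is a nonempty compact interval, so $g_r'(u) < 0$ for $u < \min\Gamma(r)$ and $g_r'(u) > 0$ for $u > \max\Gamma(r)$. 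First I note that $\Gamma(p) \cap I^j = \emptyset$ and $\Gamma(q) \cap I^j = \emptyset$: if $u$ lay in $\Gamma(p) \cap I^j$ then $p \in \Gamma_u = \gamma_j \cap \gamma_{j+1}$, contradicting $\gamma(p) = \{j\}$, and likewise for $q$. Since $I^j$ is itself a nonempty compact interval (Lemma~\ref{lemma: 1d_boundary_level_sets}) and two disjoint nonempty compact intervals of $\mathbb{R}$ are linearly ordered (one lies strictly to the left of the other), each of $\Gamma(p), \Gamma(q)$ lies entirely to one side of $I^j$. Everything then reduces to showing they land on \emph{opposite} sides.

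To do that I would first prove an algebraic identity on $\operatorname{aff}(\Delta_n)$. Fix $u^* \in I^j$; then $\Gamma_{u^*} = \gamma_j \cap \gamma_{j+1}$, and by Lemma~\ref{lemma: surrogate_opt_condition_1} this set equals $\{r \in \Delta_n : \langle r, L'(u^*)\rangle = 0\}$, while $\gamma_j \cap \gamma_{j+1}$ is contained in $\{r \in \Delta_n : \langle r, \ell(j) - \ell(j+1)\rangle = 0\}$. Using that $\operatorname{affdim}(\gamma_j \cap \gamma_{j+1}) = n-2$ (established inside the proof of Lemma~\ref{lemma: 1d_boundary_level_sets}), that $\operatorname{rank}\nabla L(u^*) = 1$ (Lemma~\ref{lemma: jacobian_rank_positive}), and non-redundancy (which makes $\langle \cdot, \ell(j)-\ell(j+1)\rangle$ non-constant on $\operatorname{aff}(\Delta_n)$), both $r \mapsto \langle r, L'(u^*)\rangle$ and $r \mapsto \langle r, \ell(j)-\ell(j+1)\rangle$ are non-constant affine functionals on the $(n-1)$-dimensional affine space $\operatorname{aff}(\Delta_n)$ whose zero sets both coincide with the $(n-2)$-dimensional affine space $\operatorname{aff}(\gamma_j \cap \gamma_{j+1})$; hence they are proportional, i.e.\ there is $c_{u^*} \neq 0$ with $\langle r, L'(u^*)\rangle = c_{u^*}\langle r, \ell(j)-\ell(j+1)\rangle$ for all $r \in \Delta_n$.

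Next I would fix the sign of $c_{u^*}$ uniformly over $I^j$. By non-redundancy choose $r_0$ with $\gamma(r_0) = \{j\}$, so $\langle r_0, \ell(j)-\ell(j+1)\rangle < 0$; as in the first paragraph, $\Gamma(r_0)$ lies entirely on one side of $I^j$. Suppose it lies to the left (the other case is symmetric). Then $u^* > \max\Gamma(r_0)$ for every $u^* \in I^j$, so $\langle r_0, L'(u^*)\rangle = g_{r_0}'(u^*) > 0$, and therefore $c_{u^*} = \langle r_0, L'(u^*)\rangle / \langle r_0, \ell(j)-\ell(j+1)\rangle < 0$ for every $u^* \in I^j$. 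Now for any $r$ with $\gamma(r) = \{j\}$ we have $\langle r, L'(u^*)\rangle = c_{u^*}\langle r, \ell(j)-\ell(j+1)\rangle > 0$ for all $u^* \in I^j$, whence $\max\Gamma(r) < \min I^j$; and for any $r'$ with $\gamma(r') = \{j+1\}$ we have $\langle r', L'(u^*)\rangle = c_{u^*}\langle r', \ell(j)-\ell(j+1)\rangle < 0$ for all $u^* \in I^j$, whence $\min\Gamma(r') > \max I^j$. Applying the first conclusion to $p$ and the second to $q$, and recalling $u_p \in \Gamma(p)$ and $u_q \in \Gamma(q)$, gives $u_p \leq \max\Gamma(p) < \min I^j \leq \max I^j < \min\Gamma(q) \leq u_q$, the first alternative of the lemma; when instead $\Gamma(r_0)$ lies to the right of $I^j$, every sign flips and the roles of $p$ and $q$ swap, yielding $u_q < \min I^j \leq \max I^j < u_p$.

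I expect the main obstacle to be the proportionality step: one has to argue carefully that the surrogate-optimality hyperplane $\{r : \langle r, L'(u^*)\rangle = 0\}$ and the target-boundary hyperplane $\{r : \langle r, \ell(j)-\ell(j+1)\rangle = 0\}$ actually agree \emph{within} $\operatorname{aff}(\Delta_n)$, and not merely on the lower-dimensional set $\gamma_j\cap\gamma_{j+1}$ --- this is exactly what the dimension count $\operatorname{affdim}(\gamma_j\cap\gamma_{j+1}) = n-2$ from Lemma~\ref{lemma: 1d_boundary_level_sets} and the rank-$1$ fact of Lemma~\ref{lemma: jacobian_rank_positive} buy us. A secondary subtlety is that $I^j$ may be a nondegenerate interval rather than a single report, so in principle the sign of $c_{u^*}$ could vary with $u^*$; anchoring it at the single distribution $r_0$, together with the fact that $\langle r_0,\ell(j)-\ell(j+1)\rangle$ does not depend on $u^*$, forces that sign to be constant over all of $I^j$.
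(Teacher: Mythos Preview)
Your proof is correct, and its core mechanism --- opposite signs of $g_{(\cdot)}'(u^{*})$ for $p$ versus $q$, followed by monotonicity of the derivative of a one-variable convex function --- is the same as the paper's. Where you diverge is in \emph{how} you certify those opposite signs. The paper argues purely geometrically: since $\Gamma_{u^{j}}=\{r\in\Delta_{n}:\nabla L(u^{j})^{\top}r=0\}=\gamma_{j}\cap\gamma_{j+1}$, and $p\in\gamma_{j}\setminus\gamma_{j+1}$, $q\in\gamma_{j+1}\setminus\gamma_{j}$ lie on opposite sides of the separating face $\gamma_{j}\cap\gamma_{j+1}$, they lie on opposite sides of the hyperplane $\{x:\nabla L(u^{j})^{\top}x=0\}$, so $\nabla L(u^{j})^{\top}p$ and $\nabla L(u^{j})^{\top}q$ have opposite signs; gradient monotonicity then gives $u_{p},u_{q}$ on opposite sides of $u^{j}$, and since $u^{j}\in I^{j}$ was arbitrary, the conclusion follows. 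No reference to $\ell$ is needed beyond the target cell structure. Your route is more algebraic: you pass through the proportionality $\langle r,L'(u^{*})\rangle=c_{u^{*}}\langle r,\ell(j)-\ell(j+1)\rangle$ on $\operatorname{aff}(\Delta_{n})$, then read off signs from the target loss difference. This requires the extra dimension-count argument to match zero sets, but in return it makes fully explicit something the paper leaves implicit --- namely that the sign pattern is consistent across all $u^{*}\in I^{j}$ (the paper fixes $u^{j}$ ``WLOG'' and then quantifies over $I^{j}$, tacitly relying on continuity of $\nabla L$ or on the geometric separation being independent of $u^{j}$). The paper's argument is shorter; yours is more self-contained on the sign-consistency point and yields, as a byproduct, an explicit link between the surrogate gradient and the target loss difference on the boundary.
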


\begin{proof}
    Fix $j \in [k-1]$. Pick any $u^{j} \in I^{j}$. For $p: \gamma(p) = \{j\}, q: \gamma(q) = \{j+1\}$, and $u_p \in \Gamma(p), u_q \in \Gamma(q)$, we will show that exactly one of the following holds: $u_p < u^{j} < u_q$ or $u_q < u^{j} < u_p$. Now, from the definition of $I^{j}$ and from Lemma \ref{lemma: surrogate_opt_condition_1}, we know that $\Gamma_{u^{j}} = \gamma_{j} \cap \gamma_{j+1} = \{p' \subseteq \Delta_{n} | \nabla L(u^{j})^{\top}p' = 0\}$. Since $p \in \text{relint}(\gamma_{j})$ and $q \in \text{relint}(\gamma_{j+1})$, it follows that $p$ and $q$ are on different sides of the hyperplane $\{x \in \mathbb{R}^{n} | \nabla L(u^{j})^{\top}x = 0\}$. So, it must hold that $\nabla L(u^{j})^{\top} p < 0$ and $\nabla L(u^{j})^{\top} q > 0$, or that $\nabla L(u^{j})^{\top} p > 0$ and $\nabla L(u^{j})^{\top} q < 0$. 
    
    Assume WLOG that $\nabla L(u^{j})^{\top} p < 0$ and $\nabla L(u^{j})^{\top} q > 0$. Now, notice that $\nabla \langle p, L(\cdot) \rangle = \nabla L(\cdot)^{\top}p$ and similarly, $\nabla \langle q, L(\cdot) \rangle = \nabla L(\cdot)^{\top}q$. Since the function $\langle p, L(\cdot) \rangle$ is convex and differentiable, it holds from the monotonicity of gradients of convex functions that, $\langle \nabla L(u^{j})^{\top}p - \nabla L(u_p)^{\top}p , u^{j} - u_p\rangle \geq 0 \implies \langle \nabla L(u^{j})^{\top}p, u^{j} - u_p\rangle \geq 0 \implies u^{j} - u_p \leq 0$, since $\nabla L(u^{j})^{\top} p < 0$. Clearly, $u^{j} \neq u_p$ and so, $u^{j} - u_{p} < 0$. Similarly, we can show that  $u^{j} - u_{q} > 0$. We have thus shown that $u_{q} < u^{j} < u_{p}$. Since $u^{j}$ was arbitrarily chosen from $I^{j}$, it holds that $u_{q} < \text{min}(I^{j}) \leq \text{max}(I^{j}) < u_{p}$. Now, for $j \in \mathcal{R}$, denote  $U^{j} :=  \{u \in \mathbb{R}: u \in \Gamma(p), \gamma(p) = \{j\} \}$. 

    If instead $\nabla L(u^{j})^{\top} p > 0$ and $\nabla L(u^{j})^{\top} q < 0$, it would follow that $u_{p} < \text{min}(I^{j}) \leq \text{max}(I^{j}) < u_{q}$ by the same argument. 
\end{proof}

Denote by $U^{j} := \{u \in \mathbb{R} | u \in \Gamma(p), \gamma(p) = \{j\}\}$. Let $u^{1} \in U^{1}, u^{2} \in U^{2}, ..., u^{k} \in U^{k}$. Then by Lemma \ref{lemma: main_optimal_surrogates_1d}, we know that, $u^{1} < \min(I^{1}) \leq \max(I^{1}) < u^{2} < \min(I^{2}) \leq \max(I^{2}) < u^{3} ... < u^{k-2} < \min(I^{k-2}) \leq \max(I^{k-2}) < u^{k-1} < \min(I^{k-1}) \leq \max(I^{k-1}) < u^{k} $, or, $u^{1} > \max(I^{1}) \geq \min(I^{1}) > u^{2} > \max(I^{2}) \geq \min(I^{2}) > u^{3} ... > u^{k-2} > \max(I^{k-2}) \geq \min(I^{k-2}) > u^{k-1} > \max(I^{k-1}) \geq \min(I^{k-1}) > u^{k}$. This means, that either $\max(I^{1}) < \min(I^{k-1})$, or $\max(I^{k-1}) < \min(I^{1})$. We assume WLOG from hereon, that $\max(I^{1}) < \min(I^{k-1})$. Thus, we have shown that $\min(I^{j})$ is a uniform, strict upper bound on $U^{j}$ and that $\max(I^{j-1})$ is uniform, strict lower bound on $U^{j}$. 

\begin{theorem}{theorem}{onedthm}
    \label{thm: main_1d_theorem_1}
    Let $L: \reals \to \reals^{n}$ be a convex, differentiable surrogate and suppose $\ell: \R \to \reals^n$. Under Assumption \ref{assumption 1}, $L$ is calibrated with respect to $\ell$.
    Then $(L, \psi)$ is calibrated with respect to $\ell$, for any link $\psi: \reals \to \R$, that satisfies
    \begin{equation} \label{equation:1d-link}
        \psi(u) \in \begin{cases}
            \{j, j+1\} ~ &\emph{if} \hspace{0.3em} u \in I^j \hspace{0.3em} \emph{for any} \hspace{0.3em} j \in [k-1] \\
            \{1\} &\emph{if}~ u < min(I^1)\\
            \{j\} &\emph{if}~ u \in (\max(I^{j-1}), min(I^j)), ~j \in 2, \dots, k-1\\
            \{k\} &\emph{if}~ u > max(I^{k-1})
        \end{cases} ~.
    \end{equation}
\end{theorem}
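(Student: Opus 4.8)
Throughout, $L$ indirectly elicits $\ell$, so Lemma~\ref{lemma: 1d_boundary_level_sets} applies and the sets $I^1,\dots,I^{k-1}$ are well defined (each a point or compact interval). The plan is to show the calibration inequality at every fixed $p\in\Delta_n$ by exhibiting an \emph{open} set $\mathcal I_p\supseteq\Gamma(p)$ on which $\psi$ takes values only in $\gamma(p)$, and then invoking Lemma~\ref{lemma: ball_around_argmin_1}. Write $R_1 := (-\infty,\min(I^1))$, $R_j := (\max(I^{j-1}),\min(I^j))$ for $2\le j\le k-1$, and $R_k := (\max(I^{k-1}),\infty)$. The monotone ordering established just before the theorem shows that $\reals$ is the disjoint union of the consecutive pieces $R_1, I^1, R_2, I^2, \dots, I^{k-1}, R_k$, and by \eqref{equation:1d-link} we have $\psi(R_m)=\{m\}$ and $\psi(I^j)\subseteq\{j,j+1\}$. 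Define $\mathcal I_p$ to be the union of all pieces $R_m$ with $m\in\gamma(p)$ together with all pieces $I^j$ with $\{j,j+1\}\subseteq\gamma(p)$. Then $\psi(u)\in\gamma(p)$ for every $u\in\mathcal I_p$, and $\mathcal I_p$ is open: whenever $I^j\subseteq\mathcal I_p$ we have $j,j+1\in\gamma(p)$, so both neighbouring pieces $R_j,R_{j+1}$ lie in $\mathcal I_p$ and the closed endpoints of $I^j$ become interior points of $\mathcal I_p$.

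The crux is to prove $\Gamma(p)\subseteq\mathcal I_p$. Since $\Gamma(p)$ is a nonempty compact interval (Assumption~\ref{assumption 1} via Theorem~\ref{theorem: compact_nonempty_easy}, together with convexity of $\langle p,L(\cdot)\rangle$), it suffices to show that every piece $\Gamma(p)$ meets is contained in $\mathcal I_p$. If $\Gamma(p)$ meets some $I^j$, pick $u$ in the intersection; then $p\in\Gamma_u=\gamma_j\cap\gamma_{j+1}$ by Lemma~\ref{lemma: 1d_boundary_level_sets}, so $\{j,j+1\}\subseteq\gamma(p)$ and $I^j\subseteq\mathcal I_p$. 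If $\Gamma(p)$ meets some $R_m$, pick $u\in R_m\cap\Gamma(p)$, so $\nabla L(u)^\top p=0$ by Lemma~\ref{lemma: surrogate_opt_condition_1}. Fix $u^m\in I^m$ (available when $m<k$); since $u<u^m$ and $\langle p,L(\cdot)\rangle$ is convex and differentiable, monotonicity of its gradient gives $\nabla L(u^m)^\top p\ge 0$. As in the proof of Lemma~\ref{lemma: main_optimal_surrogates_1d} (combined with the monotone ordering above), the hyperplane $H^m:=\{x:\nabla L(u^m)^\top x=0\}$ meets $\Delta_n$ exactly in $\Gamma_{u^m}=\gamma_m\cap\gamma_{m+1}$ and weakly separates $\bigcup_{j\le m}\gamma_j$ from $\bigcup_{j\ge m+1}\gamma_j$; hence $\nabla L(u^m)^\top p\ge 0$ forces $\gamma(p)\subseteq\{1,\dots,m\}$ (if $\gamma(p)$ contained some $t\ge m+1$ then $\nabla L(u^m)^\top p\le 0$, so $p\in H^m\cap\Delta_n=\gamma_m\cap\gamma_{m+1}$, contradicting $\gamma(p)\subseteq\{m+1,\dots,k\}$). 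Symmetrically, using $u^{m-1}\in I^{m-1}$ (available when $m>1$) and $u>u^{m-1}$ gives $\nabla L(u^{m-1})^\top p\le 0$ and hence $\gamma(p)\subseteq\{m,\dots,k\}$. Together $\gamma(p)=\{m\}$, so $m\in\gamma(p)$ and $R_m\subseteq\mathcal I_p$ (when $m\in\{1,k\}$ only one of the two inequalities is available, and it already yields $\gamma(p)=\{1\}$ or $\{k\}$). Thus every piece met by $\Gamma(p)$ lies in $\mathcal I_p$, so $\Gamma(p)\subseteq\mathcal I_p$.

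Since $\Gamma(p)$ is compact and contained in the open set $\mathcal I_p$, there is $\delta>0$ with $\overline{B}_{\delta}(\Gamma(p))\subseteq\mathcal I_p$; consequently $\{u:\psi(u)\notin\gamma(p)\}\subseteq\reals\setminus\mathcal I_p\subseteq\reals\setminus B_\delta(\Gamma(p))$. Applying Lemma~\ref{lemma: ball_around_argmin_1} to the convex, differentiable, minimizable function $\langle p,L(\cdot)\rangle$, whose argmin $\Gamma(p)$ is bounded, we get
\[
\inf_{u:\,\psi(u)\notin\gamma(p)}\langle p,L(u)\rangle \;\ge\; \inf_{u\in\reals\setminus B_\delta(\Gamma(p))}\langle p,L(u)\rangle \;>\; \inf_{u\in\reals}\langle p,L(u)\rangle,
\]
which is precisely the calibration inequality at $p$ (and holds trivially if the left-hand infimum is over the empty set). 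As $p\in\Delta_n$ was arbitrary, $(L,\psi)$ is calibrated with respect to $\ell$.

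The main obstacle is the middle paragraph: promoting the local "opposite sides of the boundary level set" statement of Lemma~\ref{lemma: main_optimal_surrogates_1d} to the global fact that any minimizer of $\langle p,L(\cdot)\rangle$ landing in a region $R_m$ forces $\gamma(p)$ to be the singleton $\{m\}$. This requires carefully tracking which target cells lie on which side of each boundary hyperplane $\{x:\nabla L(u^j)^\top x=0\}$, using the monotone ordering of the sets $U^j$ and $I^j$; once this is in place, the remainder is a routine compactness argument together with Lemma~\ref{lemma: ball_around_argmin_1}.
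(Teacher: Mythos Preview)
Your approach differs from the paper's: where the paper case-splits on the structure of $\gamma(p)$ (singleton, two-element set on a boundary interior, longer consecutive block) and for each case directly bounds the constrained infimum using convexity of $\langle p,L(\cdot)\rangle$ at the endpoints $\max(I^{j-1})$ and $\min(I^{j+t})$, you instead build a single open interval $\mathcal I_p\supseteq\Gamma(p)$ on which $\psi$ already takes values in $\gamma(p)$ and then invoke Lemma~\ref{lemma: ball_around_argmin_1}. This is a clean unification of the paper's three cases, and the separation of $\bigcup_{j\le m}\gamma_j$ from $\bigcup_{j\ge m+1}\gamma_j$ by $H^m$ does indeed follow from the monotone ordering of the $U^j$ and $I^j$ via gradient monotonicity, as you indicate.

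However, the deduction in the middle paragraph is not correct as written. From $\nabla L(u^m)^\top p\ge 0$ you assert $\gamma(p)\subseteq\{1,\dots,m\}$, and the parenthetical ends by ``contradicting $\gamma(p)\subseteq\{m+1,\dots,k\}$'', a hypothesis that was never introduced. In fact the inclusion can fail: if $\nabla L(u^m)^\top p=0$ then $p\in\Gamma_{u^m}=\gamma_m\cap\gamma_{m+1}$, so $m+1\in\gamma(p)$; likewise the stronger conclusion $\gamma(p)=\{m\}$ need not hold. What the argument \emph{does} give is precisely what you need, namely $m\in\gamma(p)$: either $\nabla L(u^m)^\top p>0$ strictly, and then weak separation by $H^m$ forces $\gamma(p)\subseteq\{1,\dots,m\}$; or equality holds and $p\in\gamma_m\cap\gamma_{m+1}$ already gives $m\in\gamma(p)$. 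The symmetric dichotomy at $u^{m-1}$ yields either $\gamma(p)\subseteq\{m,\dots,k\}$ or $m\in\gamma(p)$. All four combinations give $m\in\gamma(p)$, hence $R_m\subseteq\mathcal I_p$. With this repair (drop the false claim $\gamma(p)=\{m\}$ and argue the dichotomy instead), your proof goes through.
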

\begin{proof}
    For the entirety of this proof, we define $\gamma_{0} = \gamma_{k+1} = I^{0} = I^{k+1} = \emptyset$. Let $\psi : \mathbb{R} \to \mathcal{R}$ be any link of the form proposed in the theorem statement. We first show calibration for $p \in \Delta_{n}: \gamma(p) = \{j\}$ for some $j \in \mathcal{R}$. We know from Lemma \ref{lemma: main_optimal_surrogates_1d}, that $\max(I^{j-1}) < u_p < \min(I^{j})$ for any $u_p \in \Gamma(p)$. We have that $\psi(u) \neq j$ for any $u < \text{min}(I^{j-1})$ and any $u > \text{max}(I^{j})$. Whereas, $u \in I^{j-1}$ can be linked to either $j-1$ or $j$, and $u \in I^{j}$ can be linked to either $j$ or $j+1$. The remaining reports always link to $j$.  
    \begin{align*}
        &\text{inf}_{u \in \mathbb{R}: \psi(u) \notin \gamma(p)} \langle p, L(u) \rangle = \text{inf}_{u \in \mathbb{R}: \psi(u) \neq j} \langle p, L(u) \rangle \\ \geq &\text{inf}_{u \in \mathbb{R}: u \leq \text{max}(I^{j-1}) \hspace{0.3em}\text{or}\hspace{0.3em} u \geq \text{min}(I^{j})} \langle p, L(u) \rangle \\
        = &\text{min}\{\text{inf}_{u \in \mathbb{R}: u \leq \text{max}(I^{j-1})} \langle p, L(u) \rangle, \text{inf}_{u \in \mathbb{R}: u \geq \text{min}(I^{j})} \langle p, L(u) \rangle\} \\ 
        = &\text{min}\{\langle p, L(\text{max}(I^{j-1})) \rangle, \langle p, L(\text{min}(I^{j})) \rangle\} > \text{inf}_{u \in \mathbb{R}} \langle p, L(u) \rangle
    \end{align*}
The final equality follows from convexity of the function $\langle p, L(\cdot) \rangle$. The final strict inequality follows from the fact that
$\max(I^{j-1}) < u_p < \min(I^{j})$ for any $u_p \in \Gamma(p)$. The same argument holds for any other distribution $p': \gamma(p') = \{j\}$. In fact, since $j$ was picked arbitrarily from $\mathcal{R}$, the argument extends to any $q: \gamma(q) = \{j'\}$, for any $j' \in \mathcal{R}$. So, we have established calibration at all distribution lying in the relative interiors of target cells. We still need to prove calibration for distributions lying on target boundaries. 

Again, start by fixing some $j \in [k-1]$. Suppose $p \in \text{relint}(\gamma_{j} \cap \gamma_{j+1})$. Then, since $\gamma$ is orderable, we know from Lemma \ref{lemma: intersection_graph_result} that no $3$ target cells can intersect in the relative interior of the simplex. Thus, $\gamma(p) = \{j, j+1\}$. We have that $\psi(u) \notin \{j, j+1\}$ for any $u < \text{min}(I^{j-1})$ and any $u > \text{max}(I^{j+1})$. Whereas, $u \in I^{j-1}$ can be linked to either $j-1$ or $j$ and $u \in \cup I^{j+1}$ can be linked to either $j$ or $j+1$. The remaining reports always link to one of $j-1$ or $j$. 
\begin{align*}
        &\text{inf}_{u \in \mathbb{R}: \psi(u) \notin \gamma(p)} \langle p, L(u) \rangle = \text{inf}_{u \in \mathbb{R}: \psi(u) \notin \{j, j+1\}} \langle p, L(u) \rangle \\ \geq &\text{inf}_{u \in \mathbb{R}: u \leq \text{max}(I^{j-1}) \hspace{0.3em}\text{or}\hspace{0.3em} u \geq \text{min}(I^{j+1})} \langle p, L(u) \rangle \\
        = &\text{min}\{\text{inf}_{u \in \mathbb{R}: u \leq \text{max}(I^{j-1})} \langle p, L(u) \rangle, \text{inf}_{u \in \mathbb{R}: u \geq \text{min}(I^{j+1})} \langle p, L(u) \rangle\} \\ 
        = &\text{min}\{\langle p, L(\text{max}(I^{j-1})) \rangle, \langle p, L(\text{min}(I^{j+1})) \rangle\} > \text{inf}_{u \in \mathbb{R}} \langle p, L(u) \rangle
    \end{align*}
The final equality follows from convexity of the function $\langle p, L(\cdot) \rangle$.  The final strict inequality follows by noting that $\gamma_{j-1}\cap\gamma_{j}$ and $\gamma_{j+1}\cap\gamma_{j+2}$ are disjoint from $\text{relint}(\gamma_{j}\cap\gamma_{j+1})$ and hence $\text{max}(I^{j-1})$ and $\text{min}(I^{j+1})$ are suboptimal for $p \in \text{relint}(\gamma_{j} \cap \gamma_{j+1})$. The same argument extends to all distributions lying in the relative interiors of target boundaries. Thus, we have established calibration for all distributions, barring distributions on target boundaries that are not inside the relative interiors of the boundaries.

Suppose $p \in \Delta_{n}$, such that $p$ lies on some target boundary, however, $p$ does not lie in the relative interior of the target boundary. Such points can lie within the intersection of $2$ or more target cells. In case, $p \in \cap_{j \in \mathcal{R}}\gamma_{j}$, $\gamma(p) = \mathcal{R}$ and calibration follows trivially, since the set $u \in \mathbb{R}: \psi(u) \notin \gamma(p)$ is empty. Otherwise, suppose, $\gamma(p) = S$, for some $S \subset R$. It must be that discrete reports within $S$ are consecutive integers. That is, suppose $j \in S$ and $j' \in S$. Then, either $|j-j'| \leq 1$, or else, if $|j-j'| > 1$, then $j^{*} \in S$ for any $j^{*}: \min\{j, j'\} < j^{*} < \max\{j, j'\}$. This follows from the fact that $\gamma$ is orderable, and we are assuming the enumeration associated with $\mathcal{R}$ is $E_{\gamma} = (1, 2, ..., k-1, k)$. Thus, suppose that $\gamma(p) = S \subset R: S = \{j, j+1, ..., j+ t\}$, where $t \geq 1$. We have that $\psi(u) \notin S$ for any $u < \min(I^{j-1})$ and for any $u > \max(I^{j+t})$. Whereas, $u \in I^{j-1}$ may be linked to either $j-1 \notin S$ or $j \in S$, and $u \in I^{j+t}$ may be linked to either $j+t \in S$ or $j+t+1 \notin S$. The remaining surrogate reports always link to some discrete report in $S$. 
\begin{align*}
        &\text{inf}_{u \in \mathbb{R}: \psi(u) \notin \gamma(p)} \langle p, L(u) \rangle = \text{inf}_{u \in \mathbb{R}: \psi(u) \notin S} \langle p, L(u) \rangle \\ \geq &\text{inf}_{u \in \mathbb{R}: u \leq \text{max}(I^{j-1}) \hspace{0.3em}\text{or}\hspace{0.3em} u \geq \text{min}(I^{j+t})} \langle p, L(u) \rangle \\
        = &\text{min}\{\text{inf}_{u \in \mathbb{R}: u \leq \text{max}(I^{j-1})} \langle p, L(u) \rangle, \text{inf}_{u \in \mathbb{R}: u \geq \text{min}(I^{j+t})} \langle p, L(u) \rangle\} \\ 
        = &\text{min}\{\langle p, L(\text{max}(I^{j-1})) \rangle, \langle p, L(\text{min}(I^{j+t})) \rangle\} > \text{inf}_{u \in \mathbb{R}} \langle p, L(u) \rangle
    \end{align*}
The final equality follows from convexity of the function $\langle p, L(\cdot) \rangle$.  The final strict inequality follows by noting that $p \notin \gamma_{j-1} \cap \gamma_{j}$ and $p \notin \gamma_{j+t} \cap \gamma_{j+t+1}$ by construction. Thus, $(L, \psi)$ is calibrated at $p$. The same argument extends to any distribution that lies on some target boundary, but not its relative interior. 
\end{proof}

\section{Correspondences} \label{appendix:correspondences}
We consolidate basic definitions and results about correspondences in this section. Note that different authors can have slightly differing terminology and conventions related to correspondences. We direct the reader to \cite{border2013introduction} and references therein for a more detailed discussion on different conventions. We adopt the conventions, definitions and terminology used in \cite{border2013introduction}.

\begin{definition}\label{defn: correspondence}\emph{\textbf{(Correspondence, graph, image, domain)}} \citep{border2013introduction}
   A \emph{correspondence} $\varphi$ from $X$ to $Y$ associates to each point in $X$ a subset $\varphi(x)$ of $Y$. We write this as $\varphi: X \rightrightarrows Y$. For a correspondence $\varphi: X \rightrightarrows Y$, let $\operatorname{gr} \varphi$ denote the \emph{graph} of $\varphi$, which we define to be

\[
\operatorname{gr} \varphi = \{(x,y) \in X \times Y : y \in \varphi(x)\}.
\]

Let $\varphi: X \rightrightarrows Y$, and let $F \subset X$. The \emph{image} $\varphi(F)$ of $F$ under $\varphi$ is defined to be

\[
\varphi(F) = \bigcup_{x \in F} \varphi(x).
\]

The value $\varphi(x)$ is allowed to be the empty set, but we call $\{x \in X : \varphi(x) \neq \emptyset\}$ the \emph{domain} of $\varphi$, denoted $\operatorname{dom} \varphi$.

The terms \emph{multifunction, point-to-set mapping}, and \emph{set-valued function} are also used for a correspondence.

\end{definition}
\begin{definition}
    \label{defn: UHC}\emph{\textbf{(Metric upper hemicontinuity)}} \citep{border2013introduction}
   Let $X$ be a metric space equipped with the metric $d_{X}: X\times X \to \mathbb{R}$. A correspondence $\varphi: X \rightrightarrows Y$ is said to satisfy \emph{metric upper hemicontinuity} at a point $x \in X$ if for every $\epsilon > 0$, $\exists \delta > 0$, such that 
   $$
   d(x,z) < \delta \implies \varphi(z) \subseteq B_{\epsilon}(\varphi(x)) 
   $$
\end{definition}
Metric upper hemicontinuity is a special case of the more general, topological notion of upper hemicontinuity which requires that the pre-image of open neighborhoods of $\varphi(x)$ be open sets (see \citep{border2013introduction} for a formal definition). However, metric upper hemicontinuity at $x$ and the topological notion of upper hemicontinuity at $x$ are equivalent when $\varphi(x)$ is compact (see Proposition 11 of \citep{border2013introduction}). For our purposes, this will always be the case. So, we simply work with the simpler, metric based notion of upper hemicontinuity.      

\begin{definition}\label{defn: closed_correspondence}\emph{\textbf{(Closed at \emph{x}, Closed correspondence)}}\citep{border2013introduction}
    The correspondence $\varphi: X \rightrightarrows Y$ is \emph{closed at} $x \in X$ if whenever $x_n \to x$, $y_n \in \varphi(x_n)$, and $y_n \to y$, then $y \in \varphi(x)$. A correspondence is \emph{closed} if it is closed at every point of its domain, that is, if its graph is closed.
\end{definition}

\begin{lemma}\label{lemma: closed_implies_closed}\citep{border2013introduction}
    If the correspondence $\varphi: X \rightrightarrows Y$ is \emph{closed at} $x \in X$, then $\varphi(x)$ is a closed set. 
\end{lemma}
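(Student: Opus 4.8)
The plan is to show that $\varphi(x)$ is sequentially closed, which coincides with being closed since we are working in metric spaces under the conventions of \cite{border2013introduction}. The key observation is that the hypothesis ``closed at $x$'' (Definition~\ref{defn: closed_correspondence}) may be applied to the \emph{constant} sequence $x_n \equiv x$ in the domain, which collapses the general closed-graph condition to the statement we want about the single fiber $\varphi(x)$.

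First I would fix an arbitrary convergent sequence $\{y_n\}_{n \in \mathbb{N}_{+}}$ with $y_n \in \varphi(x)$ for each $n$ and $y_n \to y$ for some $y$; the goal is to conclude $y \in \varphi(x)$. Next, define $x_n := x$ for all $n$. Then $x_n \to x$ trivially, $y_n \in \varphi(x) = \varphi(x_n)$ by construction, and $y_n \to y$ by assumption, so all three premises of Definition~\ref{defn: closed_correspondence} hold. Applying closedness at $x$ yields $y \in \varphi(x)$. Since $\{y_n\}$ was arbitrary, every limit of a convergent sequence drawn from $\varphi(x)$ again lies in $\varphi(x)$, so $\varphi(x)$ is closed; the case $\varphi(x) = \emptyset$ is vacuous.

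I expect no genuine obstacle: the whole argument is the single substitution $x_n \equiv x$ into the definition. The only point worth flagging is the (standard) equivalence between sequential closedness and topological closedness, which is automatic in metric spaces; in a purely topological setting one would instead argue directly from closedness of $\operatorname{gr}\varphi$, observing that $\{x\}\times\varphi(x) = \operatorname{gr}\varphi \cap (\{x\}\times Y)$ is closed in $\{x\}\times Y$ and hence $\varphi(x)$ is closed in $Y$.
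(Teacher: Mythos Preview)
Your argument is correct: instantiating Definition~\ref{defn: closed_correspondence} with the constant sequence $x_n \equiv x$ immediately gives sequential closedness of $\varphi(x)$, and in the metric setting this is exactly closedness. The paper does not supply its own proof of this lemma---it is stated with a bare citation to \cite{border2013introduction}---so there is no in-paper argument to compare against; your one-line reduction is the standard proof and matches what one finds in the cited reference.
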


\begin{lemma}\label{lemma: closed_implies_UHC}\citep{border2013introduction}
    Suppose $Y$ is compact and $\varphi: X \rightrightarrows Y$ is closed at $x \in X$, then $\varphi$ is upper hemicontinuous at $x$. 
\end{lemma}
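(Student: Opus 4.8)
The plan is to establish metric upper hemicontinuity at $x$ (Definition~\ref{defn: UHC}) directly, arguing by contradiction and exploiting the sequential compactness of the metric space $Y$.

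First I would negate the conclusion: if metric upper hemicontinuity fails at $x$, there is some $\epsilon > 0$ such that for every $\delta > 0$ one can find $z$ with $d(x,z) < \delta$ yet $\varphi(z) \not\subseteq B_{\epsilon}(\varphi(x))$. Instantiating $\delta = 1/n$ for each $n \in \mathbb{N}_{+}$ produces a sequence $z_n \to x$ together with witnesses $y_n \in \varphi(z_n)$ satisfying $y_n \notin B_{\epsilon}(\varphi(x))$, i.e. $d(y_n, \varphi(x)) \geq \epsilon$ for every $n$.

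Next, since $Y$ is a compact metric space it is sequentially compact, so I would pass to a subsequence $\{y_{n_k}\}_{k}$ with $y_{n_k} \to y$ for some $y \in Y$. Then $z_{n_k} \to x$, $y_{n_k} \in \varphi(z_{n_k})$, and $y_{n_k} \to y$, so closedness of $\varphi$ at $x$ (Definition~\ref{defn: closed_correspondence}) yields $y \in \varphi(x)$, hence $d(y,\varphi(x)) = 0$. On the other hand, $d(\cdot,\varphi(x))$ is $1$-Lipschitz, hence continuous, so $d(y_{n_k},\varphi(x)) \to d(y,\varphi(x))$; since every term of this sequence is at least $\epsilon$, we get $d(y,\varphi(x)) \geq \epsilon > 0$, a contradiction. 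This forces metric upper hemicontinuity at $x$, which is the desired conclusion.

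I do not anticipate a genuine obstacle here: this is the standard ``closed graph plus compact range implies upper hemicontinuity'' fact. The only points requiring a little care are correctly negating the nested $\forall \epsilon\, \exists \delta$ quantifier, invoking compactness of $Y$ in its sequential form, and noting that we never actually need $\varphi(x)$ to be closed — though Lemma~\ref{lemma: closed_implies_closed} guarantees it — since $y \in \varphi(x)$ already gives $d(y,\varphi(x)) = 0$. (If one wishes to allow $\varphi(x) = \emptyset$, the distances above are $+\infty$ and the contradiction is simply that $y \in \varphi(x) = \emptyset$.)
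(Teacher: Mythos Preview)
Your proof is correct and is the standard argument for this fact. Note that the paper does not actually prove this lemma: it is stated with a citation to \citep{border2013introduction} and no proof is given, so there is no in-paper argument to compare against.
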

We say a correspondence $\varphi: X \rightrightarrows Y$ is compact-valued if $\varphi(x)$ is compact for every $x \in X$.  
\begin{lemma}\label{lemma: UHC_compact_valued_maps_compact_to_compact}\citep{border2013introduction} Let $K \subset X$ be a compact set and suppose $\varphi: X \rightrightarrows Y$ is upper hemicontinuous and compact-valued. Then $\varphi(K)$ is compact. 
\end{lemma}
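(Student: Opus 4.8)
The plan is to derive compactness of $\varphi(K)$ straight from the open-cover definition, using the neighborhood (upper-inverse) form of upper hemicontinuity rather than the purely metric form — this is legitimate here because $\varphi$ is compact-valued, so the two forms coincide (the equivalence noted right after Definition~\ref{defn: UHC}). So I would start by fixing an arbitrary open cover $\{U_\alpha\}_{\alpha\in A}$ of $\varphi(K)$ and aim to extract a finite subcover.

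First, for each $x\in K$ the set $\varphi(x)$ is compact, hence contained in a finite union of $U_\alpha$'s, which I call $V_x$; thus $V_x$ is open and $\varphi(x)\subseteq V_x$. Next, choosing $\epsilon>0$ small enough that $B_\epsilon(\varphi(x))\subseteq V_x$ and then applying metric upper hemicontinuity at $x$, I obtain an open ball $W_x := B_{\delta_x}(x)$ with $\varphi(W_x)\subseteq B_\epsilon(\varphi(x))\subseteq V_x$. The collection $\{W_x\}_{x\in K}$ is an open cover of $K$, so compactness of $K$ yields $x_1,\dots,x_r\in K$ with $K\subseteq W_{x_1}\cup\dots\cup W_{x_r}$. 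Then
\[
\varphi(K)=\bigcup_{x\in K}\varphi(x)\subseteq\bigcup_{i=1}^{r}\varphi(W_{x_i})\subseteq\bigcup_{i=1}^{r}V_{x_i},
\]
and the right-hand side is a finite union of elements of the original cover. Hence $\{U_\alpha\}$ admits a finite subcover of $\varphi(K)$, proving $\varphi(K)$ is compact.

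I do not expect a genuine obstacle here — this is essentially a textbook fact (hence the attribution to \cite{border2013introduction}); the only point requiring care is the passage back and forth between the metric and topological formulations of upper hemicontinuity, which is exactly the equivalence recorded in the excerpt and valid precisely because the values are compact. If one prefers to avoid even that and stay purely metric, the same conclusion follows by a sequential argument: given $\{y_m\}\subseteq\varphi(K)$, pick witnesses $x_m\in K$ with $y_m\in\varphi(x_m)$, extract $x_{m_j}\to x\in K$ by sequential compactness of $K$, use upper hemicontinuity at $x$ to conclude $d(y_{m_j},\varphi(x))\to 0$, and then use compactness of $\varphi(x)$ to pass to a further subsequence of $\{y_{m_j}\}$ converging to some $z\in\varphi(x)\subseteq\varphi(K)$.
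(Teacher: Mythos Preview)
Your proof is correct; both the open-cover argument and the sequential variant are standard and valid. Note that the paper does not actually supply a proof of this lemma---it simply cites \cite{border2013introduction}---so there is nothing to compare against. The only implicit assumption worth flagging is that $Y$ carries a metric (needed for ``$B_\epsilon(\varphi(x))\subseteq V_x$'' and for the sequential compactness argument), which is consistent with how the paper uses the lemma ($Y=\Delta_n$) and with its metric formulation of upper hemicontinuity in Definition~\ref{defn: UHC}.
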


\section{Sufficiency of Strong Indirect Elicitation} \label{appendix:SIE-sufficient}
We start by presenting a couple of helper results, that we will leverage in our main proofs. 

\begin{lemma}\label{lemma: Lebesgue's Number} \emph{\textbf{Lebesgue's Number Lemma}:} (see Thm. IV.5.4 of \cite{hu1966introduction}) Let $(\mathcal{X}, d)$ be a compact metric space. Let $\mathcal{A}$ be an arbitrary index set, and $\mathcal{U} = \cup_{\alpha \in \mathcal{A}} \mathcal{U}_{\alpha}$ be an open cover of $\mathcal{X}$. Then, there exists a number $\delta_{\mathcal{L}} > 0$, such that for any $x \in \mathcal{X}$, $B_{\delta_{\mathcal{L}}}(x) \subset U_{\alpha_{x}}$, where $\alpha_{x} \in \mathcal{A}$ 
\end{lemma}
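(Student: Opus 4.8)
The plan is to prove this classical fact by contradiction, leveraging that a compact metric space is sequentially compact. I would suppose no positive Lebesgue number exists; then for each $n \in \mathbb{N}_{+}$ the radius $1/n$ fails, so I can pick $x_{n} \in \mathcal{X}$ with $B_{1/n}(x_{n})$ not contained in any single $\mathcal{U}_{\alpha}$. By sequential compactness I would pass to a subsequence $x_{n_{j}} \to x \in \mathcal{X}$.

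Next I would use the cover: choose $\alpha \in \mathcal{A}$ with $x \in \mathcal{U}_{\alpha}$, and by openness pick $\epsilon > 0$ with $B_{\epsilon}(x) \subseteq \mathcal{U}_{\alpha}$. Taking $j$ large enough that $d(x_{n_{j}}, x) < \epsilon/2$ and $1/n_{j} < \epsilon/2$, the triangle inequality gives $B_{1/n_{j}}(x_{n_{j}}) \subseteq B_{\epsilon}(x) \subseteq \mathcal{U}_{\alpha}$, contradicting the choice of $x_{n_{j}}$. This yields the desired $\delta_{\mathcal{L}} > 0$, and for each $x$ I would simply take $\alpha_{x}$ to be any index with $B_{\delta_{\mathcal{L}}}(x) \subseteq \mathcal{U}_{\alpha_{x}}$.

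As an alternative constructive route, I would instead extract a finite subcover $\mathcal{U}_{\alpha_{1}}, \dots, \mathcal{U}_{\alpha_{m}}$ by compactness, set $C_{i} := \mathcal{X} \setminus \mathcal{U}_{\alpha_{i}}$, and consider $f(x) := \frac{1}{m}\sum_{i=1}^{m} d(x, C_{i})$. Since each $d(\cdot, C_{i})$ is $1$-Lipschitz (hence $f$ is continuous) and $f$ is strictly positive everywhere---any $x$ lies in some open $\mathcal{U}_{\alpha_{i}}$, which forces $d(x, C_{i}) > 0$---the function $f$ attains a positive minimum $\delta_{\mathcal{L}}$ on the compact space $\mathcal{X}$. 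Then for each $x$ some index $i$ satisfies $d(x, C_{i}) \geq f(x) \geq \delta_{\mathcal{L}}$, so $B_{\delta_{\mathcal{L}}}(x) \subseteq \mathcal{U}_{\alpha_{i}}$, and we set $\alpha_{x} := \alpha_{i}$.

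I do not expect any genuine obstacle here: the only points needing care are the routine facts that $x \mapsto d(x, C)$ is continuous, that compactness supplies both a finite subcover and attainment of minima of continuous functions, and (in the contradiction route) that compactness implies sequential compactness in a metric space. Since this is a textbook statement cited to Hu, the actual write-up would be brief, and one could even defer entirely to the reference.
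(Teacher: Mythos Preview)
Your proposal is correct: both the sequential-compactness contradiction argument and the constructive argument via $f(x) = \frac{1}{m}\sum_{i=1}^{m} d(x, C_i)$ are standard, complete proofs of the Lebesgue Number Lemma, and the minor points you flag (continuity of $d(\cdot, C)$, compactness giving finite subcovers and attained minima, compactness implying sequential compactness in metric spaces) are all routine.

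As for comparison with the paper: there is nothing to compare. The paper does not supply a proof of this lemma at all; it simply states it and cites Hu's textbook (Thm.~IV.5.4). You anticipated this possibility yourself at the end of your proposal. So your write-up goes strictly beyond what the paper does, and either of your arguments would be acceptable, or you could just defer to the reference as the authors do.
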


Given an open cover $\mathcal{U}$ compact set $\mathcal{X}$, a constant $\delta_{\mathcal{L}} > 0$ that satisfies the condition of Lemma \ref{lemma: Lebesgue's Number} is known as a \textit{Lebesgue Number} for the cover \citep{hu1966introduction}. We state and prove a simply corollary of Lemma \ref{lemma: Lebesgue's Number}, which we will make use of later. 

\begin{corollary}\label{cor: Lebesgue's Numer}
Let $m \in \mathbb{Z}_{+}, \delta_{\mathcal{L}} > 0$ be a Lebesgue number for some open cover $\mathcal{U} = \cup_{\alpha \in \mathcal{A}} \mathcal{U}_{\alpha}$ of a compact set $\mathcal{X} \subseteq \mathbb{R}^{m}$. Then, $B_{\delta_{\mathcal{L}}/{2}}(\mathcal{X}) \subseteq \mathcal{U}$
\end{corollary}
\begin{proof}
    Let $v \in B_{\delta_{\mathcal{L}}/{2}}(\mathcal{X})$. This means, there exist $u \in \mathcal{X}$ and $b \in B_{\delta_{\mathcal{L}}/{2}}(\mathbf{0_{m}})$, such that, $v = u + b$, i.e., $v \in B_{\delta_{\mathcal{L}}/{2}}(u) \subset B_{\delta_{\mathcal{L}}}(u)$. Now, by Lemma \ref{lemma: Lebesgue's Number}, $\exists \alpha \in \mathcal{A}: B_{\delta_{\mathcal{L}}}(u) \subset \mathcal{U_{\alpha}} \subseteq \mathcal{U}$. So, $v \in \mathcal{U}$. Since $v$ was arbitrarily chosen from $B_{\delta_{\mathcal{L}}/{2}}(\mathcal{X})$, it follows that $B_{\delta_{\mathcal{L}}/{2}}(\mathcal{X}) \subseteq \mathcal{U}$
\end{proof} 

\begin{figure}[H]
    \begin{center}
    \includegraphics[width=\linewidth]{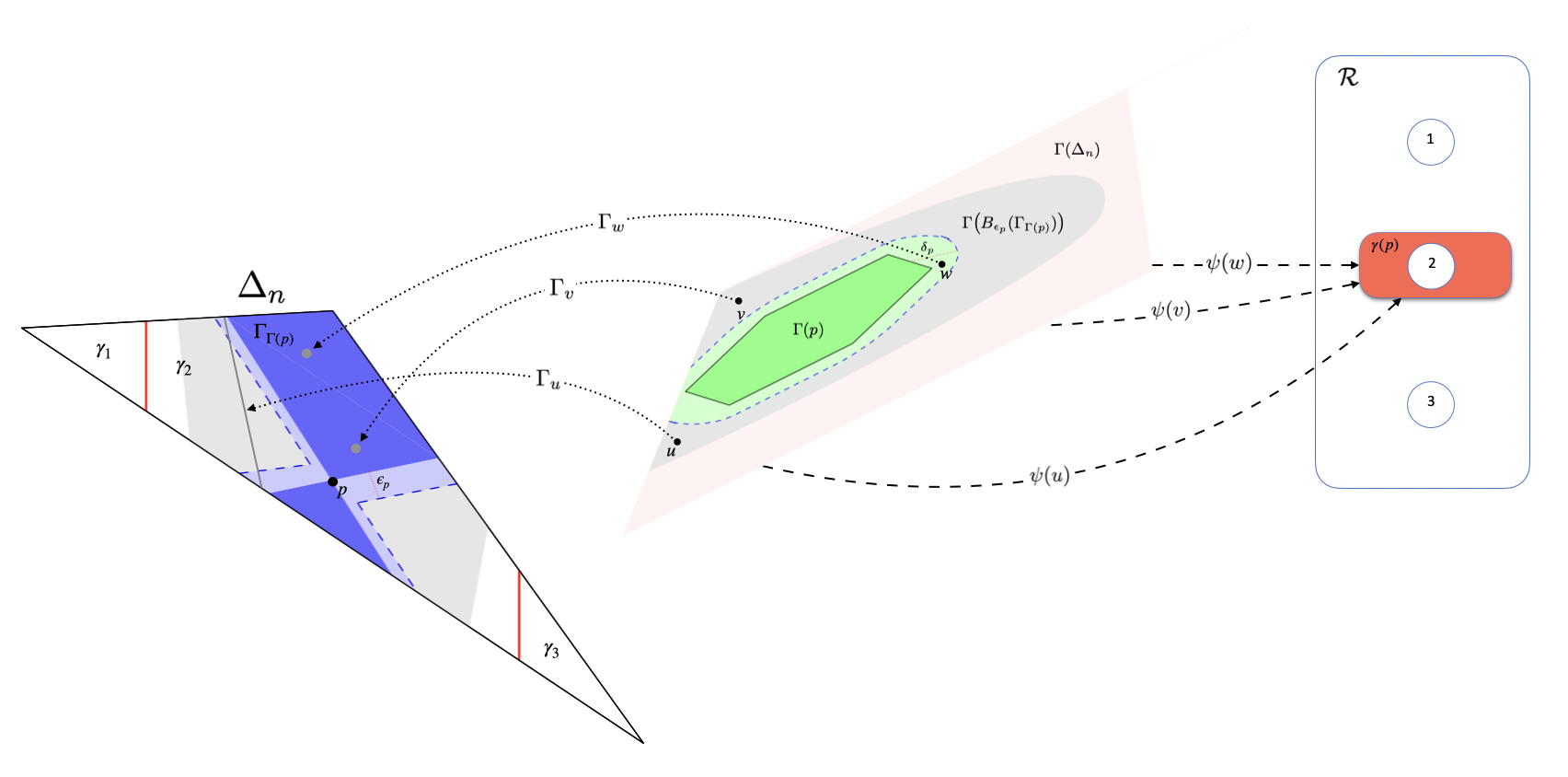}
    \end{center}
    \caption{
        Visual intuition for the proof of sufficiency of strong IE. Let $n = 3, \Y = \{1,2,3\}$ and $\R = \{1,2,3\}$ (\emph{white block with blue border - right}). Let $p \in \Delta_{n}$ (\emph{black point within triangle - left}). $\Gamma_{\Gamma(p)}$ is the level-set bundle at $p$ (\emph{dark purple region - left}). Here, $\gamma(p) = \{2\}$. Lemma \ref{lemma: sIEC_technical_2} ensures the existence of $\epsilon_p > 0$: The image of $\Gamma(B_{\epsilon_{p}}(\Gamma_{\Gamma(p)}))$ (\emph{gray region - center}) under $\Gamma_{(\cdot)}$ is fully contained within $\gamma_{2}$ (\emph{gray region - left}). By Lemma \ref{lemma: sIEC_technical_0}, $\exists \delta_{p} > 0: \Gamma(\Delta_{n}) \cap B_{\delta_{p}}(\Gamma(p))$ (\emph{light green region - center}) is contained within $\Gamma(B_{\epsilon_{p}}(\Gamma_{\Gamma(p)}))$ (gray region - center). Thus, every report within the light green region necessarily links to $\gamma(p)$ (\emph{red block - right}). So any sequence of reports $\{u_{t}\}_{t \in \mathbb{N}_{+}} \subseteq \Gamma(\Delta_{n})$ (\emph{light pink region - center}), for which $\lim_{t \to \infty} u_t \to \Gamma(p)$, must eventually link to $\gamma(p)$. Further, Theorem \ref{thm: sIEC_sufficient} shows how to link nowhere optimal reports without violating calibration. Thus, $\Gamma(p)$ is 'protected' in the calibration sense.
}
    \label{fig:suff_proof_diag}
\end{figure}
Suppose $L: \mathbb{R}^{d} \to \mathbb{R}^{n}$ is a convex, differentiable surrogate loss. We assume throughout that for every $p \in \Delta_{n}, \Gamma(p) := \argmin_{u \in \mathbb{R}^{d}} \langle p, L(u) \rangle$ exists and that $\Gamma(p)$ is compact. Note that, since the function $\langle p, L(\cdot)\rangle$ is convex, $\Gamma(p)$ is closed by Lemma $\ref{lemma: cvx_argmin_closed}$. Thus, it suffices to say $\Gamma(p)$ is bounded, though we will usually say compact for clarity. 

To prove the sufficiency of strong indirect elicitation, we first establish basic properties about the surrogate level sets $\{\Gamma_{u} | u \in \mathbb{R}^{d}\}$. To do so, we adjust our lens slightly, and view $\Gamma_{u}$ as the image of a correspondence at a point $u$ in its domain. In particular, we denote $\Gamma_{(\cdot)} : \mathbb{R}^{d} \rightrightarrows \Delta_{n}$ as the correspondence that maps surrogate reports to the set of distributions they optimize.

\begin{lemma}\label{lemma: gamma_subscript_closed_UHC}
    The correspondence $\Gamma_{(\cdot)} : \mathbb{R}^{d} \rightrightarrows \Delta_{n}$ is closed at every $u \in \Gamma(\Delta_{n})$. In fact, $\Gamma_{(\cdot)}$ is upper hemicontinuous at every $u \in \Gamma(\Delta_{n})$. 
\end{lemma}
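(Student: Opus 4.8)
The plan is to reduce the statement to the first-order optimality characterization of the surrogate level sets and then invoke the closed-graph machinery of Appendix~\ref{appendix:correspondences}. By Lemma~\ref{lemma: surrogate_opt_condition_1}, for every $u \in \reals^{d}$ we have $\Gamma_{u} = \{p \in \Delta_{n} : \nabla L(u)^{\top} p = \mathbf{0}_{d}\}$, so $\Gamma_{(\cdot)}$ is precisely the level correspondence of the map $(u, p) \mapsto \nabla L(u)^{\top} p$. Since each component $L(\cdot)_{y}$ is convex and differentiable, Lemma~\ref{lemma: cvx_ctsdiff} shows $L$ is continuously differentiable, so $u \mapsto \nabla L(u)$ is continuous; combined with bilinearity, this makes $(u, p) \mapsto \nabla L(u)^{\top} p$ jointly continuous on $\reals^{d} \times \Delta_{n}$.

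First I would verify that $\Gamma_{(\cdot)}$ is closed at $u$ in the sense of Definition~\ref{defn: closed_correspondence}. Suppose $u_{t} \to u$, $p_{t} \in \Gamma_{u_{t}}$, and $p_{t} \to p$. Since $\Delta_{n}$ is compact (hence closed), $p \in \Delta_{n}$. The first-order condition gives $\nabla L(u_{t})^{\top} p_{t} = \mathbf{0}_{d}$ for all $t$, and joint continuity yields $\nabla L(u_{t})^{\top} p_{t} \to \nabla L(u)^{\top} p$, so $\nabla L(u)^{\top} p = \mathbf{0}_{d}$. Applying Lemma~\ref{lemma: surrogate_opt_condition_1} once more gives $p \in \Gamma_{u}$, which is exactly closedness at $u$. (The same argument in fact works at every $u \in \reals^{d}$: the existence of such a convergent sequence forces $u \in \Gamma(p) \subseteq \Gamma(\Delta_{n})$, so closedness is vacuous outside $\Gamma(\Delta_{n})$; closedness on $\Gamma(\Delta_{n})$, the domain of $\Gamma_{(\cdot)}$, is all that is needed.)

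Finally, since the codomain $\Delta_{n}$ is compact, Lemma~\ref{lemma: closed_implies_UHC} upgrades closedness at $u$ to upper hemicontinuity at $u$, yielding both assertions. I do not expect a genuine obstacle here: the only point requiring any care is the joint continuity of $(u, p) \mapsto \nabla L(u)^{\top} p$, which rests on continuous differentiability of convex differentiable functions (Lemma~\ref{lemma: cvx_ctsdiff}); once that is in hand, the result is a direct application of facts already assembled in Appendices~\ref{appendix: elicitation} and~\ref{appendix:correspondences}.
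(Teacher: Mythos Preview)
Your proposal is correct and follows essentially the same approach as the paper: both arguments use the first-order characterization $\Gamma_{u} = \{p \in \Delta_{n} : \nabla L(u)^{\top} p = \mathbf{0}_{d}\}$, pass to the limit along $(u_t, p_t) \to (u, p)$ using continuity of $\nabla L$, and then invoke Lemma~\ref{lemma: closed_implies_UHC} with the compactness of $\Delta_{n}$ to upgrade closedness to upper hemicontinuity. The only cosmetic difference is that you package the limit step as ``joint continuity of $(u,p) \mapsto \nabla L(u)^{\top} p$'' (citing Lemma~\ref{lemma: cvx_ctsdiff}), whereas the paper writes out the triangle-inequality estimate explicitly.
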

\begin{proof}
    Suppose $u \in \Gamma(\Delta_{n}) \implies \Gamma_{u} \neq \emptyset$. Let $\{u_{i}\}_{i \in \mathbb{N}}$ be a sequence in $\mathbb{R}^{d}$ and let $\{p_{i}\}_{i \in \mathbb{N}}$ be a sequence in $\Delta_{n}$ such that $p_{i} \in \Gamma_{u_{i}} \forall i \in \mathbb{N}$ and suppose $p_{i} \to p \in \Delta_{n}$ and $u_{i} \to u \in \mathbb{R}^{d}$. To prove $\Gamma_{(\cdot)}$ is closed, we need to show $p \in \Gamma_{u}$ (see Definition \ref{defn: closed_correspondence}). Let $i \in \mathbb{N}$. Since $p_{i} \in \Gamma_{u_{i}}$, $\nabla L(u_{i})^{T}p_{i} = \mathbf{0_{d}}$. Now, notice that: 
    $$
    \begin{aligned}
    \|\nabla L(u_{i})^{T}p_{i} - \nabla L(u)^{T}p\| &\leq \|\nabla L(u_{i})^{T}p_{i} - \nabla L(u_{i})^{T}p \| + \|\nabla L(u_{i})^{T}p - \nabla L(u)^{T}p\| \\
    &\leq \|\nabla L(u_{i})\| \cdot \|p_{i} - p \| + \|\nabla L(u_{i}) - \nabla L(u)\| \cdot \|p\|
    \end{aligned}
    $$
    Observe that $\text{lim}_{i \to \infty} p_{i} = p$ (by construction), and $\text{lim}_{i \to \infty} \nabla L(u_{i}) = \nabla L(u)$ (since $u_{i} \to u$ by construction, and since $\nabla L(\cdot)$ is continuous). Thus,
    \begin{align*}
       &\ \text{lim}_{i \to \infty} \|\nabla L(u_{i})\| \cdot \|p_{i} - p \| + \|\nabla L(u_{i}) - \nabla L(u)\| \cdot \|p\| = 0 \\
    &\ \implies \text{lim}_{i \to \infty} \nabla L(u_{i})^{T}p_{i} = \nabla L(u)^{T}p = \mathbf{0_{d}}
    \implies p \in \Gamma_{u} 
    \end{align*}
    We have thus shown that $\Gamma_{(\cdot)}$ is closed at $u$. Since the target space, $\Delta_{n}$ is compact, the fact that $\Gamma_{(\cdot)}$ is closed at $u$ implies that $\Gamma_{(\cdot)}$ is upper hemicontinuous at $u$ (by Lemma \ref{lemma: closed_implies_UHC}). 
\end{proof}

 \begin{lemma}\label{lemma: sIEC_technical_0}
     Let $p \in \Delta_{n}$. For every $\epsilon > 0$, there exists $\delta > 0$, such that: 
     $$
     \Gamma(\Delta_n) \cap B_{\delta}(\Gamma(p)) \subseteq \Gamma(B_{\epsilon}(\Gamma_{\Gamma(p)}))
     $$
 \end{lemma}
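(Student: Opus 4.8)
The plan is to establish the contrapositive by a compactness-and-closedness argument built on Lemma~\ref{lemma: gamma_subscript_closed_UHC}, which states that the correspondence $\Gamma_{(\cdot)} : \mathbb{R}^{d} \rightrightarrows \Delta_{n}$ is closed at every point of $\Gamma(\Delta_{n})$. So, fixing $\epsilon > 0$, I would suppose for contradiction that no $\delta > 0$ has the desired property. Taking $\delta = 1/t$ then yields, for each $t \in \mathbb{N}_{+}$, a report $v_t \in \Gamma(\Delta_n) \cap B_{1/t}(\Gamma(p))$ with $v_t \notin \Gamma(B_\epsilon(\Gamma_{\Gamma(p)}))$. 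Since $v_t \in \Gamma(\Delta_n)$ forces $\Gamma_{v_t} \neq \emptyset$, I would pick some $q_t \in \Gamma_{v_t}$, equivalently $v_t \in \Gamma(q_t)$; and because $v_t \notin \Gamma(B_\epsilon(\Gamma_{\Gamma(p)}))$, the distribution $q_t$ cannot lie in $B_\epsilon(\Gamma_{\Gamma(p)})$, so $d(q_t, \Gamma_{\Gamma(p)}) \geq \epsilon$ for every $t$.

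Next I would pass to limits. Using compactness of $\Delta_n$, extract a subsequence along which $q_t \to q^{*} \in \Delta_n$; the distance bound survives the limit, giving $q^{*} \notin \Gamma_{\Gamma(p)}$, i.e. $u \notin \Gamma(q^{*})$ for all $u \in \Gamma(p)$. Since $d(v_t, \Gamma(p)) \to 0$ and $\Gamma(p)$ is nonempty and compact (Assumption~\ref{assumption 1}, or equivalently Theorem~\ref{theorem: compact_nonempty_easy}), I would pick $u_t \in \Gamma(p)$ with $\|v_t - u_t\| \to 0$ and, along a further subsequence, $u_t \to u^{*} \in \Gamma(p)$, so that $v_t \to u^{*}$ as well. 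Now $v_t \to u^{*} \in \Gamma(\Delta_n)$, $q_t \to q^{*}$, and $q_t \in \Gamma_{v_t}$ for all $t$, so closedness of $\Gamma_{(\cdot)}$ at $u^{*}$ gives $q^{*} \in \Gamma_{u^{*}}$. Since $u^{*} \in \Gamma(p)$, we have $\Gamma_{u^{*}} \subseteq \bigcup_{u \in \Gamma(p)} \Gamma_u = \Gamma_{\Gamma(p)}$, hence $q^{*} \in \Gamma_{\Gamma(p)}$, contradicting $q^{*} \notin \Gamma_{\Gamma(p)}$.

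The step I expect to need the most care is the double subsequence extraction and keeping the bookkeeping straight: one must first stabilize $q_t$, and only then stabilize $v_t$ (via $u_t$) along that same subsequence, so that when Lemma~\ref{lemma: gamma_subscript_closed_UHC} is invoked it is applied to a single coherent pair of sequences $(v_t, q_t)$ rather than to sequences defined on different subsequences. The remaining ingredients — that $\Gamma_{v_t} \neq \emptyset$ since $v_t \in \Gamma(\Delta_n)$, that $q_t \notin B_\epsilon(\Gamma_{\Gamma(p)})$ follows from $v_t \in \Gamma(q_t)$, that the inequality $d(\cdot, \Gamma_{\Gamma(p)}) \geq \epsilon$ is preserved under convergence, and that $\Gamma_{u^{*}} \subseteq \Gamma_{\Gamma(p)}$ by the very definition of the level-set bundle — are all routine. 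I would also remark at the outset that $\Gamma(p)$ being nonempty and compact makes $B_\delta(\Gamma(p))$ and $\Gamma_{\Gamma(p)}$ well-defined, so there is no vacuity to worry about.
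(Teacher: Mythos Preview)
Your proof is correct and takes a genuinely different route from the paper's. The paper proceeds constructively: it invokes metric upper hemicontinuity of $\Gamma_{(\cdot)}$ at each $u \in \Gamma(p)$ to obtain a pointwise $\delta_u$, builds the open cover $\bigcup_{u \in \Gamma(p)} B_{\delta_u}(u)$, performs a chain of set-algebraic inclusions to show $\Gamma(\Delta_n) \cap \bigcup_{u} B_{\delta_u}(u) \subseteq \Gamma(B_\epsilon(\Gamma_{\Gamma(p)}))$, and finally applies Lebesgue's Number Lemma to extract a uniform $\delta$. Your argument instead runs by contradiction and sequential compactness: negate the conclusion, produce a sequence $(v_t, q_t)$ with $q_t \in \Gamma_{v_t}$ and $d(q_t, \Gamma_{\Gamma(p)}) \geq \epsilon$, pass to convergent subsequences, and invoke the \emph{closedness} half of Lemma~\ref{lemma: gamma_subscript_closed_UHC} (rather than the UHC half) to conclude $q^* \in \Gamma_{u^*} \subseteq \Gamma_{\Gamma(p)}$, a contradiction. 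Your route is shorter and avoids both the Lebesgue Number Lemma and the somewhat delicate inclusion bookkeeping in the paper; the paper's route, on the other hand, makes the dependence of $\delta$ on the cover more visible and leans on the UHC formulation that the rest of Appendix~\ref{appendix:SIE-sufficient} also uses. Your caution about aligning the two subsequence extractions is well placed and handled correctly; the key observation that $v_t \notin \Gamma(B_\epsilon(\Gamma_{\Gamma(p)}))$ forces \emph{every} $q \in \Gamma_{v_t}$ (and hence your chosen $q_t$) outside $B_\epsilon(\Gamma_{\Gamma(p)})$ is exactly right.
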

\begin{proof}
    Pick any $p \in \Delta_{n}$. Since $\Gamma_{(\cdot)}$ is upper hemicontinuous at $u \in \Gamma(p)$ by Lemma \ref{lemma: gamma_subscript_closed_UHC}, we have that for every $\epsilon > 0$, $\exists \delta_{u} > 0$ for each $u \in \Gamma(p)$ such that:  

\begin{align*}
    &\Gamma_{u'} \subseteq B_{\epsilon}(\Gamma_{u}), \forall u' \in B_{\delta_{u}}(u) \\
    \implies &\cup_{u' \in B_{\delta_{u}}} \Gamma_{u'} = \Gamma_{B_{\delta_{u}}(u)} \subseteq B_{\epsilon}(\Gamma_{u}) \\ 
    \implies &\cup_{u \in \Gamma(p)}  \Gamma_{B_{\delta_{u}}(u)} \subseteq \cup_{u \in \Gamma(p)} B_{\epsilon}(\Gamma_{u}) =  B_{\epsilon} (\cup_{u \in \Gamma(p)} \Gamma_{u} )= B_{\epsilon}(\Gamma_{\Gamma(p)})
\end{align*}
So, we have shown that $\cup_{u \in \Gamma(p)}  \Gamma_{B_{\delta_{u}}(u)} \subseteq B_{\epsilon}(\Gamma_{\Gamma(p)})$. It follows by Lemma \ref{lemma: Gamma subset} that:

\begin{align}\label{eqn: sIEC_technical_0_eqn_1}
      \Gamma( \cup_{u \in \Gamma(p)}  \Gamma_{B_{\delta_{u}}(u)} ) \subseteq \Gamma(B_{\epsilon}(\Gamma_{\Gamma(p)}))
 \end{align}

Now, suppose $u' \in \Gamma(\Delta_{n}) \cap B_{\delta_{u}}(u)$. Let $p' \in \Gamma_{u'}$. Then, $p' \in \Gamma_{u'} \subseteq \Gamma_{B_{\delta_{u}}(u)}$. So $u' \in \Gamma(p') \subseteq \Gamma(\Gamma_{B_{\delta_{u}}(u)})$. Since $u'$ was picked arbitrarily in $\Gamma(\Delta_{n}) \cap B_{\delta_{u}}(u)$, it follows that:

\begin{align*}
    &\Gamma(\Delta_{n}) \cap B_{\delta_{u}}(u) \subseteq \Gamma(\Gamma_{B_{\delta_{u}}(u)})  \\
    \implies &\Gamma(\Delta_{n}) \cap \cup_{u \in \Gamma(p)}B_{\delta_{u}}(u) \subseteq \cup_{u \in \Gamma(p)}\Gamma(  \Gamma_{B_{\delta_{u}}(u)}) \subseteq  \Gamma( \cup_{u \in \Gamma(p)}  \Gamma_{B_{\delta_{u}}(u)} )
\end{align*}
The final inclusion, i.e., $\cup_{u \in \Gamma(p)}\Gamma(  \Gamma_{B_{\delta_{u}}(u)}) \subseteq  \Gamma( \cup_{u \in \Gamma(p)}  \Gamma_{B_{\delta_{u}}(u)} )$, holds by the following rationale: Suppose $v \in \cup_{u \in \Gamma(p)}\Gamma(  \Gamma_{B_{\delta_{u}}(u)})$. This means $\exists u' \in \Gamma(p): v \in \Gamma(\Gamma_{B_{\delta_{u'}}(u')}) \implies \exists p_v \in \Gamma_{B_{\delta_{u'}}(u')}: v \in \Gamma(p_v)$. Now, since $p_v \in \Gamma_{B_{\delta_{u'}}(u')}$, $p_v \in \cup_{u \in \Gamma(p)}  \Gamma_{B_{\delta_{u}}(u)}$ as $u' \in \Gamma(p)$. Thus, $v \in \Gamma(\cup_{u \in \Gamma(p)}  \Gamma_{B_{\delta_{u}}(u)})$. So, we have shown that: 
\begin{align}\label{eqn: sIEC_technical_0_eqn_2}
     \Gamma(\Delta_{n}) \cap \cup_{u \in \Gamma(p)}B_{\delta_{u}}(u)\subseteq  \Gamma( \cup_{u \in \Gamma(p)}  \Gamma_{B_{\delta_{u}}(u)} ) 
\end{align}
Observing that the RHS of inclusion (\ref{eqn: sIEC_technical_0_eqn_2}) and the LHS of inclusion (\ref{eqn: sIEC_technical_0_eqn_1}) are the same, we combine the inclusions to get that: 
\begin{align}\label{eqn: sIEC_technical_0_eqn_3}
     \Gamma(\Delta_{n}) \cap \cup_{u \in \Gamma(p)}B_{\delta_{u}}(u) \subseteq  \Gamma(B_{\epsilon}(\Gamma_{\Gamma(p)}))
\end{align}
Let us denote $\cup_{u \in \Gamma(p)} B_{\delta_{u}}(u)$ by $\mathcal{U}$. Now, notice that $\Gamma(p) \subseteq \mathcal{U}$. Further, $\mathcal{U}$ is a union of open sets and is thus open itself. This means, $\mathcal{U}$ is an open cover of $\Gamma(p)$. Since $\Gamma(p)$ is compact, due to Lemma \ref{lemma: Lebesgue's Number}, there exists a Lebesgue number $\delta_{\mathcal{L}} > 0$ for $\mathcal{U}$. Set $\delta := \delta_{\mathcal{L}}/2$. Then, by Corollary \ref{cor: Lebesgue's Numer}, $B_{\delta}(\Gamma(p)) \subseteq \mathcal{U}$. 
    $$
    \begin{aligned}
        \implies \Gamma(\Delta_{n}) \cap B_{\delta}(\Gamma(p)) &\subseteq \Gamma(\Delta_{n}) \cap \mathcal{U} \\
        &= \Gamma(\Delta_{n}) \cap  \Big(\cup_{u \in \Gamma(p)} B_{\delta_{u}}(u)\Big) \\
        &\subseteq \Gamma(B_{\epsilon}(\Gamma_{\Gamma(p)}))
    \end{aligned}
    $$
    \noindent where the final inclusion follows from inclusion (\ref{eqn: sIEC_technical_0_eqn_3}), thus concluding our proof. 
\end{proof}

\begin{lemma}\label{lemma: sIEC_technical_1}
    For any $p \in \Delta_{n}$, suppose $\Gamma(p)$ is a non-empty, compact set. Then, the set $\Gamma_{\Gamma(p)} = \cup_{u \in \Gamma(p)}\Gamma_{u}$ is closed. 
\end{lemma}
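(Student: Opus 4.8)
The plan is to prove closedness of $\Gamma_{\Gamma(p)}$ by a direct sequential argument that leverages the compactness of $\Gamma(p)$ (the hypothesis) together with the closedness of the correspondence $\Gamma_{(\cdot)}$ established in Lemma~\ref{lemma: gamma_subscript_closed_UHC}.

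First I would take an arbitrary sequence $\{p_i\}_{i \in \mathbb{N}_{+}} \subseteq \Gamma_{\Gamma(p)}$ with $p_i \to q$ for some $q$; since $\Delta_{n}$ is compact, $q \in \Delta_{n}$, and the goal is to show $q \in \Gamma_{\Gamma(p)}$. By definition of $\Gamma_{\Gamma(p)} = \bigcup_{u \in \Gamma(p)} \Gamma_{u}$, for each $i$ there is some $u_i \in \Gamma(p)$ with $p_i \in \Gamma_{u_i}$. Since $\Gamma(p)$ is compact by hypothesis, the sequence $\{u_i\}$ admits a subsequence $\{u_{i_j}\}_j$ converging to some $u \in \Gamma(p)$. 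Along this subsequence we have $p_{i_j} \in \Gamma_{u_{i_j}}$, $p_{i_j} \to q$, and $u_{i_j} \to u$.

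Next I would invoke Lemma~\ref{lemma: gamma_subscript_closed_UHC}: since $u \in \Gamma(p) \subseteq \Gamma(\Delta_{n})$, the correspondence $\Gamma_{(\cdot)}$ is closed at $u$, so $p_{i_j} \in \Gamma_{u_{i_j}}$, $u_{i_j} \to u$, and $p_{i_j} \to q$ together yield $q \in \Gamma_{u}$. Hence $q \in \bigcup_{v \in \Gamma(p)} \Gamma_{v} = \Gamma_{\Gamma(p)}$, establishing that $\Gamma_{\Gamma(p)}$ is closed.

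A slightly more abstract alternative is to note that $\Gamma_{(\cdot)}$ is compact-valued (Lemma~\ref{lemma: surrogate_level_set_closed_general}) and upper hemicontinuous on $\Gamma(\Delta_{n}) \supseteq \Gamma(p)$ (Lemma~\ref{lemma: gamma_subscript_closed_UHC}), and that $\Delta_{n}$ is compact; then Lemma~\ref{lemma: UHC_compact_valued_maps_compact_to_compact}, applied to the restriction of $\Gamma_{(\cdot)}$ to the compact set $\Gamma(p)$, directly gives that $\Gamma_{(\cdot)}(\Gamma(p)) = \Gamma_{\Gamma(p)}$ is compact, hence closed. The only mild subtlety — and really the only place to exercise care — is that our correspondence properties are only guaranteed on $\Gamma(\Delta_{n})$, so one must pass to the restriction to $\Gamma(p)$ (legitimate, since closedness/upper hemicontinuity at a point is a local property and restricting the domain only makes it easier); this is bookkeeping rather than a genuine obstacle. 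Since the sequential argument sidesteps even this, I would present it as the main proof and mention the correspondence-image route as a remark.
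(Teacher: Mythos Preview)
Your proposal is correct. Your sequential argument is a valid, more elementary route than the paper's: the paper proceeds exactly via your ``alternative'' --- it notes that $\Gamma_{(\cdot)}$ restricted to $\Gamma(p)$ is upper hemicontinuous and compact-valued (from Lemma~\ref{lemma: gamma_subscript_closed_UHC} and compactness of $\Delta_n$), then applies Lemma~\ref{lemma: UHC_compact_valued_maps_compact_to_compact} to the compact domain $\Gamma(p)$ to conclude $\Gamma_{\Gamma(p)}$ is compact, hence closed. Your direct sequential proof buys self-containment and avoids the mild bookkeeping about restricting the correspondence, while the paper's approach buys the slightly stronger conclusion (compactness, not just closedness) in one stroke; since the downstream use only needs closedness, either suffices.
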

\begin{proof}
    Fix $p \in \Delta_{n}$. We know from Lemma \ref{lemma: gamma_subscript_closed_UHC} that $\Gamma_{(\cdot)}$ is closed at any $u \in \Gamma(p)$. This implies that for any $u \in \Gamma(p)$, $\Gamma_{u}$ is a closed set. Since the target space $\Delta_{n}$ is compact, $\Gamma_{u}$ must be bounded, and thus $\Gamma_u$ is compact for each $u \in \Gamma(p)$. So the restriction of $\Gamma_{(\cdot)}$ to $\Gamma(p)$, i.e., $\Gamma_{(\cdot)|\Gamma(p)}: \Gamma(p) \rightrightarrows \Delta_{n}$ is upper hemicontinuous and compact-valued, and so by Lemma \ref{lemma: UHC_compact_valued_maps_compact_to_compact}, $\Gamma_{\Gamma(p)}$ is compact since $\Gamma(p)$ is compact by assumption. Thus, $\Gamma_{\Gamma(p)}$ is closed. 
\end{proof}

 \begin{lemma}\label{lemma: sIEC_technical_2}
     Let $L: \mathbb{R}^{d} \to \mathbb{R}^{n}$ be a convex, differentiable surrogate. Suppose Assumption \ref{assumption 1} holds . Let $\ell: \mathcal{R} \to \reals^{n}$ be a discrete target loss, with finite property $\gamma := \prop{L}$. If $L$ strongly indirectly elicits $\ell$, then the following holds: For each $p \in \Delta_{n}$:
     \begin{align*}
           \exists \hspace{0.3em} \epsilon_{p} > 0: \text{ for any } v \in \Gamma(B_{\epsilon_{p}}(\Gamma_{\Gamma(p)})), \text{it holds that } \gamma(q) \subseteq \gamma(p), \forall q \in \Gamma_{v}
     \end{align*}
 \end{lemma}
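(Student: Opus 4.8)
The plan is to invoke strong indirect elicitation twice. First I would use it to confine the level-set bundle $\Gamma_{\Gamma(p)}$ to a single target cell; then, combining this with compactness of the bundle (Lemma~\ref{lemma: sIEC_technical_1}), I would extract a full-dimensional $\epsilon_p$-neighborhood that avoids every other target cell; finally I would apply strong IE a second time to lift a containment obtained at one distribution near the bundle to \emph{every} distribution in $\Gamma_v$. Throughout write $S := \gamma(p)$, and recall that $\Gamma(B_{\epsilon}(\Gamma_{\Gamma(p)}))$ denotes $\bigcup\{\Gamma(q') : q' \in B_{\epsilon}(\Gamma_{\Gamma(p)}) \cap \Delta_{n}\}$, since $\Gamma$ is defined only on $\Delta_{n}$.

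For the first step, fix $q \in \Gamma_{\Gamma(p)} = \bigcup_{u \in \Gamma(p)} \Gamma_{u}$; there is $u \in \Gamma(p)$ with $q \in \Gamma_{u}$, and $u \in \Gamma(p)$ also gives $p \in \Gamma_{u}$, so strong IE (Definition~\ref{defn: main_strong_IE}) applied at $u$ forces $\gamma(q) = \gamma(p) = S$. Hence $\Gamma_{\Gamma(p)} \subseteq \gamma^{*}_{S} := \{q : \gamma(q) = S\}$, which is disjoint from $\gamma_{r}$ for every $r \notin S$. For the second step, note that $\Gamma_{\Gamma(p)}$ is nonempty (it contains $p$, since $\Gamma(p) \neq \emptyset$ by Assumption~\ref{assumption 1}) and compact by Lemma~\ref{lemma: sIEC_technical_1}, while $C := \bigcup_{r \notin S} \gamma_{r}$ is closed, each $\gamma_{r}$ being a (closed) polytope by Lemma~\ref{lemma: target_level_sets_polytope}. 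Since a nonempty compact set disjoint from a closed set lies at positive distance from it, I would set $\epsilon_{p} := \mathrm{dist}(\Gamma_{\Gamma(p)}, C) > 0$ (taking $\epsilon_{p} := 1$ in the degenerate case $C = \emptyset$). Then $B_{\epsilon_{p}}(\Gamma_{\Gamma(p)}) \cap C = \emptyset$, so any $q' \in B_{\epsilon_{p}}(\Gamma_{\Gamma(p)}) \cap \Delta_{n}$ lies in no cell $\gamma_{r}$ with $r \notin S$, i.e.\ $\gamma(q') \subseteq S = \gamma(p)$.

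For the third step, take any $v \in \Gamma(B_{\epsilon_{p}}(\Gamma_{\Gamma(p)}))$. Then $v \in \Gamma(q')$ for some $q' \in B_{\epsilon_{p}}(\Gamma_{\Gamma(p)}) \cap \Delta_{n}$, i.e.\ $q' \in \Gamma_{v}$, and the previous step gives $\gamma(q') \subseteq \gamma(p)$. Now for an arbitrary $q \in \Gamma_{v}$, strong IE applied at $v$ to the pair $q, q' \in \Gamma_{v}$ yields $\gamma(q) = \gamma(q') \subseteq \gamma(p)$, which is exactly the claim. I expect the main subtlety to be this last step rather than any calculation: the level set $\Gamma_{v}$ of a report that is merely optimal for \emph{some} distribution near the bundle can in principle reach far from $\Gamma_{\Gamma(p)}$, so proximity to the bundle does not by itself control $\gamma$ on all of $\Gamma_{v}$; it is precisely the $\gamma$-homogeneity of every surrogate level set guaranteed by strong IE that bridges this gap. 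The one other place requiring care is the appeal to compactness of $\Gamma_{\Gamma(p)}$ (Lemma~\ref{lemma: sIEC_technical_1}): without it the bundle could limit onto a lower-dimensional target cell and no uniform $\epsilon_{p}$ would exist, which is why Assumption~\ref{assumption 1}---equivalently, compactness of each $\Gamma(p)$, hence of the bundle---is needed here.
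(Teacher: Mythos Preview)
Your proposal is correct and follows essentially the same approach as the paper's proof: both apply strong IE once to show every point of the bundle $\Gamma_{\Gamma(p)}$ has $\gamma$-value exactly $\gamma(p)$, then use compactness of the bundle (Lemma~\ref{lemma: sIEC_technical_1}) together with closedness of the union of the other target cells to obtain a positive separation $\epsilon_p$, and finally apply strong IE a second time at $v$ to transfer the containment $\gamma(q') \subseteq \gamma(p)$ from the witness $q' \in \Gamma_v$ to every $q \in \Gamma_v$. The only cosmetic difference is notation (the paper calls your set $C$ by the name $S$).
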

\begin{proof}
    Let $p \in \Delta_{n}$. Define $S := \cup_{r' \in \mathcal{R}\setminus\gamma(p)}\gamma_{r'}$. If $S$ is empty, then $\gamma(p) = \mathcal{R}$ and $\gamma(p') \subseteq \gamma(p), \forall p \in \Delta_{n}$ and then the result follows trivially. Otherwise, notice that $\gamma(q) = \gamma(p)$  for any $q \in \Gamma_{\Gamma(p)}$, since $\exists v \in \Gamma(p): q \in \Gamma_{v}$ and then $\gamma(p) = \gamma(q)$ by strong indirect elicitation. Thus, $\gamma(q) \cap S = \emptyset, \forall q \in \Gamma_{\Gamma(p)}$, and so $S \cap \Gamma_{\Gamma(p)} = \emptyset$. Recall that the target cells, i.e., $\gamma_{r}: r \in \mathcal{R}$ are closed by virtue of being convex polytopes as shown in Lemma \ref{lemma: target_level_sets_polytope}. So, $S$ is a finite union of closed sets, and is thus closed. In fact $S$ is compact since $S \subseteq \Delta_{n}$. Also, $\Gamma_{\Gamma(p)}$ is closed by Lemma \ref{lemma: sIEC_technical_1}. Similarly, $\Gamma_{\Gamma(p)}$ is compact since $\Gamma_{\Gamma(p)} \subseteq \Delta_{n}$. Since $S \cap \Gamma_{\Gamma(p)} = \emptyset$, and both $S$ and $\Gamma_{\Gamma(p)}$ are compact and , it holds that $d(S, \Gamma_{\Gamma(p)}) > 0$. Thus, $\exists \epsilon_{p} > 0: \forall p' \in B_{\epsilon_{p}}(\Gamma_{\Gamma(p)}), p' \notin S$ and so: 
    \begin{align}\label{eqn: sIEC_technical_2_eq_1}
        \exists \epsilon_{p} > 0: \forall p' \in B_{\epsilon_{p}}(\Gamma_{\Gamma(p)}), \gamma(p') \subseteq \gamma(p)
    \end{align}
Now, suppose $v \in \Gamma(B_{\epsilon_{p}}(\Gamma_{\Gamma(p)}))$. So, there exists $p' \in \Gamma_{v}$, such that $p' \in B_{\epsilon_{p}}(\Gamma_{\Gamma(p)}) \implies \gamma(p') \subseteq \gamma(p)$ due to (\ref{eqn: sIEC_technical_2_eq_1}). Now, for any $q \in \Gamma_{v}$, it must hold that $\gamma(q) = \gamma(p')$ by strong indirect elicitation. Therefore, $\gamma(q) \subseteq \gamma(p), \forall q \in \Gamma_{v}$. The result follows since $v$ was arbitrarily chosen from $\Gamma(B_{\epsilon_{p}}(\Gamma_{\Gamma(p)}))$. 
\end{proof}

\begin{lemma} \label{lemma: main_sIEC_technical_3}
Let $L: \reals^{d} \to \reals^{n}$ be a convex, differentiable surrogate. Suppose Assumption \ref{assumption 1} holds. Then, $\Gamma(\Delta_{n})$ is closed. 
\end{lemma}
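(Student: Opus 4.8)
The plan is to exhibit $\Gamma(\Delta_n)$ as the image, under a closed projection, of a closed set. Recall first that $\Gamma(\Delta_n) = \bigcup_{p \in \Delta_n} \Gamma(p) = \{u \in \reals^d : \Gamma_u \neq \emptyset\}$, and by Lemma~\ref{lemma: surrogate_opt_condition_1} we may write $\Gamma_u = \{p \in \Delta_n : \nabla L(u)^\top p = \mathbf{0}_d\}$ for every $u$. Hence $u \in \Gamma(\Delta_n)$ exactly when the equation $\nabla L(u)^\top p = \mathbf{0}_d$ has a solution $p \in \Delta_n$. This reformulation is what makes the closedness transparent.

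I would then run the standard sequential argument. Let $\{u_t\}_{t \in \mathbb{N}_+} \subseteq \Gamma(\Delta_n)$ with $u_t \to u$; the goal is $u \in \Gamma(\Delta_n)$. For each $t$ choose $p_t \in \Gamma_{u_t} \subseteq \Delta_n$, so that $\nabla L(u_t)^\top p_t = \mathbf{0}_d$. Since $\Delta_n$ is compact, after passing to a subsequence we may assume $p_t \to p \in \Delta_n$. By Lemma~\ref{lemma: cvx_ctsdiff} each component of $L$ is continuously differentiable, so $\nabla L$ is continuous and $\nabla L(u_t) \to \nabla L(u)$. Exactly as in the proof of Lemma~\ref{lemma: gamma_subscript_closed_UHC}, the estimate
\[
\|\nabla L(u_t)^\top p_t - \nabla L(u)^\top p\| \le \|\nabla L(u_t)\|\,\|p_t - p\| + \|\nabla L(u_t) - \nabla L(u)\|\,\|p\|
\]
shows that the left-hand side tends to $0$, so $\nabla L(u)^\top p = \lim_t \nabla L(u_t)^\top p_t = \mathbf{0}_d$. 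Thus $p \in \Gamma_u$, i.e.\ $\Gamma_u \neq \emptyset$ and $u \in \Gamma(\Delta_n)$, proving closedness. An equivalent and perhaps cleaner packaging: the map $(u,p) \mapsto \nabla L(u)^\top p$ from $\reals^d \times \Delta_n$ to $\reals^d$ is continuous, so its zero set $Z$ is closed; since $\Delta_n$ is compact, the coordinate projection $\reals^d \times \Delta_n \to \reals^d$ is a closed map, and $\Gamma(\Delta_n)$ is precisely the projection of $Z$, hence closed.

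I do not anticipate a genuine obstacle here: the only delicate point is justifying that the stationarity identity $\nabla L(u_t)^\top p_t = \mathbf{0}_d$ passes to the limit, which is exactly the continuity of $\nabla L$ together with the compactness of $\Delta_n$ (used to extract the convergent subsequence of the $p_t$). It is worth noting that this argument does not actually invoke Assumption~\ref{assumption 1}; the hypothesis is carried along because the lemma is applied downstream in settings where we already require $\Gamma(p)$ to be non-empty and compact, but closedness of $\Gamma(\Delta_n)$ in fact holds for any convex, differentiable $L$.
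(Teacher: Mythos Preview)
Your proposal is correct and essentially identical to the paper's own proof: both take a convergent sequence $u_t \to u$ in $\Gamma(\Delta_n)$, select $p_t \in \Gamma_{u_t}$, extract a convergent subsequence via compactness of $\Delta_n$, and pass the stationarity condition $\nabla L(u_t)^\top p_t = \mathbf{0}_d$ to the limit using the same triangle-inequality estimate and continuity of $\nabla L$. Your additional remarks---the ``closed projection of a closed set'' repackaging and the observation that Assumption~\ref{assumption 1} is not actually invoked---are accurate and go slightly beyond what the paper records.
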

\begin{proof}
    We show that for any sequence $\{u_{t}\}_{t \in \mathbb{N}_{+}}$ such that $u_t \in \Gamma(\Delta_{n})$ for each $t \in \mathbb{N}_{+}$, if $u_{t} \to u$, then it holds that $u \in \Gamma(\Delta_{n})$. Since $u_t \in \Gamma(\Delta_{n})$, $\exists p_{t} \in \Delta_{n},$ such that $p_t \in \Gamma_{u_{t}}$ for each $t \in \mathbb{N}_{+}$. Now, since $\Delta_{n}$ is compact, we can extract a subsequence $p_{t_{j}}$, such that $p_{t_{j}} \to p \in \Delta_{n}$ and that $u_{t_{j}} \to u$. We will show that $u \in \Gamma(p)$. Consider: 
    \begin{align*}
        \|\nabla L(u_{t_{j}})^{\top}p_{t_{j}} - \nabla L(u)^{\top}p \| &\leq \|\nabla L(u_{t_{j}})^{\top}p_{t_{j}} - \nabla L(u)^{\top}p_{t_{j}} \| + \|\nabla L(u)^{\top}p_{t_{j}} - \nabla L(u)^{\top}p \| \\
        &\leq 1\cdot\| \nabla L(u_{t_{j}}) - L(u) \| + \|\nabla L(u) \| \cdot \|p_{t_{j}} - p\|
    \end{align*}
    Now, as $j \to \infty$, $\| \nabla L(u_{t_{j}}) - \nabla L(u)\| \to 0$ as $u_{t_{j}} \to u$ (by the continuity of $\nabla L(\cdot)$). Also, $\|p_{t_{j}} - p \| \to 0$ by construction. Hence, $\nabla L(u_{t_{j}})^{\top}p_{t_{j}} \to \nabla L(u)^{\top}p$. However, since $p_{t_{j}} \in \Gamma_{u_{t_{j}}}$, it holds that $\nabla L(u_{t_{j}})^{\top}p_{t_{j}} = \mathbf{0}_{d} \implies \nabla L(u)^{\top}p = \mathbf{0}_{d} \implies u \in \Gamma(p) \implies u \in \Gamma(\Delta_{n})$, thus concluding our proof. 
\end{proof}
We are now ready to state and prove our main link construction, which in turn proves the sufficiency of strong IE for calibration under our assumptions. Throughout the proof, let $\text{dist}(a, B) := \inf_{b \in B}\|a- b\|_{2}$, and when the infimum is attained, let $\text{proj}_{B}(a) := \{b \in B | \hspace{0.3em} \|a-b\|_{2} = \text{dist}(a, B) \}$, for any point $a$ and any set $B$. 
 
 \begin{theorem}
 \label{thm: sIEC_sufficient}
      Let $L: \mathbb{R}^{d} \to \mathbb{R}^{n}$ be a convex, differentiable surrogate. Let $\ell: \mathcal{R} \to \reals^{n}$ be a discrete target loss, with finite property $\gamma := \prop{L}$. Suppose $L$ strongly indirectly elicits $\ell$. Under Assumption \ref{assumption 1}, $(L, \psi)$ is calibrated with respect to $\ell$, for any link $\psi: \mathbb{R}^{d} \to \mathcal{R}$, that satisfies the following:
      $$
     \psi(u) \in 
         \gamma(p), \hspace{0.3em} \emph{for any} \hspace{0.3em} p \in \Gamma_{v} \hspace{0.3em} \emph{where} \hspace{0.3em} v \in \emph{proj}_{\Gamma(\Delta_{n})}(u)
     $$
 \end{theorem}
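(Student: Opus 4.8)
The plan is to bolt together the preparatory lemmas into a single $\epsilon$--$\delta$ estimate. First I would confirm that a link $\psi$ of the stated form exists. By Lemma~\ref{lemma: main_sIEC_technical_3} the set $\Gamma(\Delta_n)$ is closed, and it is nonempty under Assumption~\ref{assumption 1}, so for every $u\in\reals^d$ the nearest-point set $\text{proj}_{\Gamma(\Delta_n)}(u)$ is nonempty (intersect $\Gamma(\Delta_n)$ with a closed ball large enough to meet it and minimize the continuous distance function on the resulting compact set). Fixing any $v\in\text{proj}_{\Gamma(\Delta_n)}(u)$, we have $v\in\Gamma(\Delta_n)$, hence $\Gamma_v\neq\emptyset$, and strong IE forces $\gamma$ to be constant on $\Gamma_v$; so prescribing $\psi(u)$ to be any element of that common value is consistent and yields a well-defined link.

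Next I would fix $p\in\Delta_n$ and establish the calibration inequality there. Let $f := \langle p, L(\cdot)\rangle$, which is convex and differentiable; by Assumption~\ref{assumption 1} (equivalently Theorem~\ref{theorem: compact_nonempty_easy}) the set $\Gamma(p)=\argmin f$ is nonempty and compact, and write $f^*:=\inf_u f(u)$. Applying Lemma~\ref{lemma: sIEC_technical_2} yields $\epsilon_p>0$ such that $\gamma(q)\subseteq\gamma(p)$ for every $q\in\Gamma_v$ whenever $v\in\Gamma(B_{\epsilon_p}(\Gamma_{\Gamma(p)}))$; feeding this $\epsilon_p$ into Lemma~\ref{lemma: sIEC_technical_0} yields $\delta_p>0$ with $\Gamma(\Delta_n)\cap B_{\delta_p}(\Gamma(p))\subseteq\Gamma(B_{\epsilon_p}(\Gamma_{\Gamma(p)}))$. (Here I also use Lemma~\ref{lemma: sIEC_technical_1} to know $\Gamma_{\Gamma(p)}$ is closed, which those lemmas rely on.)

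The heart of the argument is the claim that every $u$ with $\text{dist}(u,\Gamma(p))<\delta_p/2$ has $\psi(u)\in\gamma(p)$. To see it, let $v\in\text{proj}_{\Gamma(\Delta_n)}(u)$ be the projection used by $\psi$; since $\Gamma(p)\subseteq\Gamma(\Delta_n)$ we get $\|u-v\|=\text{dist}(u,\Gamma(\Delta_n))\le\text{dist}(u,\Gamma(p))<\delta_p/2$, so by the triangle inequality $\text{dist}(v,\Gamma(p))<\delta_p$, i.e.\ $v\in\Gamma(\Delta_n)\cap B_{\delta_p}(\Gamma(p))\subseteq\Gamma(B_{\epsilon_p}(\Gamma_{\Gamma(p)}))$. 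Lemma~\ref{lemma: sIEC_technical_2} then gives $\gamma(q)\subseteq\gamma(p)$ for $q\in\Gamma_v$, and by construction of $\psi$ we have $\psi(u)\in\gamma(q)\subseteq\gamma(p)$. Contrapositively, $\psi(u)\notin\gamma(p)$ implies $u\notin B_{\delta_p/2}(\Gamma(p))$.

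Finally I would invoke Lemma~\ref{lemma: ball_around_argmin_1} with $f$, $\mathcal{U}^*=\Gamma(p)$, and $\delta=\delta_p/2$, which gives
\begin{align*}
\inf_{u:\,\psi(u)\notin\gamma(p)}\langle p,L(u)\rangle
&\ \ge\ \inf_{u\notin B_{\delta_p/2}(\Gamma(p))}\langle p,L(u)\rangle \\
&\ =\ \min_{u\in\partial\bar{B}_{\delta_p/2}(\Gamma(p))}\langle p,L(u)\rangle
\ >\ f^*\ =\ \inf_u\langle p,L(u)\rangle,
\end{align*}
which is precisely calibration at $p$ (the case $\gamma(p)=\mathcal{R}$ being trivial, as the left infimum is then over the empty set); since $p$ was arbitrary, $(L,\psi)$ is calibrated. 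The one genuinely delicate point in assembling this is the halving of the radius in the claim---needed so that the triangle-inequality bound on $\text{dist}(v,\Gamma(p))$ stays strictly below $\delta_p$---together with checking that all hypotheses of the invoked lemmas (compactness of $\Gamma(p)$ and $\Gamma_{\Gamma(p)}$, closedness of $\Gamma(\Delta_n)$, minimizability of $f$) are in force under Assumption~\ref{assumption 1}; the substantive geometric work is already done inside Lemmas~\ref{lemma: sIEC_technical_0} and~\ref{lemma: sIEC_technical_2}.
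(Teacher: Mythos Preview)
Your proposal is correct and follows essentially the same approach as the paper: obtain $\epsilon_p$ from Lemma~\ref{lemma: sIEC_technical_2}, then $\delta_p$ from Lemma~\ref{lemma: sIEC_technical_0}, and use the triangle inequality through the projection point to force $\psi(u)\in\gamma(p)$ near $\Gamma(p)$. The only cosmetic differences are that the paper argues by contradiction via a minimizing sequence and Lemma~\ref{lemma: minimizing_sequences} (with radius $\delta_p/4$), whereas you argue the contrapositive directly and invoke Lemma~\ref{lemma: ball_around_argmin_1} (with radius $\delta_p/2$); both are equivalent and your packaging is arguably cleaner.
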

\begin{proof}
        First observe that a link of form $\psi$ exists since for any $u \in \mathbb{R}^{d}$, $\text{proj}_{\Gamma(\Delta_{n})}(u)$ is non-empty and well-defined as $\Gamma(\Delta_{n})$ is closed by Lemma \ref{lemma: main_sIEC_technical_3}. Now, we show that for any such link $\psi$, $(L, \psi)$ satisfies calibration with respect to $\ell$. 

    For any $p \in \Delta_{n}$, we know from Lemma \ref{lemma: sIEC_technical_2} that there exists some $ \epsilon_{p} > 0$, such that for any $v \in \Gamma(B_{\epsilon_{p}}(\Gamma_{\Gamma(p)})), \text{it holds that } \gamma(q) \subseteq \gamma(p), \forall q \in \Gamma_{v}$. Then, we know from Lemma \ref{lemma: sIEC_technical_0}, that there exists $\delta_p > 0$ such that  $\Gamma(\Delta_n) \cap B_{\delta_{p}}(\Gamma(p)) \subseteq \Gamma(B_{\epsilon_{p}}(\Gamma_{\Gamma(p)}))$. So, 
    \begin{align}\label{eq: proof_main}
        \forall v \in \Gamma(\Delta_n) \cap B_{\delta_{p}}(\Gamma(p)), \gamma(q) \subseteq \gamma(p), \forall q \in \Gamma_{v} ~.
    \end{align} 
    Now, suppose there exists a link $\psi: \reals^{d} \to \R$ of the form proposed in the theorem statement, such that $(L, \psi)$ is not calibrated. This means, there exists $p \in \Delta_{n}$, such that
    \begin{align*}
        \text{inf}_{u \in \reals^{d}: \psi(u) \notin \gamma(p)} \langle p, L(u) \rangle = \text{inf}_{u \in \reals^{d}} \langle p, L(u) \rangle ~.
    \end{align*}
    Thus, there exists some sequence $\{u_{t}\}_{t \in \mathbb{N}_{+}}$ such that $\psi(u_t) \notin \gamma(p)$ but $\lim_{t \to \infty}\langle p, L(u_t) \rangle \to \text{inf}_{u \in \reals^{d}} \langle p, L(u) \rangle$. It follows from Lemma \ref{lemma: minimizing_sequences} in Appendix \ref{appendix: cvx_general}, that $\lim_{t\to \infty} \text{dist}(u_t, \Gamma(p)) \to 0$. Thus, for some $t \in \mathbb{N}_{+}$, it holds that $\text{dist}(u_t, \Gamma(p)) < \delta_{p}/4$. Since $\Gamma(p)$ is compact, there must be some $u^{*} \in \Gamma(p)$, such that $\|u_t - u^{*}\|_{2} < \delta_{p}/4$. However, since $\psi(u_t) \notin \gamma(p)$, there exists some $v \in \text{proj}_{\Gamma(\Delta_{n})}(u_t)$, such that $\|u_t - v\|_{2}  \leq \|u_t - u^{*}\|_{2} < \delta_{p}/4$. Further, by the link definition, $\psi(u_t) \in \gamma(q)$, and since $\psi(u_t) \notin \gamma(p)$, it holds that $\gamma(q) \not \subseteq \gamma(p)$, for $q \in \Gamma_{v}$. However, this contradicts condition $(\ref{eq: proof_main})$ as $\|u^{*} - v\|_{2} \leq \|u^{*} - u_t\|_{2} + \|u_t - v\|_{2} < \delta_p/ 2$. Thus, no such sequence exists and so
    \begin{align*}
    \text{inf}_{u \in \reals^{d}: \psi(u) \notin \gamma(p)} \langle p, L(u) \rangle > \text{inf}_{u \in \reals^{d}} \langle p, L(u) \rangle ~.
    \end{align*}
Hence $(L, \psi)$ is calibrated with respect to $\ell$. 
\end{proof}

\section{Equivalence of Strong Indirect Elicitation and Calibration under Strong Convexity}\label{appendix:SIE-equivalence}
\begin{lemma}\label{lemma: whole_cts}
    Let $L: \mathbb{R}^{d} \to \mathbb{R}^{n}$ be a convex, differentiable surrogate. Consider the function, $F_{L}: \Delta_{n} \times \mathbb{R}^{d} \to \mathbb{R}$, where $F_{L}(p, u) = \langle p, L(u) \rangle$. $F_{L}$ is continuous. 
\end{lemma}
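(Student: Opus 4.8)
The plan is to reduce continuity of $F_L$ to the continuity of the vector-valued map $L$ together with the (joint) continuity of the bilinear pairing $(p,v)\mapsto \inprod{p}{v}$ on $\Delta_n\times\reals^n$. First I would observe that for each $y\in[n]$ the component $L(\cdot)_y:\reals^d\to\reals$ is differentiable, hence continuous (equivalently, it is a finite convex function on $\reals^d$, hence continuous by \cite[Corollary 10.1.1]{rockafellar1970convex}, the fact already used in the proof of Lemma~\ref{lemma: cvx_argmin_closed}). Therefore $L:\reals^d\to\reals^n$ is continuous as a map into $\reals^n$.

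Next I would argue by sequential continuity. Fix an arbitrary point $(p,u)\in\Delta_n\times\reals^d$ and take any sequence $(p_t,u_t)\to(p,u)$ in $\Delta_n\times\reals^d$. By continuity of $L$, $L(u_t)\to L(u)$; in particular the sequence $\{L(u_t)\}_{t}$ is bounded, say $\|L(u_t)\|\le M$ for all $t$. Then, writing $F_L(p,u)=\inprod{p}{L(u)}$ and adding and subtracting $\inprod{p}{L(u_t)}$,
\[
\big|F_L(p_t,u_t)-F_L(p,u)\big|\;\le\;\big|\inprod{p_t-p}{L(u_t)}\big|+\big|\inprod{p}{L(u_t)-L(u)}\big|\;\le\;M\|p_t-p\|+\|p\|\,\|L(u_t)-L(u)\|,
\]
and both terms on the right tend to $0$ as $t\to\infty$. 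Hence $F_L(p_t,u_t)\to F_L(p,u)$, so $F_L$ is continuous at $(p,u)$; since $(p,u)$ was arbitrary, $F_L$ is continuous on $\Delta_n\times\reals^d$.

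I do not anticipate a genuine obstacle here: the only non-elementary ingredient is the continuity of each (finite, convex, differentiable) component $L(\cdot)_y$, which is classical and is already invoked earlier in the paper; everything else is the elementary estimate above. The one point to be careful about is that $\{L(u_t)\}$ must be shown bounded before applying the splitting, which follows immediately from its convergence.
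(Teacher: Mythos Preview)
Your proof is correct and follows essentially the same approach as the paper: sequential continuity via an add-and-subtract splitting together with Cauchy--Schwarz. The only cosmetic difference is that the paper inserts the intermediate term $\inprod{p_t}{L(u)}$ (using $\|p_t\|\le 1$ on the simplex) whereas you insert $\inprod{p}{L(u_t)}$ (and then invoke boundedness of $\{L(u_t)\}$ from its convergence); both splittings lead to the same conclusion with no substantive difference.
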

\begin{proof}
    Let $\{(p_{t}, u_t)\}_{t \in \mathbb{N_{+}}}$ be a sequence in $\Delta_{n} \times \mathbb{R}^{d}$, such that $\lim_{t \to \infty}(p_{t}, u_{t}) \to (p, u)$, for some $p \in \Delta_{n}, u \in \mathbb{R}^{d}$. We need to show that $\lim_{t \to \infty} \langle p_t , L(u_t) \rangle \to \langle p, L(u) \rangle$. 
    \begin{align*}
         | \langle p_t , L(u_t) \rangle - \langle p, L(u) \rangle| &\leq | \langle p_t , L(u_t) \rangle - \langle p_t, L(u) \rangle | + |\langle p_t, L(u)\rangle - \langle p, L(u) \rangle| \\ 
    &\leq \|p_t\|\cdot\|L(u_t) - L(u)\| + \|p_t - p\|\cdot\|L(u)\| \\
    &\leq 1\cdot\|L(u_t) - L(u)\| + \|L(u)\|\cdot\|p_t - p\|
    \end{align*}
Taking the limit as $t \to \infty$, $\|L(u_t) - L(u)\| + \|L(u)\|\cdot\|p_t - p\| \to 0$ since $\|p_t - p\| \to 0$ by construction, and $\|L(u_t) - L(u)\| \to 0$ as $u_t \to u$ and $L_{y}(\cdot)$ is continuous for each $y \in [n]$, and thus $L_{y}(u_t) \to L_{y}(u), \forall y \in [n] \implies \|L(u_t) - L(u)\| \to 0$. Thus, $\lim_{t \to \infty} \langle p_t, L(u_t)\rangle = \langle p, L(u) \rangle$, and so $\lim_{t \to \infty} F_{L}(p_t, u_t) \to F_{L}(p, u)$, whenever $\lim_{t \to \infty}(p_t, u_t) \to (p, u)$. Hence, $F_{L}$ is continuous.  
\end{proof}

\begin{lemma}\label{lemma: strong_cvx_helper_0}
    Let $f: \mathbb{R}^{d} \to \mathbb{R}$ be a differentiable, strongly convex function. Then $\argmin_{u \in \mathbb{R}^{d}} f(u)$ exists and is a singleton. 
\end{lemma}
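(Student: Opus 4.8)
The plan is to exploit the quadratic lower bound that characterizes strong convexity. Write $m>0$ for the strong-convexity parameter, so that differentiability gives $f(v) \ge f(u) + \langle \nabla f(u), v-u\rangle + \tfrac{m}{2}\|v-u\|_2^2$ for all $u,v \in \reals^d$. I will establish existence via coercivity and uniqueness via the first-order condition.

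\textbf{Existence.} First I would fix an arbitrary $u_0 \in \reals^d$ and apply the bound with $u=u_0$ to show $f$ is coercive: by Cauchy--Schwarz, $f(v) \ge f(u_0) - \|\nabla f(u_0)\|_2\,\|v-u_0\|_2 + \tfrac{m}{2}\|v-u_0\|_2^2$, and the right-hand side tends to $+\infty$ as $\|v\|_2 \to \infty$ since the quadratic term dominates the linear one. Hence the sublevel set $S := \{v : f(v) \le f(u_0)\}$ is bounded; it is closed because $f$ is continuous (being differentiable, cf.\ Lemma~\ref{lemma: cvx_ctsdiff}), so $S$ is compact and nonempty. As a continuous function on the nonempty compact set $S$, $f$ attains its minimum on $S$, and this is the global minimum. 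Thus $\argmin_{u \in \reals^d} f(u) \neq \emptyset$; closedness of this set also follows directly from Lemma~\ref{lemma: cvx_argmin_closed}.

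\textbf{Uniqueness.} Let $u^* \in \argmin_{u \in \reals^d} f(u)$. Since $f$ is everywhere-finite and differentiable on $\reals^d$ and attains a minimum at the interior point $u^*$, the first-order optimality condition gives $\nabla f(u^*) = \mathbf{0}_d$. Substituting $u = u^*$ into the strong-convexity inequality yields $f(v) \ge f(u^*) + \tfrac{m}{2}\|v-u^*\|_2^2$ for every $v \in \reals^d$, so $f(v) > f(u^*)$ whenever $v \neq u^*$. Therefore $u^*$ is the unique global minimizer, i.e., $\argmin_{u \in \reals^d} f(u)$ is a singleton.

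The argument is routine; the only mildly delicate point is the coercivity step, which is handled by observing that the $\tfrac{m}{2}\|\cdot\|^2$ term eventually dominates the linear term coming from $\langle \nabla f(u_0), \cdot\,\rangle$. An alternative for uniqueness, avoiding the first-order condition, is to apply the strong-convexity bound at the midpoint of two putative minimizers $u_1 \neq u_2$ to get $f\big(\tfrac{u_1+u_2}{2}\big) \le \tfrac12 f(u_1) + \tfrac12 f(u_2) - \tfrac{m}{8}\|u_1-u_2\|_2^2 < f(u_1)$, a contradiction.
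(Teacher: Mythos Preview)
Your proof is correct and follows the standard argument. The paper itself states this lemma without proof, treating it as a well-known fact from convex analysis; your coercivity-then-uniqueness argument is exactly the routine justification one would supply, so there is nothing further to compare.
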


\begin{lemma}\label{lemma: strong_cvx_helper}\citep{boyd2004convex}
    Let $f: \mathbb{R}^{d} \to \mathbb{R}$ be $\mu_f$-strongly convex and let $g: \mathbb{R}^{d} \to \mathbb{R}$ be $\mu_g$-strongly convex. Then $f+g$ is $(\mu_f +\mu_g)$-strongly convex. For any $\alpha > 0$, the function $\alpha\cdot f$ is $\alpha\cdot \mu_f$-strongly convex. Also, $f$ is $\mu$-strongly convex for every $0 < \mu \leq \mu_{f}$. 
\end{lemma}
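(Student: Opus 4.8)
The plan is to work directly from the standard ``quadratic-subtraction'' characterization of strong convexity: a function $h:\reals^d \to \reals$ is $\mu$-strongly convex if and only if $h - \tfrac{\mu}{2}\|\cdot\|_2^2$ is convex. Once everything is phrased through this lens, each of the three claims collapses to two elementary facts — sums of convex functions are convex, and nonnegative scalar multiples of convex functions are convex — so no real machinery is needed.

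First I would record the shorthand $\phi_f := f - \tfrac{\mu_f}{2}\|\cdot\|_2^2$ and $\phi_g := g - \tfrac{\mu_g}{2}\|\cdot\|_2^2$, both convex by hypothesis. For the sum claim, observe $(f+g) - \tfrac{\mu_f+\mu_g}{2}\|\cdot\|_2^2 = \phi_f + \phi_g$, a sum of convex functions, hence convex; thus $f+g$ is $(\mu_f+\mu_g)$-strongly convex. For the scaling claim, fix $\alpha>0$ and note $\alpha f - \tfrac{\alpha\mu_f}{2}\|\cdot\|_2^2 = \alpha\,\phi_f$, which is a nonnegative multiple of a convex function and therefore convex, so $\alpha f$ is $(\alpha\mu_f)$-strongly convex.

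For the monotonicity-in-modulus claim, suppose $0 < \mu \le \mu_f$. Then $f - \tfrac{\mu}{2}\|\cdot\|_2^2 = \phi_f + \tfrac{\mu_f - \mu}{2}\|\cdot\|_2^2$, the sum of the convex function $\phi_f$ and a nonnegative multiple of the convex function $\|\cdot\|_2^2$, hence convex; so $f$ is $\mu$-strongly convex. This completes all three parts.

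I do not expect any genuine obstacle here; the only point requiring minor care is to fix a single definition of strong convexity at the outset and use it uniformly, rather than switching between the quadratic-subtraction form and the midpoint inequality $h(\lambda x + (1-\lambda)y) \le \lambda h(x) + (1-\lambda)h(y) - \tfrac{\mu}{2}\lambda(1-\lambda)\|x-y\|_2^2$ (which is equivalent, and would let one instead simply add or rescale the inequalities for $f$ and $g$, but introduces an extra translation step). Since the result is a textbook fact \citep{boyd2004convex}, a short proof along these lines suffices.
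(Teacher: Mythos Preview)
Your argument is correct. Note, however, that the paper does not supply its own proof of this lemma: it is stated with a citation to \citep{boyd2004convex} and no proof environment follows. So there is nothing to compare against beyond observing that your quadratic-subtraction approach is exactly the standard textbook derivation, and it goes through cleanly for all three claims.
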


\begin{lemma}\label{lemma: uniform_strong_convexity}
    Let $L: \mathbb{R}^{d} \to \mathbb{R}^{n}$ be a convex, differentiable surrogate. Suppose for each $y \in [n]$ $L_{y}: \mathbb{R}^{d} \to \mathbb{R}$ is $\mu_{y}$-strongly convex, where $\mu_y > 0$. Let $\mu_{m} := \min\{\mu_{i}\}_{i=1}^{n}$. Then $\langle p, L(\cdot)\rangle$ is $\mu_{m}$-strongly convex, for every $p \in \Delta_{n}$. 
\end{lemma}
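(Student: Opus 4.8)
The plan is to reduce the claim to the three composition rules recorded in Lemma~\ref{lemma: strong_cvx_helper}, being slightly careful about components of $p$ that vanish. Fix an arbitrary $p \in \Delta_{n}$ and set $Y^{+} := \{y \in [n] : p_y > 0\}$, which is nonempty since $p^{\top}\mathbf{1}_{n} = 1$. Because the terms with $p_y = 0$ contribute the identically-zero function, we may write $\langle p, L(\cdot)\rangle = \sum_{y \in [n]} p_y L(\cdot)_y = \sum_{y \in Y^{+}} p_y L(\cdot)_y$, so it suffices to determine the strong-convexity modulus of this finite sum.

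First I would treat each summand individually. For $y \in Y^{+}$, the component $L(\cdot)_y$ is $\mu_y$-strongly convex by hypothesis, so the scaling part of Lemma~\ref{lemma: strong_cvx_helper} gives that $p_y L(\cdot)_y$ is $p_y \mu_y$-strongly convex. Applying the additivity part of Lemma~\ref{lemma: strong_cvx_helper} inductively over the finitely many indices $y \in Y^{+}$ then yields that $\langle p, L(\cdot)\rangle = \sum_{y \in Y^{+}} p_y L(\cdot)_y$ is $\big(\sum_{y \in Y^{+}} p_y \mu_y\big)$-strongly convex.

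It remains only to compare this modulus with $\mu_m$. Since $\mu_y \geq \mu_m$ for every $y$ and each $p_y \geq 0$, we have $\sum_{y \in Y^{+}} p_y \mu_y \geq \mu_m \sum_{y \in Y^{+}} p_y = \mu_m\, p^{\top}\mathbf{1}_{n} = \mu_m > 0$. By the final clause of Lemma~\ref{lemma: strong_cvx_helper} (a $\mu$-strongly convex function is $\mu'$-strongly convex for every $0 < \mu' \leq \mu$), the function $\langle p, L(\cdot)\rangle$ is therefore $\mu_m$-strongly convex, and since $p \in \Delta_{n}$ was arbitrary this proves the lemma. The only subtlety — and it is a minor one — is that the scaling rule cannot be applied to indices with $p_y = 0$, since $0 \cdot L(\cdot)_y$ is merely convex rather than strongly convex; restricting the sum to $Y^{+}$ at the outset sidesteps this entirely.
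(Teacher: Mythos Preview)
Your argument is correct and follows essentially the same route as the paper: scale each component via Lemma~\ref{lemma: strong_cvx_helper}, add the moduli, bound the resulting sum below by $\mu_m$ using $\sum_y p_y = 1$, and downgrade the modulus. Your version is in fact slightly cleaner than the paper's, which applies the scaling rule to all $y\in[n]$ without isolating the indices with $p_y=0$ and also writes the comparison inequality in the wrong direction (it states $\sum_y p_y\mu_y \le \mu_m$ rather than $\ge$); your restriction to $Y^{+}$ and correct inequality address both points.
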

\begin{proof}
    Let $p \in \Delta_{n}$. For any $y \in [n]$, $p_y\cdot L_{y}$ is $p_{y}\cdot\mu_{y}$-strongly convex by Lemma \ref{lemma: strong_cvx_helper}. Also, $\langle p, L(\cdot) \rangle = \Sigma_{y \in [n]} p_{y}\cdot L_{y}(\cdot)$ is $\Sigma_{y \in [n]}p_{y}\cdot\mu_{y}$- strongly convex by Lemma \ref{lemma: strong_cvx_helper}. Notice that, $\Sigma_{y \in [n]}p_{y}\cdot\mu_{y} \leq \Sigma_{y \in [n]}p_{y}\cdot\mu_{m} = \mu_{m}$. Thus, $\langle p, L(\cdot) \rangle$ is $\mu_{m}$-strongly convex by Lemma \ref{lemma: strong_cvx_helper}.    
\end{proof}

 \begin{lemma}\label{lemma: argmin_cts}
     Let $L: \mathbb{R}^{d} \to \mathbb{R}^{n}$ be a surrogate loss, with strongly convex, differentiable components. Then, $\Gamma: \Delta_{n} \rightrightarrows \mathbb{R}^{d}$ is single-valued and continuous.
 \end{lemma}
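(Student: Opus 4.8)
The plan is to separate the two claims. Single-valuedness is immediate: by Lemma~\ref{lemma: uniform_strong_convexity}, for every $p \in \Delta_{n}$ the function $\langle p, L(\cdot)\rangle$ is $\mu_m$-strongly convex, where $\mu_m := \min_{y \in [n]} \mu_y > 0$, and by Lemma~\ref{lemma: strong_cvx_helper_0} a differentiable strongly convex function has a unique minimizer. Hence $\Gamma(p)$ is a singleton for each $p$, and we may regard $\Gamma$ as a genuine function $\Gamma : \Delta_{n} \to \mathbb{R}^{d}$; it remains to prove that this function is continuous.

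For continuity I would argue via sequences. Fix $p \in \Delta_{n}$ and let $p_t \to p$ in $\Delta_{n}$. \textbf{Step 1 (a uniform bound on the minimizers).} Since each $L(\cdot)_y$ is strongly convex it attains a minimum value $m_y := \min_{u} L(u)_y$, so $\langle p_t, L(\Gamma(p_t))\rangle \ge \min_{y} m_y =: m$ for all $t$. Fixing any $u_0 \in \mathbb{R}^{d}$, $\mu_m$-strong convexity of $\langle p_t, L(\cdot)\rangle$ about its minimizer $\Gamma(p_t)$ (whose gradient vanishes there) gives $\langle p_t, L(u_0)\rangle \ge \langle p_t, L(\Gamma(p_t))\rangle + \tfrac{\mu_m}{2}\|u_0 - \Gamma(p_t)\|^2$, and since $\langle p_t, L(u_0)\rangle \le \max_{y} L(u_0)_y =: M_0$ we get $\|u_0 - \Gamma(p_t)\|^2 \le \tfrac{2}{\mu_m}(M_0 - m)$, a bound independent of $t$. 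So $\{\Gamma(p_t)\}_t$ is bounded. \textbf{Step 2 (identifying subsequential limits).} Let $u^*$ be the limit of any convergent subsequence $\Gamma(p_{t_j}) \to u^*$. Using continuity of $F_L(p,u) = \langle p, L(u)\rangle$ from Lemma~\ref{lemma: whole_cts}, for every $u \in \mathbb{R}^{d}$ we have $\langle p, L(u^*)\rangle = \lim_j \langle p_{t_j}, L(\Gamma(p_{t_j}))\rangle \le \lim_j \langle p_{t_j}, L(u)\rangle = \langle p, L(u)\rangle$, so $u^* \in \Gamma(p)$, whence $u^* = \Gamma(p)$ by uniqueness. \textbf{Step 3.} A bounded sequence all of whose convergent subsequences share the limit $\Gamma(p)$ converges to $\Gamma(p)$; hence $\Gamma(p_t) \to \Gamma(p)$, which is continuity.

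An alternative route, parallel to Lemma~\ref{lemma: gamma_subscript_closed_UHC} but with the two spaces swapped, is to observe that if $p_t \to p$, $u_t \in \Gamma(p_t)$, and $u_t \to u$, then $\nabla L(u_t)^{\top} p_t \to \nabla L(u)^{\top} p$ by continuity of $\nabla L$, and since each $\nabla L(u_t)^{\top} p_t = \mathbf{0}_d$ the limit is $\mathbf{0}_d$ too, i.e. $u \in \Gamma(p)$; thus $\Gamma$ is a closed correspondence, and closedness together with the local boundedness from Step~1 (and compactness of closed balls) yields upper hemicontinuity, which for a single-valued correspondence is exactly continuity. Either way, the one substantive point — and the step I expect to take the most care — is Step~1: strong convexity is needed not merely for single-valuedness but to furnish a bound on the minimizers that is uniform over distributions near $p$, which is what lets the subsequence argument close.
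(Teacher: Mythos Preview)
Your proof is correct and follows the same three-step skeleton as the paper: single-valuedness from strong convexity, a uniform bound on the minimizers, then a subsequence argument using continuity of $F_L$ (Lemma~\ref{lemma: whole_cts}) together with uniqueness of the minimizer. The one place you differ is Step~1: the paper obtains the bound on $\Gamma(\Delta_n)$ by a somewhat roundabout ray argument (comparing values at a fixed point $v$, a point on $\partial B_1(v)$, and a far point $v+\beta w$, and using the three-point strong convexity inequality), whereas you go straight to the quadratic lower bound $\langle p_t,L(u_0)\rangle \ge \langle p_t,L(\Gamma(p_t))\rangle + \tfrac{\mu_m}{2}\|u_0-\Gamma(p_t)\|^2$ and sandwich both sides by $M_0$ and $m$ --- this is cleaner and gives exactly what is needed in two lines. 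Your alternative closed-graph route is also valid and is not pursued in the paper.
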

\begin{proof}
    Suppose for each $y \in [n]$ $L_{y}: \mathbb{R}^{d} \to \mathbb{R}$ is $\mu_{y}$-strongly convex, where $\mu_y > 0$. Let $\mu_{m} := \min\{\mu_{i}\}_{i=1}^{n}$. Then we know by Lemma \ref{lemma: uniform_strong_convexity}, that $\langle p, L(\cdot)\rangle$ is $\mu_{m}$-strongly convex for every $p \in \Delta_{n}$. Fix some $p \in \Delta_{n}$. Let $\{p_t\}_{t \in \mathbb{N}_{+}}$ be a sequence of distributions in $\Delta_{n}$, such that $\lim_{t \to \infty} p_t \to p$. We know from Lemma \ref{lemma: strong_cvx_helper_0} that $\Gamma(p_t)$ exists and is single-valued for each $p_t$ since $\langle p_t, L(\cdot)\rangle$ is differentiable and strongly convex. Suppose $u_t \in \mathbb{R}^{d}$ such that $ u_t = \Gamma(p_t)$, for each $t \in \mathbb{N}_{+}$. We need to show that $\lim _{t \to \infty}u_t \to u^{*}$ where $u^{*} = \Gamma(p)$ (again, $\Gamma(p)$ exists and is single-valued). 
    
    Suppose $v \in \mathbb{R}^{d}$. We know from Lemma \ref{lemma: whole_cts} that the function $F: \Delta_{n} \times \mathbb{R}^{d}$, where $F_{L}(\cdot, \cdot) = \langle \cdot, L(\cdot) \rangle$ is continuous. Then define $m_v := \min_{u \in \partial B_{1}(v), q \in \Delta_{n}} \langle q, L(u)\rangle$ and $M_{v} := \max_{q \in \Delta_{n}} \langle q, L(v) \rangle$. Both $m_v, M_v$ exist since $F_{L}$ is continuous, $\partial B_{1}(v) \times \Delta_{n}$ and $\Delta_{n} \times \{v\}$ are compact. Now, pick any $w \in \mathbb{R}^{d}: \|w\| = 1$. Let $\beta > \max\{1, 1 + \frac{2\cdot(M_v - m_v)}{\mu_{m}}\}$. Let $v_1 := v + w$ and $v_2 := v + \beta\cdot w$. Notice, $v_1 = \frac{\beta -1}{\beta} v + \frac{1}{\beta} v_2$. Next, for any $q \in \Delta_{n}$, we have that: 
    \begin{align*}
        m_v &\leq \langle q , L(v_1) \rangle  \\ &\leq \frac{\beta-1}{\beta} \langle q, L(v) \rangle  + \frac{1}{\beta}\langle q, L(v_2)\rangle - \frac{1}{2}\cdot\mu_{m}\cdot\frac{\beta-1}{\beta}\cdot\frac{1}{\beta}\cdot\|v - v_2\|^{2} \\
        &= \frac{\beta-1}{\beta} \langle q, L(v) \rangle  + \frac{1}{\beta}\langle q, L(v_2)\rangle - \frac{1}{2}\cdot\mu_{m}\cdot(\beta-1)
    \end{align*}

So we have established that, 
\begin{align}\label{eqn: argmin_cts_1}
    m_v \leq \frac{\beta-1}{\beta} \langle q, L(v) \rangle  + \frac{1}{\beta}\langle q, L(v_2)\rangle - \frac{1}{2}\cdot\mu_{m}\cdot(\beta-1)
\end{align}

The first inequality holds due to the definition of $m_v$. The second inequality follows by $\mu_{m}$-strong convexity and the final equality follows from the definition of $v_2$. Now, we claim that, $\langle q, L(v_2)\rangle > \langle q, L(v)\rangle$. Assume to the contrary that $\langle q, L(v_2)\rangle \leq \langle q, L(v)\rangle$. Since, $M_{v} \geq \langle q, L(v) \rangle$, we get by (\ref{eqn: argmin_cts_1}) that, $m_v \leq M_v - \frac{1}{2}\cdot \mu_{m} \cdot (\beta-1) \implies m_v - M_v \leq -\frac{1}{2}\cdot \mu_{v} \cdot (\beta-1) \implies \beta \leq 1 + \frac{2\cdot(M_v - m_v)}{\mu_m}$, which violates the condition for choosing $\beta > \max\{1, 1 + \frac{2\cdot(M_v-m_v)}{\mu_m}\}$. Thus, $\langle q, L(u) \rangle > \langle q, L(v) \rangle$ for any $u: \|v-u\| > \max\{1, 1 + \frac{2\cdot(M_v-m_v)}{\mu_m}\}$ and any $q \in \Delta_{n}$, since $v_2 = v + \beta\cdot w$ for arbitrary $w \in \mathbb{R}^{d}: \| w \| = 1$ and $\beta > \max\{1, 1 + \frac{2\cdot(M_v-m_v)}{\mu_m}\}$ was chosen arbitrarily, followed by which $q \in \Delta_{n}$ was also arbitrarily picked. Thus, for any $q \in \Delta_{n}$, $\Gamma(q)$ must be such that $\| v - \Gamma(q)\| \leq \max\{1, 1 + \frac{2\cdot(M_v-m_v)}{\mu_m}\}$ since $\langle q, L(\Gamma(q)) \rangle \leq \langle q, L(v) \rangle$. Thus, $\Gamma(\Delta_{n})$ is uniformly bounded in a ball around $v$. And so, the sequence $\{u_t\}_{t\in\mathbb{N}_{+}} = \Gamma(p_t)_{t\in\mathbb{N}_{+}}$ must be bounded as well. Thus, there exists a $u' \in \mathbb{R}^{d}$ and a subsequence $\{u_{t_{j}}\}_{j \in \mathbb{N}_{+}}$ such that $\lim_{j \to \infty} u_{t_{j}} \to u^{'}$. By definition, $\langle p_{t_{j}}, L(u_{t_{j}}) \rangle \leq \langle p_{t_{j}}, L(u) \rangle$ for every $u \in \mathbb{R}^{d}$. Thus, by continuity of $F_{L}$, we have that $\langle p, L(u^{'})\rangle \leq \langle p , L(u)\rangle, \forall u \in \mathbb{R}^{d}$. Since $\langle p, L(\cdot)\rangle$ admits a unique minimizer, it follows that $u' = u^{*} = \Gamma(p)$. Thus, every convergent subsequence of $\{u_t\}_{t \in \mathbb{N}_{+}}$ must converge to the same limit $u^{*}$, and as  $\{u_t\}_{t \in \mathbb{N}_{+}}$ is bounded, $\lim_{t \to \infty} u_t = u^{*} = \Gamma(p)$. 

Thus, we have shown that for any $p \in \Delta_{n}$ and any sequence of distributions $\{p_t\}_{t \in \mathbb{N}_{+}}$, such that $\lim_{t \to \infty}p_t \to p$, it follows that $\Gamma(p_t) \to \Gamma(p)$. Thus, $\Gamma$ is continuous. 
\end{proof}

\begin{lemma}
Let $L: \mathbb{R}^{d} \to \mathbb{R}^{n}$ be a surrogate loss with strongly convex, differentiable components. Let $\ell: \R \to \reals^{n}$. If $L$ indirectly elicits $\ell$, but does not strongly indirectly elicit $\ell$, there exists some report $u \in \mathbb{R}^{d}$, such that $\gamma(p_m) \subset \gamma(p)$, for some $p_m, p \in \Gamma_{u}$. 
    \label{lemma: maximal_dist_nsIE}
\end{lemma}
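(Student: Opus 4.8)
The plan is to reduce the statement to the combinatorics of target level sets restricted to a single surrogate level set, invoking the helper lemmas already proved for convex differentiable surrogates (Lemmas~\ref{lemma: IE_equivalence} and \ref{lemma: IE_equivalence_helper_2}). Notably, strong convexity is not needed for this lemma; convexity and differentiability of the components suffice (strong convexity matters only for the surrounding argument in Theorem~\ref{thm:SIE-necessary}).

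First, since $L$ indirectly elicits $\ell$, the forward direction of Lemma~\ref{lemma: IE_equivalence} gives that for every $u \in \mathbb{R}^d$ and every $p, p' \in \Gamma_u$ we have $\gamma(p) \cap \gamma(p') \neq \emptyset$. Second, since $L$ does not strongly indirectly elicit $\ell$, Definition~\ref{defn: main_strong_IE} supplies a report $u \in \mathbb{R}^d$ and distributions $p, q \in \Gamma_u$ with $\gamma(p) \neq \gamma(q)$; fix this $u$ (so $\Gamma_u \neq \emptyset$ by construction).

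Next, consider the map $p \mapsto |\gamma(p)|$ on $\Gamma_u$. Because $\gamma(p)$ is a nonempty subset of the finite set $\mathcal{R} = [k]$, this quantity takes values in $\{1, \dots, k\}$, so its minimum over $\Gamma_u$ is attained; choose $p_m \in \Gamma_u$ attaining it. The hypothesis of Lemma~\ref{lemma: IE_equivalence_helper_2} is precisely the conclusion of the previous paragraph, so it applies and yields $\gamma(p_m) \subseteq \gamma(p)$ for \emph{all} $p \in \Gamma_u$ — in particular for the witnesses $p, q$ of the failure of strong IE. Since $\gamma(p) \neq \gamma(q)$, at least one of these two (say, $\gamma(p)$, relabeling if necessary) satisfies $\gamma(p) \neq \gamma(p_m)$; combined with $\gamma(p_m) \subseteq \gamma(p)$ this gives the strict inclusion $\gamma(p_m) \subsetneq \gamma(p)$ with $p_m, p \in \Gamma_u$, as claimed.

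There is no real obstacle here: the substantive content resides in the already-established helper lemmas (which characterize IE via pairwise intersection of target cells on $\Gamma_u$, and identify the minimal-cardinality cell as contained in all others). The only point requiring a remark is that the minimum of $|\gamma(\cdot)|$ over $\Gamma_u$ is attained, which is immediate as it ranges over a finite set of integers, so no compactness or continuity argument is needed for this step.
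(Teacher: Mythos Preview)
Your proof is correct and follows essentially the same strategy as the paper's: pick $p_m \in \Gamma_u$ of minimal $|\gamma(\cdot)|$ and invoke Lemma~\ref{lemma: IE_equivalence_helper_2} to get $\gamma(p_m) \subseteq \gamma(p)$ for all $p \in \Gamma_u$. Your argument is slightly more streamlined: the paper first shows by contradiction (using the midpoint construction of Lemma~\ref{lemma: IE_equivalence_helper_1}) that cardinalities cannot all be equal, whereas you observe directly that $\gamma(p) \neq \gamma(q)$ forces at least one of them to differ from $\gamma(p_m)$, which immediately yields the strict inclusion without appealing to Lemma~\ref{lemma: IE_equivalence_helper_1}.
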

\begin{proof}
    We claim that $\exists p', q' \in \Gamma_{u}$, such that $|\gamma(p')| \neq |\gamma(q')|$. Assume to the contrary that for every $p, q \in \Gamma_{u}$,  $|\gamma(p)| = |\gamma(q)|$. So, we have that $\exists u \in \mathbb{R}^{d}$, such that $\gamma(p^{*}) \neq \gamma(q^{*})$, but $|\gamma(p^{*})| = |\gamma(q^{*})|$. Sine $L$ indirectly elicits $\ell$, we have by Lemma \ref{lemma: IE_equivalence} that $\gamma(p^*) \cap \gamma(q^*) \neq \emptyset$. So $\exists S \subseteq R: S \neq \emptyset$ and $S = \gamma(p^*) \cap \gamma(q^*)$, while $S \subset \gamma(p^*)$ and $S \subset \gamma(q^*)$. In particular, this means $|S| < |\gamma(p^{*}|$. Now, we know by Lemma \ref{lemma: IE_equivalence_helper_1} that $\frac{p^* + q^*}{2}$ is such that $\gamma(\frac{p^* + q^*}{2}) \subseteq S$. This means $|\gamma(\frac{p^* + q^*}{2})| \leq |S| < |\gamma(p^*)| \implies |\gamma(\frac{p^* + q^*}{2})| < \gamma(p)$ which contradicts our assumption since $\frac{p^* + q^*}{2} \in \Gamma_{u}$ by convexity of $\Gamma_{u}$. Thus, $\exists p', q' \in \Gamma_{u}$, such that $|\gamma(p')| \neq |\gamma(q')|$. 
    
    Let $p_m \in \Gamma_{u}: |\gamma(p_m)| \leq |\gamma(p)|, \forall p \in \Gamma_{u}$. We know by Lemma \ref{lemma: IE_equivalence_helper_2} that $\gamma(p_m) \subseteq \gamma(p), \forall p \in \Gamma_{u}$. In fact, $\exists p \in \Gamma_{u}: |\gamma(p)| \neq |\gamma(p_m)| \implies |\gamma(p_m)| < |\gamma(p)|$ and so $\gamma(p_m) \subset \gamma(p)$. 
\end{proof}

\begin{theorem}\label{sIEC_nec}
    Let $L: \mathbb{R}^{d} \to \mathbb{R}^{n}$ be a surrogate loss, with strongly convex, differentiable components. Let $\ell: \mathcal{R} \to \reals^{n}$. If $L$ does not strongly indirectly elicit $\ell$, then there is no link function $\psi: \mathbb{R}^{d} \to \mathcal{R}$, such that $(L, \psi)$ satisfies calibration with respect to $\ell$. 
\end{theorem}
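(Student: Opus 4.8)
The plan is to argue the claim by the natural two-case split on indirect elicitation. If $L$ does not indirectly elicit $\ell$, then by Theorem~\ref{thm: CALIMPLIESIE} (calibration implies indirect elicitation) no link $\psi$ can make $(L,\psi)$ calibrated, and we are done. So for the remainder I assume $L$ indirectly elicits $\ell$ but \emph{not} strongly, and I will exhibit a single distribution at which \emph{every} link violates the calibration inequality of Definition~\ref{definition: calibration}.

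First I would record two structural facts. Since each component $L(\cdot)_y$ is strongly convex and differentiable, Lemma~\ref{lemma: argmin_cts} gives that $\Gamma:\Delta_n\rightrightarrows\reals^d$ is single-valued and continuous; write $\Gamma(p)$ for the unique minimizer of $\langle p, L(\cdot)\rangle$. Since $L$ indirectly elicits but does not strongly indirectly elicit $\ell$, Lemma~\ref{lemma: maximal_dist_nsIE} supplies a report $u\in\reals^d$ and distributions $p_m, q\in\Gamma_u$ with $\gamma(p_m)\subsetneq\gamma(q)$; fix $r\in\gamma(q)\setminus\gamma(p_m)$. Note that $q\in\Gamma_u$ means $u\in\Gamma(q)$, so $\Gamma(q)=u$ by single-valuedness, and likewise $\Gamma(p_m)=u$, i.e.\ $\langle p_m, L(u)\rangle=\inf_{v\in\reals^d}\langle p_m, L(v)\rangle$. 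The calibration violation will occur at $p_m$.

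The engine of the proof is a sequence $q_t\to q$ with $\gamma(q_t)=\{r\}$ for every $t$. Such a sequence exists because, by Lemma~\ref{lemma: target_level_sets_polytope}, $\gamma_r$ is a convex polytope of affine dimension $n-1$; hence $\relint(\gamma_r)$ is nonempty, and on $\relint(\gamma_r)$—by the power-diagram structure of the target cells—the minimizing report is exactly $\{r\}$, while $q\in\gamma_r=\overline{\relint(\gamma_r)}$ since the relative interior of a convex set is dense in it. Picking $q_t\in\relint(\gamma_r)$ with $q_t\to q$, set $u_t:=\Gamma(q_t)$, which is well defined. Continuity of $\Gamma$ gives $u_t\to\Gamma(q)=u$, and continuity of $\langle p_m, L(\cdot)\rangle$ (Lemma~\ref{lemma: whole_cts}, or just continuity of $L$) gives $\langle p_m, L(u_t)\rangle\to\langle p_m, L(u)\rangle=\inf_{v}\langle p_m, L(v)\rangle$. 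Now fix any link $\psi$ with $(L,\psi)$ calibrated. By Theorem~\ref{thm: CALIMPLIESIE}, $\Gamma_{u_t}\subseteq\gamma_{\psi(u_t)}$; since $q_t\in\Gamma_{u_t}$ we get $\psi(u_t)\in\gamma(q_t)=\{r\}$, so $\psi(u_t)=r\notin\gamma(p_m)$. Therefore $\inf_{v:\,\psi(v)\notin\gamma(p_m)}\langle p_m, L(v)\rangle\le\lim_{t\to\infty}\langle p_m, L(u_t)\rangle=\inf_{v}\langle p_m, L(v)\rangle$, contradicting calibration at $p_m$. Since $\psi$ was arbitrary, no calibrated link exists.

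I expect the only genuinely delicate point to be the chain $q_t\to q\ \Rightarrow\ u_t\to u\ \Rightarrow\ \langle p_m, L(u_t)\rangle\to\inf_v\langle p_m, L(v)\rangle$: it rests entirely on the single-valuedness and continuity of $\Gamma$, which is precisely the content of Lemma~\ref{lemma: argmin_cts}, so once that lemma is in hand the rest is essentially bookkeeping. The one other thing to state carefully is that $q$ lies in the closure of $\{p':\gamma(p')=\{r\}\}$, which follows from the full-dimensionality of $\gamma_r$ (Lemma~\ref{lemma: target_level_sets_polytope}) together with the density of the relative interior of a convex set in the set itself, plus the fact that on $\relint(\gamma_r)$ the power-diagram cells are disjoint so the minimizing report is a singleton.
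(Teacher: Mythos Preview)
Your proof is correct and follows essentially the same approach as the paper's: split on IE, invoke Lemma~\ref{lemma: maximal_dist_nsIE} to obtain $p_m,q\in\Gamma_u$ with $\gamma(p_m)\subsetneq\gamma(q)$, approximate $q$ by a sequence in the relative interior of a single target cell, and use continuity of $\Gamma$ (Lemma~\ref{lemma: argmin_cts}) to push the minimizers to $u$ and violate calibration at $p_m$. Your writeup is slightly more explicit than the paper's in two places---you justify the existence of the sequence $q_t$ via the full-dimensionality of $\gamma_r$ and density of the relative interior, and you force $\psi(u_t)=r$ by appealing to Theorem~\ref{thm: CALIMPLIESIE} (calibration $\Rightarrow$ IE) rather than by a direct calibration violation at $q_t$---but these are expository refinements, not a different argument.
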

\begin{proof}
    First, suppose $L$ does not indirectly elicit $\ell$. Then straight away, calibration fails since calibration implies indirect elicitation by Theorem \ref{thm: CALIMPLIESIE}. Now, suppose $L$ indirectly elicits $\ell$, but does not strongly indirectly elicit $\ell$.
     We know from Lemma \ref{lemma: maximal_dist_nsIE}, that for some $u \in \mathbb{R}^{d}$,
    $\gamma(p_m) \subset \gamma(p)$, where $p_m, p \in \Gamma_{u}$. Assume WLOG that $\gamma(p_m) = \{1, 2, ..., t\}$ and that $\gamma(p) = \{1, 2, ..., t, ..., t+j\}$ where $j \geq 1$. In particular, $p$ lies on the boundary of the cell $\gamma_{t+1}$, which is a convex polytope. Thus, we can pick a sequence $\{p_{i}\}_{i \in \mathbb{N}_{+}}$, such that $\gamma(p_i) = \{t+1\}, \forall i \in \mathbb{N}_{+}$, and $\lim_{i \to \infty} p_i \to p$. Define $v_i := \Gamma(p_i)$, for every $i \in \mathbb{N}_{+}$. Then, we have by Lemma \ref{lemma: argmin_cts} that since each of the components of $L$ are differentiable and strongly convex, it holds that $\Gamma$ is continuous and hence $\lim_{i \to \infty} v_i = u$. To ensure calibration at $p_m$, it is necessary $\psi(u) \in \gamma(p_m)$, since $u = \Gamma(p_m)$ and $\gamma(p_m) \subseteq \gamma(p), \forall p \in \Gamma_{u}$. Also, to ensure calibration at $p_i$ it is necessary that $\psi(v_i) = t+1$, since $\gamma(v_i) = \{t+1\}$ for every $i \in \mathbb{N}_{+}$.  However, despite this, we show calibration fails: 
    \begin{align*}
        \inf_{v \in \mathbb{R}^{d}: \psi(v) \notin \gamma(p_m)}\langle p_m, L(v) \rangle &\leq \inf_{v\in \mathbb{R}^{d}: \psi(v) = t+1} \langle p_m, L(v) \rangle \\ 
        &\leq \inf_{v\in \mathbb{R}^{d}: v \in \{v_{i}\}_{i \in \mathbb{N}_{+}}} \langle p_m, L(v) \rangle \\
        &\leq \lim_{i \to \infty} \langle p_m, L(v_i) \rangle \\
        &= \langle p, L(u) \rangle = \inf_{v \in \mathbb{R}^{d}} \langle p, L(v) \rangle
    \end{align*}
Hence, we have shown that $\inf_{v \in \mathbb{R}^{d}: \psi(v) \notin \gamma(p_m)}\langle p_m, L(v) \rangle = \inf_{v \in \mathbb{R}^{d}} \langle p, L(v) \rangle$, thus violating calibration at $p_m$. 
\end{proof}

\end{document}